\documentclass{article}

\usepackage[preprint]{neurips_2026}

\usepackage[utf8]{inputenc} 
\usepackage[T1]{fontenc}    
\usepackage{hyperref}       
\usepackage{url}            
\usepackage{booktabs}       
\usepackage{amsfonts}       
\usepackage{nicefrac}       
\usepackage{microtype}      
\usepackage{xcolor}         
\usepackage{graphicx}
\usepackage{wrapfig} 

\makeatletter
\newenvironment{proof}[1][Proof]{%
  \par\noindent\textit{#1. }\ignorespaces
}{%
  \hfill$\square$\par
}
\makeatother
\usepackage{amssymb}
\usepackage{amsmath}
\newtheorem{theorem}{Theorem}[section]
\newtheorem{definition}[theorem]{Definition}
\newtheorem{proposition}[theorem]{Proposition}
\newtheorem{corollary}[theorem]{Corollary}

\newtheorem{lemma}[theorem]{Lemma}

\newcommand{\R}{\mathbb{R}}

\newcommand{\cP}{\mathcal{P}}

\newcommand{\KL}{\operatorname{KL}}

\title{On the Expressive Power of Contextual Relations in Transformers}

\author{%
  Demián Fraiman\\
  Instituto de Cálculo\\
  Universidad de Buenos Aires\\
  \texttt{fraimandemian@gmail.com} \\
}

\begin{document}

\maketitle

\begin{abstract}
Transformer architectures have achieved remarkable empirical success in modeling contextual
relations, yet a clear understanding of their expressive power is still lacking. In this work, we introduce a measure-theoretic framework in which
contextual relations are modeled as probabilistic objects, either as conditional distributions or
as joint distributions (couplings). This perspective reveals a natural connection between standard
softmax attention and entropy-regularized optimal transport, providing a unified view of attention as a normalization of an underlying affinity
function. Within this framework, we establish a
universal approximation theorem for contextual
systems using standard Softmax Attention and alternately Sinkhorn normalization. These results
show that Transformer architectures can approximate arbitrary contextual relations rules, and that
the choice of normalization determines how these
relations are represented. Moreover, they provide
a principled explanation for why Transformers are
effective at modeling contextual relations.

\end{abstract}

\section{Introduction}

Understanding language requires more than processing words in isolation: the role of each word depends on the context provided by the others. Attention mechanisms were introduced precisely to address this issue. Roughly speaking, attention allows a model to determine which parts of an input are most relevant for interpreting a given word or producing a given output. This idea is especially powerful because it enables the representation of rich, context-dependent interactions between elements of a sequence.

Transformer architectures place attention at the center of their design. Their empirical success in natural language processing and other domains involving structured data is largely due to their ability to model contextual relationships in a flexible and non-sequential manner, which constitutes a key advantage over earlier recurrent neural networks. Despite this success, an important theoretical question remains open: what kinds of contextual relationships can attention mechanisms actually represent? In particular, a precise mathematical characterization of their expressive power is still incomplete.

\begin{figure}[h]
    \centering
    \includegraphics[width=0.98\linewidth]{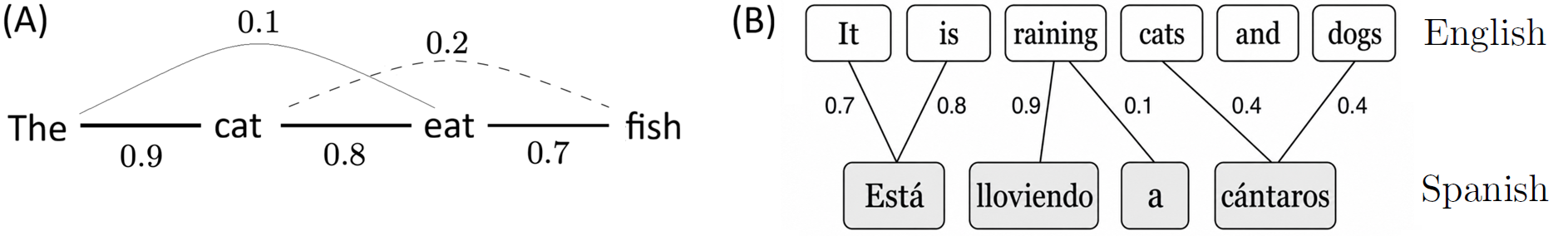}
    \caption{Contextual semantic graphs. (A) Single-text case: words in the same sequence are connected by weighted, possibly asymmetric relations. (B) Two-text case: words from different sequences are linked through cross-text relations. For visual clarity, edges with small weights are omitted.}
\label{fig:contextual_graphs}
\end{figure}

Figure~\ref{fig:contextual_graphs} illustrates the two basic settings that motivate our framework. In both cases, semantic relationships between words are represented as weighted, context-dependent interactions, giving rise to structured relational objects. These two settings naturally correspond to self-attention, where interactions occur within a single sequence, and cross-attention, where interactions link elements across different sequences, as in tasks such as translation and sequence-to-sequence generation. We refer to these structures as \emph{contextual semantic graphs}.

Motivated by these examples, we investigate whether Transformers can represent arbitrary systems of contextual semantic relations. Rather than focusing on individual attention weights, we view these graphs as instances of a more general object: a contextual system that assigns structured, context-dependent relations to each input text or pair of texts. We formalize such systems as mappings that associate to each input a contextual semantic graph, where nodes correspond to words and edges encode context-dependent interactions. This shift in viewpoint allows us to decouple the notion of attention from specific parametrizations and instead study its expressive power at the level of relational structures.

A key challenge in formalizing such systems is that they must operate on sequences of arbitrary length, while remaining invariant to their specific size and representation. This makes standard finite-dimensional formulations insufficient. To handle this problem, we introduce a measure-theoretic framework. In this setting, texts are modeled as probability measures over an embedding space, while contextual relations can be represented as a conditional or joint distributions between texts. Attention mechanisms then naturally appear as operators that map pairs of input measures to conditional distributions. This perspective shifts the focus from pointwise similarity scores between tokens to the approximation of structured probabilistic relations.

Our main result establishes a universal approximation property for contextual semantic relation systems. Informally, this means that attention-based architectures can approximate arbitrarily well any continuous rule that assigns to each input (or pair of inputs) a contextual relation. The key insight is that Transformer architectures preserve the full contextual information encoded in the inputs. However, the nature of the output space determines the appropriate final layer. When the goal is to approximate joint distributions (i.e., coupling measures), one must incorporate a Sinkhorn layer, which enforces the marginal constraints and produces a valid transport plan. In contrast, when the objective is to approximate conditional distributions, standard attention mechanisms based on softmax are sufficient, as they naturally define probability kernels.

More precisely, based on the assumption that the embedding spaces are compact, we establish universal approximation results for both coupling- and conditional-valued mappings. Transformer-style blocks with Sinkhorn normalization are dense under the Wasserstein topology in the space of continuous maps assigning a coupling to each pair of input probability measures. Similarly, standard Transformer architectures based on softmax attention are universal for conditional-valued mappings. In both cases, one may also consider finer notions of approximation, such as controlling the associated attention weights at the level of matrices, which correspond to pointwise notions of convergence. This provides a rigorous characterization of the expressive power of attention mechanisms, showing that they are universal approximators of contextual semantic relations at both the joint and conditional levels.

This paper is structured as follows: in Section 3 we introduce the problem statement. In Section 4, we develop the modeling framework used to address this problem. Section 5 presents the main theorems, while Section 6 gathers the theoretical tools, auxiliary results and the sketch of the main theorem proof. We note that these intermediate results may be of independent interest within optimal transport theory.
\section{Related Work}
\paragraph{Expressivity and universality of attention mechanisms.}
The expressive power of attention-based architectures has been studied from multiple perspectives. 
The seminal work of \cite{vaswani2017attention} introduced the Transformer architecture, establishing attention as a central building block for sequence modeling. 
Subsequent works have analyzed its theoretical capabilities: \cite{yun2019transformers} prove universal approximation results for Transformers as sequence-to-sequence models; \cite{dong2021attention} study the expressive limitations of attention mechanisms under architectural constraints; \cite{perez2021turing} show that Transformers can achieve Turing completeness under suitable conditions. The most closely related works are \cite{furuya2024transformers}, which refines universality results, and \cite{geshkovski2024measure}, which models Transformers as continuous flows capable of interpolating probability distributions. Although we share similar measure-theoretic tools, our objectives differ fundamentally: while \cite{geshkovski2024measure} relies on approximations in the continuous limit and \cite{furuya2024transformers} restricts its universality bounds to the codomain $\mathbb R^d$.

In contrast, our work studies attention at the level of the contextual relations giving key insights in this important property, modeling attention mechanisms as operators that assign joint probability measures to pairs of input distributions. 
As a consequence, classical approximation results in finite-dimensional settings, such as the Stone–Weierstrass theorem, cannot be directly applied, requiring the development of new analytical tools.

\paragraph{Optimal transport in attention and representation learning.}
Optimal transport and entropic regularization have been widely adopted in machine learning as tools for comparing distributions and inducing geometry-aware similarities
\cite{cuturi2013sinkhorn,genevay2016entropic,peyre2019computational}.
Several works incorporate Sinkhorn iterations into attention mechanisms to improve stability or to encode prior structure
\cite{frogner2015learning,mena2018learning}.
In these approaches, optimal transport is typically used as a computational primitive or loss function acting on learned representations.
\cite{leonard2012schrodinger,nutz2022stability,nutz2021quantitative} Study the analytic properties of the problem and \cite{litman2025scaled} formalizes softmax attention as the exact solution to a one-sided entropic optimal transport problem. \cite{sinkformers} propose replacing standard softmax attention with Sinkhorn normalization at every layer, enforcing (approximate) double stochasticity throughout the network. 
Similarly, \cite{sparse_sinkhorn} introduce Sparse Sinkhorn Attention, combining Sinkhorn iterations with sparsity constraints to obtain more efficient attention patterns.

Our approach differs in both scope and objective. 
Rather than modifying all attention layers, we introduce the Sinkhorn operator only at the final interaction stage, keeping standard attention mechanisms in intermediate layers. 
This allows us to retain the expressive structure of classical Transformers while introducing a principled probabilistic interpretation of the final attention map as a coupling between measures. Moreover, while prior works are primarily motivated by architectural design and computational efficiency or analytic stability, our focus is on expressivity and approximation theory. 
In particular, we study the ability of Transformer-like architectures to approximate structured relational operators (i.e., couplings between measures), providing a theoretical characterization of contextual relation learning.

\section{Problem Statement}

Our goal is to understand the expressive power of Transformer architectures in modeling contextual relations. To this end, we isolate the minimal architectural component capable of representing such relations, while remaining as close as possible to the standard Transformer design, which we call \emph{Contextual Transformer Block}. This reduction allows us to analyze the core mechanism in a tractable setting, without loss of generality: any more complex architecture that contains this basic structure inherits the same expressive capabilities.

Figure~\ref{fig:contextual-block}.A presents the proposed architecture in the two-text setting. The model takes as input two sequences of tokens (e.g., an input text and an output text), each of which is processed by a standard Transformer encoder (composition of transformer blocks). These encoders produce contextualized embeddings for each token, playing the role of query and key representations. The interaction between both sequences is computed through a similarity score between embeddings, given by a dot product, yielding an affinity matrix exactly like the $QK^\top$ term in standard attention. Finally, a last normalization procedure is applied to the affinity.
\begin{figure}
    \centering
    \includegraphics[width=0.85\linewidth]{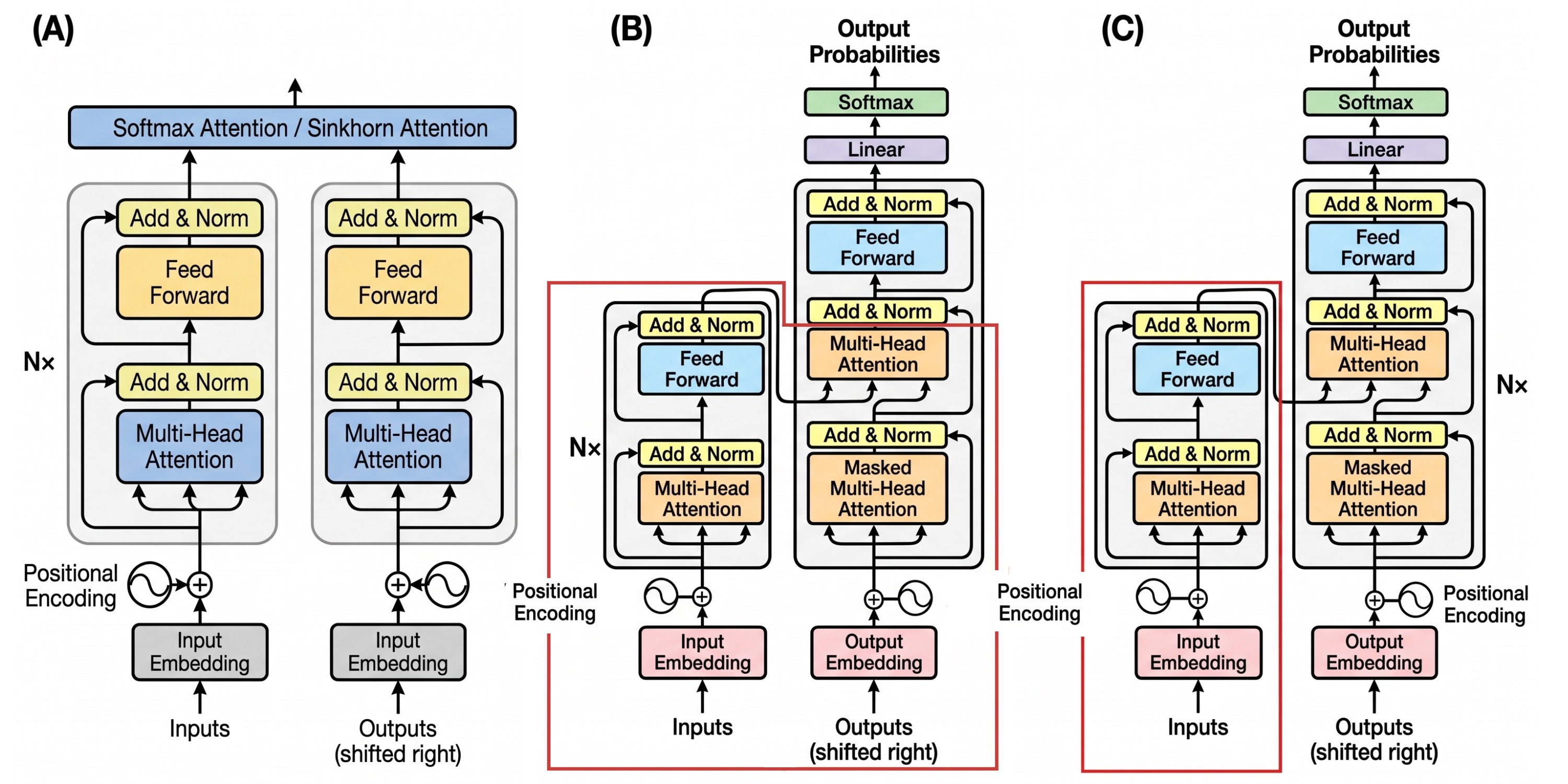}
    \caption{Contextual Transformer Block. (A) Contextual Transformer Block architecture in the two-text setting. (B) Contextual Transformer Block in the original Transformer architecture \cite{vaswani2017attention} two text setting. (C) Contextual Transformer Block in the original Transformer architecture \cite{vaswani2017attention} one text setting.}
    \label{fig:contextual-block}
\end{figure}

Figure~\ref{fig:contextual-block}.B shows how this construction naturally appears as a component within a standard Transformer. In particular, the contextual Transformer can be seen as the minimal block that computes interaction between input and output. The single-text setting (self-attention) is recovered when both inputs coincide, in which case the architecture reduces to a standard Transformer encoder as it is shown in Figure~\ref{fig:contextual-block}.C.

We aim to capture relational structures, and a natural way to model them is through probability distributions. Indeed, for a fixed element, one may interpret its contextual interactions as a distribution of relevance over the other elements, leading to a conditional description of the form “given x, how is attention distributed over y?”. Alternatively, one may model all pairwise relations simultaneously through a joint probability measure over pairs, encoding not only local dependencies but also global consistency constraints.

To make this idea precise, we introduce the notion of a \emph{contextual system}. Informally, a contextual system is a rule that, given two texts, returns how their elements relate to each other. In other words, it assigns to each pair of inputs a structured description of their contextual interactions, either in the form of conditional relations or as a joint distribution over pairs.

In our architecture, a key design choice lies in the final normalization of the affinity matrix, which determines the probabilistic structure of the output. In standard Transformers, a row-wise softmax is applied to the affinity, producing a conditional distribution over tokens. This corresponds to modeling contextual relations as conditional probabilities. In this work, we refer to this case as the \emph{Conditional Transformer Block}, which coincides exactly with the classical transformer block highlighted in \ref{fig:contextual-block}.B.

However, when modeling contextual relations as joint distributions, the output must satisfy both marginal constraints. Standard softmax normalization enforces only one marginal and is therefore insufficient in this setting. To address this, we replace the final softmax with a Sinkhorn normalization \cite{cuturi2013sinkhorn}, which enforces both marginals simultaneously and produces a joint distribution. This modification preserves the structure of standard Transformers in all intermediate layers, altering only the final interaction step. We refer to this architecture as the \emph{Coupling Transformer Block}, as it outputs a coupling between the two input distributions. Empirically, attention matrices in trained models are often observed to be close to doubly stochastic (e.g., \cite{sparse_sinkhorn}), suggesting that enforcing both marginals provides a natural inductive bias. Moreover, Sinkhorn normalization admits an efficient and differentiable implementation via iterative normalization, typically converging in a small number of steps.

The central question addressed in this work is the following:

\emph{Is the contextual transformer block capable of learning (approximating) any contextual system?}

In practice, texts have arbitrary length, and therefore cannot be naturally embedded into a single space $\mathbb R^d$ of fixed dimension. This makes the classical finite-dimensional setting inadequate for our purposes.
To overcome this limitation, we adopt a measure-theoretic perspective, representing texts as probability measures over the embedding space. This allows us to work in a unified framework that is independent of sequence length. Within this setting, we reformulate the notion of contextual relations, as well as the architecture itself.
\section{Measure-Theoretic Framework}
\label{sec:semantic_framework}
Let $X \subset \R^d$ be a compact embedding space.
Given a text consisting of tokens with embeddings  $(x_1,\dots,x_n) \in X$, we associate the
probability measure
\[
\mu = \frac{1}{n}\sum_{i=1}^n \, \delta_{x_i} \in \mathcal P (X),
\]Where $\mathcal P (X)$ is the space of probability measures over $X$. Note that the position of the tokens could be encoded as a positional embedding in the space of $X$. This representation embeds texts of arbitrary length into a common space and allows us to compare them using tools from optimal transport. In particular, we use the Wasserstein distance, which measures the minimal cost of transporting one distribution into another with respect to their embedding distance in $X$.
\begin{definition}[Wasserstein distance]\cite{villani2009optimal}
Let $p>1$ the Wasserstein distance of order $p$ is defined as
\[
W_p(\mu,\nu) = \Bigl( \inf_{\pi\in\Pi(\mu,\nu)} \mathbb{E}_{(X,Y)\sim\pi}\bigl[d(X,Y)^p\bigr] \Bigr)^{\!1/p},
\]
\end{definition}
where $\Pi(\mu,\nu)$ denotes the set of couplings (joint measures with marginals $\mu$ and $\nu$). Beyond this intuition, the Wasserstein distance has strong theoretical properties: it metrizes weak convergence together with convergence of moments of order $p$, making it particularly well-suited for comparing probability measures in our setting.

\subsection{Transformer Encoders}

Within this framework, we extend the Transformer encoder architecture to operate directly on probability measures. We build upon the construction of \cite{furuya2024transformers}, which provides a natural generalization of Transformer encoder layers in this setting.
\begin{definition}[Measure-valued multi-head attention]
Given a probability measure $\mu \in \mathcal{P}(X)$ and a point $x \in X \subset \mathbb R^d$, we define the attention map
$
\Gamma_\theta : \mathcal{P}(X) \times X \to \mathbb{R}^{d'}
$
by
\begin{equation*}\label{eq:measure-attention}
\Gamma_\theta(\mu,x)
=
x + \sum_{h=1}^H W^h
 {\int_X \frac{\exp\!\big( \frac{1}{\sqrt{k_h}}\langle W_Q^h x, W_K^h y\rangle \big)}
{\int_X \exp\!\big( \frac{1}{\sqrt{k_h}}\langle W_Q^h x, W_K^h z\rangle \big)\, d\mu(z)}\, W_V^h y \, d\mu(y),}
\end{equation*}
\end{definition}
where $\theta = (\theta^1,\dots,\theta^H)$, $\theta^h=(W_Q^h,W_K^h,W_V^h,W^h)$. This definition extends classical multi-head attention, recovering the standard formulation when $\mu$ is an empirical measure supported on finitely many points, since the integral with respect to a discrete measure turns into a sum, as it is shown in Appendix~A. While the attention map $\Gamma_\theta(\mu,x)$ is defined pointwise, in practice the inputs $x$ correspond to the tokens of the text represented by $\mu$. Thus, the architecture operates on the text as a whole.

To define deep architectures in this setting, one needs to formalize the composition of attention layers with pointwise nonlinearities. This can be done by introducing suitable notions of context maps and their composition.  For readability, we defer the full technical definition to Appendix~\ref{app:A}. At a high level, a Transformer encoder block is obtained by composing measure-valued attention operators with pointwise MLPs. This defines an in context-function $T:\mathcal P(X)\times X \to \mathbb R^d$.

\subsection{Contextual Relations}

For modeling contextual relations, the attention mechanism serves as our primary source of motivation. Since, it  assigns weights to pairs of tokens, measuring how strongly one token
attends to another in a row normalized manner.
Let $(x_1,\dots,x_n)\in X$ be a sequence of tokens embeddings. In a standard Transformer, attention coefficients are defined as
\begin{align*}
A(x_i,x_j) = \text{softmax}_j(W_Qx_i,W_Kx_j)\
 = \frac{\exp(\langle W_Qx_i, W_Kx_j\rangle)}{\sum_{j} \exp(\langle W_Qx_i,W_Kx_j\rangle)}.
\end{align*}
Where $W_Q,W_K \in \mathbb R^{k\times d}$ and the softmax function is computed coordinate-wise. Note that $A(x_i,\cdot)$ can be interpreted as a conditional probability distribution over tokens. This naturally leads to modeling contextual relations as conditional distributions.
\begin{definition}[Conditional measure]
Let $\mu \in \mathcal P(X)$ and $\nu \in \mathcal P(Y)$. A map $K\colon X \to \mathcal P(Y)$ is called a conditional distribution, and we denote by $\mathcal K(Y|X)$ the set of such conditional distributions.
\end{definition}
From a probabilistic perspective, conditional and joint representations are equivalent. Given a kernel $K(x,dy)$ and a source measure $\mu$, one can define a joint measure $\mu \otimes K \in \mathcal P(X \times Y)$ by
\(
\mu \otimes K(dx,dy) := K(dy|x)\mu(dx).
\)
Conversely, any joint measure $\pi$ with marginal $\mu$ admits such a disintegration \cite{villani2009optimal}. This equivalence allows us to view contextual relations either as conditional distributions or as joint probability measures, depending on the representation that is more convenient for the analysis. In the sequence-to-sequence setting, we represent two texts as probability measures $\mu \in \mathcal P(X)$ and $\nu \in \mathcal P(Y)$. A contextual relation between them can therefore be modeled either as a coupling $\pi \in \Pi(\mu,\nu)$ or as a conditional distribution $K \in \mathcal K(Y|X)$. It is worth noting that the single-sequence case is naturally recovered when the two input measures are equal.

Since we aim to model \emph{global contextual systems}, we introduce mappings that assign to each pair of input distributions a consistent representation of their interactions, which we then seek to approximate.

\begin{definition}[Coupling system]
A \emph{coupling system} is a mapping
$
F \colon \mathcal{P}(X) \times \mathcal{P}(Y) \to \mathcal{P}(X \times Y)
$
which is continuous with respect to the wasserstein distance and satisfies
$
F(\mu,\nu) \in \Pi(\mu,\nu) \text{ for all } (\mu,\nu).
$
\end{definition}
In order to ensure internal consistency of the construction, we introduce the following notion.
\begin{definition}[Conditional system]
A \emph{conditional system} is a mapping
$
F \colon \mathcal{P}(X) \times \mathcal{P}(Y) \to \mathcal{K}(Y|X)
$
such that the induced joint mapping
$
(\mu,\nu) \mapsto \mu \otimes F(\nu|\mu)
$
is a coupling system.
\end{definition}
\subsection{Normalization Layer}

We have formulated the encoders in this measure-theoretic setting; we now turn to the final normalization layer. Given two inputs $\mu$ and $\nu$, the two Transformer Encoders of the architecture, defined as in section 4.1, produce contextualized embeddings $Q(\mu,x), K(\nu,y) \in \mathbb R^d$. Those will serve as the queries and keys in the last layer, inducing an affinity function
\(
\alpha(x,y) = \langle Q(\mu,x),K(\nu,y)\rangle_{\mathbb R^d}
\) 

In order to approximate a contextual system, this affinity must be transformed proportionally into a probabilistic object, either a conditional distribution or a joint distribution. This is achieved through a normalization step, implemented via Softmax or Sinkhorn algorithms in the practical scenario. We now define their counterparts in the general measure-theoretic setting. Proofs of the propositions stated in this section are deferred to Appendix~\ref{app:B}.
\begin{definition}[Softmax conditional measure]
Let $\mu \in \mathcal{P}(X)$ and $\nu \in \mathcal{P}(Y)$, and let $\alpha : X \times Y \to \mathbb{R}$ be measurable such that $e^{\alpha(x,\cdot)} \in L^1(\nu)$ for $\mu$-a.e.\ $x$. The \emph{softmax conditional kernel} is defined by
\[
\mathrm{Softmax}_{\nu}(\alpha)(dy|x)
=
\frac{e^{\alpha(x,y)}}{\int_Y e^{\alpha(x,y')} d\nu(y')} \, \nu(dy).
\]
\end{definition}

Another equivalent formulation of softmax conditional kernel is based on \cite{litman2025scaled}:
\begin{proposition}[Variational characterization of softmax](\ref{var-char-softmax})
Let $\mu \in \mathcal{P}(X)$ and $\nu \in \mathcal{P}(Y)$. The joint measure induced by the softmax conditional associated with $\alpha$ is the unique minimizer of
\begin{equation*}
\min_{\pi \in \mathcal{P}(X \times Y)\,:\, \pi_X = \mu}
\int_{X \times Y} -\alpha(x,y)\, d\pi(x,y)
+
\mathrm{KL}\big(\pi \,\|\, \mu \otimes \nu\big)
.
\end{equation*}
\end{proposition}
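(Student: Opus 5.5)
The plan is to reduce the problem to the Gibbs variational principle (Donsker--Varadhan) applied fiberwise, using the disintegration of $\pi$ with respect to its first marginal. Write $\pi^\star := \mu \otimes \mathrm{Softmax}_\nu(\alpha)$, that is,
\[
d\pi^\star(x,y) = \frac{e^{\alpha(x,y)}}{Z(x)}\, d\mu(x)\, d\nu(y), \qquad Z(x) := \int_Y e^{\alpha(x,y')}\, d\nu(y').
\]
By the integrability hypothesis, $0<Z(x)<\infty$ for $\mu$-a.e.\ $x$. First, $\pi^\star$ is admissible: its first marginal is $\mu$, since $\int_Y e^{\alpha(x,y)}/Z(x)\, d\nu(y) = 1$. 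I will also assume enough integrability that $\int \alpha\, d\pi^\star$ and $\int \log Z\, d\mu$ are finite, for instance $\alpha$ bounded, which holds in the paper's setting where $\alpha$ is continuous on a compact set. Without such an assumption, the objective has to be interpreted in the extended-value sense.

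The key identity is the following. For any $\pi$ with $\pi_X=\mu$ and $\KL(\pi\|\mu\otimes\nu)<\infty$, we have $\pi \ll \mu\otimes\nu$. Moreover, $\mu\otimes\nu$ and $\pi^\star$ are mutually absolutely continuous, because the density $e^{\alpha}/Z$ is strictly positive. The chain rule for densities therefore gives
\[
\log\frac{d\pi}{d(\mu\otimes\nu)} = \log\frac{d\pi}{d\pi^\star} + \alpha(x,y) - \log Z(x).
\]
Integrating against $\pi$ and using $\pi_X=\mu$ to turn $\int \log Z(x)\, d\pi$ into $\int \log Z\, d\mu$, we get
\[
\int -\alpha\, d\pi + \KL(\pi\|\mu\otimes\nu) = \KL(\pi\|\pi^\star) - \int_X \log Z(x)\, d\mu(x).
\]
The last term is a constant independent of $\pi$. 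If $\KL(\pi\|\mu\otimes\nu)=\infty$ and $\int\alpha\,d\pi$ is finite (automatic when $\alpha$ is bounded), then the objective is $+\infty$, so such $\pi$ cannot be minimizers.

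From the identity, the objective is minimized exactly when $\KL(\pi\|\pi^\star)$ is minimized. The relative entropy is nonnegative and vanishes if and only if $\pi=\pi^\star$ (Gibbs' inequality, i.e.\ Jensen applied to the strictly convex $t\log t$). Hence $\pi^\star$ is the unique minimizer, with minimal value $-\int\log Z\,d\mu$.

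The main obstacle is justifying the splitting of the integral: one has to rule out $\infty-\infty$ when separating $\int\log\frac{d\pi}{d\pi^\star}\,d\pi$ from $\int(\alpha-\log Z)\,d\pi$. I will handle this using the boundedness of $\alpha$, which makes $\alpha-\log Z$ bounded. In the general measurable case, I would instead argue fiberwise: disintegrate $\pi=\mu\otimes K$, use $\KL(\pi\|\mu\otimes\nu)=\int\KL(K_x\|\nu)\,d\mu(x)$, and apply Donsker--Varadhan to each $x$-slice. Everything else is routine computation.
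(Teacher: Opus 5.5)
Your proof is correct, but it takes a genuinely different route from the paper's.

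\textbf{The paper's route.} It disintegrates $\pi=\mu\otimes K$ and uses the chain rule $\KL(\pi\|\mu\otimes\nu)=\int_X\KL(K_x\|\nu)\,d\mu(x)$ to decouple the problem in $x$. It then solves each fiber by a formal first-variation computation with a Lagrange multiplier, $\log f=\alpha(x,\cdot)-1-\lambda$, and invokes strict convexity of $\KL$ for uniqueness.

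\textbf{Your route.} You verify a guessed candidate through the exact identity $\int-\alpha\,d\pi+\KL(\pi\|\mu\otimes\nu)=\KL(\pi\|\pi^\star)-\int_X\log Z\,d\mu$. Gibbs' inequality then gives global optimality, uniqueness and the optimal value at once, with no disintegration and no calculus of variations.

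\textbf{What each buys.} Your route buys rigor. The Lagrange condition is only a formally derived necessary condition: it ignores the constraint $f\ge 0$ and possible zero sets of $f$. Turning a stationary point into the unique global minimizer also needs the convexity argument that the paper compresses into one sentence. Your identity is the one-marginal analogue of the computation $J(\gamma')=\epsilon\KL(\gamma'\|\gamma)$ in Lemma~\ref{lema_representation_entropic}, so it fits the paper's later machinery. The paper's route, in turn, derives the Gibbs form rather than presupposing it, which makes transparent why softmax appears.

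\textbf{Your integrability caveat.} This is a real requirement of the statement, not a weakness of your proof. Take $\alpha(x,y)=a(x)$ with $a\ge 0$ measurable and $\int_X a\,d\mu=+\infty$, and let $\nu$ be uniform on a two-point $Y$. Then every $\pi$ with $\pi_X=\mu$ has $\KL(\pi\|\mu\otimes\nu)\le \log 2$ and $\int-\alpha\,d\pi=-\infty$. So the objective is $-\infty$ for every admissible $\pi$, and uniqueness fails. The paper's computation tacitly assumes the same finiteness, and your fiberwise Donsker--Varadhan fallback would need it too. In the paper's setting, $\alpha=\langle Q,K\rangle$ is continuous on compact sets and hence bounded, so your proof covers every case the paper actually uses.
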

This formulation suggests a natural extension: if one wishes to enforce both marginals, it suffices to restrict the admissible set to $\Pi(\mu,\nu)$. This leads to the entropic optimal transport problem, which correspond to the generalization of the Sinkhorn algorithm.
\begin{equation*}
\min_{\pi \in \Pi(\mu, \nu)}
\int_{X \times Y} -\alpha(x, y) \, d\pi(x, y)
+
 \, \mathrm{KL}\big(\pi \,\|\, \mu \otimes \nu\big)
, \hspace{0.2cm}\mathrm{KL}(P \,\|\, Q) =
\begin{cases}
\displaystyle \int_X \log\!\left(\frac{dP}{dQ}\right) \, dP, & \text{if } P \ll Q, \\
+\infty, & \text{otherwise}.
\end{cases}
\end{equation*}
If the spaces $X,Y$ are polish (complete separable metric spaces) and $\alpha$ is continuous the problem admits a unique minimizer $\pi$ called the \emph{Sinkhorn plan} with density withnrespect to the product measure (\cite{leonard2012schrodinger}),
$
\frac{d\pi}{d(\mu \otimes \nu)}(x,y)
=
u(x)\, e^{\alpha(x,y)}\, v(y),
$
where $u,v > 0$ are measurable potentials. The mapping $S_{\alpha} : (\mu,\nu) \mapsto \pi$ will be called the \emph{Sinkhorn operator}. When convenient, we will also use the equivalent cost formulation $c = -\alpha$.

This expression highlights a key structural similarity with softmax: both involve exponentiation of a compatibility function. However, while softmax performs a normalization along $y$ for each fixed $x$ (yielding a conditional distribution), Sinkhorn performs a \emph{bi-normalization} that enforces both marginals simultaneously. We now make explicit the connection between both normalization procedures.
\begin{proposition}(\ref{sink-softma})
Let $\pi = S_\alpha(\mu,\nu)$ be the Sinkhorn plan associated with $\alpha$. Then the conditional measure:
\[
\pi(dy|x)
=
\mathrm{Softmax}_{\nu}\!\left(\log \frac{d\pi}{d(\mu \otimes \nu)}(x,\cdot)\right)(dy|x).
\]

\end{proposition}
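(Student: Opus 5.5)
The idea is to write down the conditional kernel of $\pi$ explicitly from its density and check that it coincides with the softmax formula. Set $f := \frac{d\pi}{d(\mu\otimes\nu)}$. By the representation quoted above from \cite{leonard2012schrodinger}, $f(x,y) = u(x)e^{\alpha(x,y)}v(y)$ with $u,v>0$ measurable, so $\log f(x,\cdot)$ is a well-defined measurable function and
\[
e^{\log f(x,y)} = f(x,y).
\]
Substituting into the definition of the softmax conditional kernel gives
\[
\mathrm{Softmax}_\nu(\log f(x,\cdot))(dy|x) = \frac{f(x,y)}{\int_Y f(x,y')\,d\nu(y')}\,\nu(dy).
\]
The integrability hypothesis $e^{\log f(x,\cdot)}\in L^1(\nu)$ for $\mu$-a.e.\ $x$ follows from Tonelli:
\[
\int_X\int_Y f\,d\nu\,d\mu = \pi(X\times Y) = 1 < \infty .
\]

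Next I identify the normalizing constant. Since $\pi\in\Pi(\mu,\nu)$, its first marginal is $\mu$. Fubini then gives, for every measurable $A\subset X$,
\[
\mu(A) = \pi(A\times Y) = \int_A\Bigl(\int_Y f(x,y')\,d\nu(y')\Bigr)d\mu(x).
\]
Hence $\int_Y f(x,y')\,d\nu(y') = 1$ for $\mu$-a.e.\ $x$. In particular the denominator is positive and finite almost everywhere, and the softmax kernel reduces to $f(x,y)\,\nu(dy)$.

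It remains to show that $K(dy|x) := f(x,y)\,\nu(dy)$ is a version of the disintegration $\pi(dy|x)$. For measurable rectangles $A\times B$, Fubini gives
\[
\int_A K(B|x)\,d\mu(x) = \int_{A\times B} f\,d(\mu\otimes\nu) = \pi(A\times B).
\]
Rectangles form a $\pi$-system generating the product $\sigma$-algebra, so $\mu\otimes K = \pi$. By the $\mu$-a.e.\ uniqueness of disintegrations on Polish spaces (cited earlier via \cite{villani2009optimal}), $K(\cdot|x) = \pi(\cdot|x)$ for $\mu$-a.e.\ $x$, which is the claimed identity.

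The only delicate step is the bookkeeping of null sets. The identity holds only $\mu$-almost everywhere, and the density $f$ is defined only up to $\mu\otimes\nu$-null sets, so the statement should be read as an equality of kernels up to $\mu$-null sets. Concretely, I fix a version of $f$ with $u,v>0$ everywhere so that the logarithm is finite. The positivity of the potentials is exactly what ensures $\log f$ never takes the value $-\infty$.
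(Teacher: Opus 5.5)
Your argument is correct, but it takes a different route from the paper.

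The paper inserts the Schr\"odinger factorization $f=u(x)e^{\alpha(x,y)}v(y)$ into the normalized-density formula $\pi(dy|x)=f(x,y)\,\nu(dy)/\int_Y f(x,y')\,\nu(dy')$. It cancels $u(x)$, which exhibits the conditional as $\mathrm{Softmax}_\nu(\alpha(x,\cdot)+\log v)$. It then uses invariance of softmax under adding $-\log u(x)$, a constant in $y$, to pass to $\log f(x,\cdot)$.

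You instead note that $e^{\log f}=f$. This shows the claim holds for every $\pi\in\Pi(\mu,\nu)$ with strictly positive density; you use the factorization only to guarantee that $\log f$ is real-valued. You also prove two things the paper takes for granted. First, the normalizer $\int_Y f(x,y')\,\nu(dy')$ equals $1$ for $\mu$-a.e.\ $x$, from the first-marginal constraint. Second, $f(x,y)\,\nu(dy)$ is in fact a version of the disintegration, via rectangles and a.e.-uniqueness. You also give the correct $\mu$-a.e.\ reading of the identity.

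What the paper's detour buys is a structural interpretation: the Sinkhorn conditional is ordinary softmax attention with the affinity shifted by the column potential $\log v(y)$, which is the conceptual link between the two normalizations. Your route buys generality and rigor, showing the identity is not specific to Sinkhorn plans.
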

\subsection{Contextual Transformer Block}

We now present the generalization of the complete architecture into the measure-theoretic setting, which coincides with the practical implementation in the case of discrete uniform measures. The architecture consists of two Transformer-based encoders $Q$ and $K$, defined as in section 4.1, which map input measures into query and key representations of the tokens for the last layer $
Q(\mu, x), K(\nu, y) \in \mathbb{R}^d.
$ From these embeddings, we define the affinity function:
\[
\alpha_{(\mu,\nu)}(x,y) = \langle Q(\mu, x), K(\nu, y) \rangle_{\mathbb{R}^d},
\]
like $QK^\top$ interaction in standard attention. The only distinction lies in the final normalization: Where Softmax Attention yields a conditional distribution, whereas Sinkhorn Attention produces a joint distribution.{\renewcommand{\arraystretch}{0.95} \[
\begin{array}{cc}
\text{\em Conditional Transformer}
& 
\text{\em Coupling Transformer} \\[4pt]

\mathcal T:\mathcal P(X)\times\mathcal P(Y)\to\mathcal K(Y|X)
&
\mathcal T:\mathcal P(X)\times\mathcal P(Y)\to\mathcal P(X\times Y) \\[6pt]

\mathcal T(\nu|\mu)(dy|x)=\mathrm{Softmax}_{\nu}\!\big(\alpha_{(\mu,\nu)}(x,\cdot)\big)(dy|x)
&
\mathcal T(\mu,\nu)=S_{\alpha(\mu,\nu)}(\mu,\nu)

\end{array}
\]
}

In both cases, the inputs are solely the two texts, represented as probability measures $\mu$ and $\nu$. The variables $x$ and $y$ appearing in the definitions correspond to tokens drawn from these measures and do not represent additional inputs. Rather, the affinity function is evaluated over all pairs of tokens from the two texts, as in standard attention mechanisms. 
\section{Main Theorems}
Now we state the main results of the paper. Full proofs are deferred to Appendix~\ref{app:E}. Throughout this section, we make the following standing assumption: $X \subset \mathbb{R}^d$ and $Y \subset \mathbb{R}^{d'}$ are compact sets.
\begin{theorem}[Universal Approximation of Coupling Systems] (\ref{teo_principal})
\label{thm:main}
For any $F$ coupling system, $p > 0$ and any $\varepsilon > 0$, there exists a Coupling Transformer block
$\mathcal{T}^*$ such that
\[
\sup_{(\mu,\nu)\in P(X)\times P(Y)}
W_p\bigl(\mathcal{T}^*(\mu,\nu), F(\mu,\nu)\bigr) < \varepsilon.
\]
\end{theorem}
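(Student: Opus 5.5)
We reduce the problem to approximating the affinity function, using the fact that the Sinkhorn operator is continuous in both the cost and the marginals. The steps are as follows.

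\textbf{Step 1 (represent $F$ by an affinity).} Fix $F$ and $\varepsilon>0$. Since $\mathcal P(X)\times\mathcal P(Y)$ is compact in the $W_p$ topology, $F$ is uniformly continuous. For $\eta>0$ consider the entropic smoothing of $F(\mu,\nu)$ at scale $\eta$. Concretely, take the Gibbs-type plan
\[
\pi^\eta_{\mu,\nu}=S_{\alpha^\eta_{\mu,\nu}}(\mu,\nu),\qquad \alpha^\eta_{\mu,\nu}=\log\bigl(\rho_\eta * F(\mu,\nu)\bigr),
\]
where $\rho_\eta$ is a strictly positive mollifier on $\R^{d+d'}$. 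The function $\alpha^\eta_{\mu,\nu}$ is continuous in $(x,y)$ and jointly continuous in $(\mu,\nu,x,y)$. We need a stability lemma: the Sinkhorn plan with cost $-\log(\rho_\eta*\pi)$ and marginals $(\mu,\nu)$ converges to $\pi$ as $\eta\to0$, uniformly over $\pi\in\Pi(\mu,\nu)$ and over $(\mu,\nu)$. To prove it, we compare entropic costs. The plan $\pi$ itself is nearly optimal for the cost $-\log(\rho_\eta*\pi)$ with the $\KL$ penalty, because $\KL(\pi_\eta\|\mu\otimes\nu)-\int\log(\rho_\eta*\pi)\,d\pi_\eta$ is small exactly when $\pi_\eta$ is close to $\pi$. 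An alternative route is to use block approximations of $\pi$ on a finite partition. Uniformity then follows from compactness and from the continuity of $(\mu,\nu)\mapsto F(\mu,\nu)$. This gives an $\eta$ such that $\sup W_p(S_{\alpha^\eta}(\mu,\nu),F(\mu,\nu))<\varepsilon/2$.

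\textbf{Step 2 (Sinkhorn stability).} We show that the map $(\alpha,\mu,\nu)\mapsto S_\alpha(\mu,\nu)$ is continuous, and in fact Lipschitz in $\|\alpha-\alpha'\|_\infty$ uniformly in the marginals, into $W_p$. Results of the type in \cite{nutz2022stability} give this. One can also argue directly: the Schr\"odinger potentials $u,v$ satisfy bounds $\log u,\log v=O(\|\alpha\|_\infty)$, and the fixed-point map is a contraction in the Hilbert metric. Hence there is a $\delta>0$ such that $\sup_{\mu,\nu,x,y}|\alpha(x,y)-\alpha^\eta_{\mu,\nu}(x,y)|<\delta$ implies $W_p(S_\alpha(\mu,\nu),S_{\alpha^\eta}(\mu,\nu))<\varepsilon/2$.

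\textbf{Step 3 (approximating the affinity by $\langle Q,K\rangle$).} Now $G(\mu,\nu,x,y)=\alpha^\eta_{\mu,\nu}(x,y)$ is a continuous function on a compact set, and we must approximate it uniformly by $\langle Q(\mu,x),K(\nu,y)\rangle$. First, by Stone–Weierstrass on the product, approximate $G$ by a finite sum $\sum_{k=1}^m a_k(\mu,x)b_k(\nu,y)$ with $a_k$ and $b_k$ continuous; products of separated functions form a point-separating subalgebra. Next, apply the universality of measure-theoretic Transformer encoders for in-context maps $\mathcal P(X)\times X\to\R^m$, namely \cite{furuya2024transformers}, to obtain encoders $Q\approx(a_1,\dots,a_m)$ and $K\approx(b_1,\dots,b_m)$ uniformly. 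The bilinear pairing is uniformly continuous on bounded sets, so the combined error is below $\delta$.

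Steps 2 and 3 together give the theorem. We expect the main obstacle to be Step 1, the uniform entropic approximation of an arbitrary continuous coupling field by Sinkhorn plans of continuous affinities. The difficulty is that $F(\mu,\nu)$ may be singular with respect to $\mu\otimes\nu$, for instance a deterministic matching. The first thing we would try is the block or mollification argument together with a compactness argument for uniformity. If that fails, we would instead use an $\varepsilon$-net in $\mathcal P(X)\times\mathcal P(Y)$ and a partition of unity to glue locally chosen affinities, which relies on the Lipschitz stability from Step 2.
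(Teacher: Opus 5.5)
Your Steps 2 and 3 match the paper's Steps 2--4: cost-stability of $c\mapsto S_c(\mu,\nu)$ in $\|\cdot\|_\infty$ uniformly in the marginals, Stone--Weierstrass for sums $\sum_k G_k(\mu,x)H_k(\nu,y)$, then \cite{furuya2024transformers} and the bilinear error bound. So everything rests on Step 1, and there the key lemma is asserted, not proved. The choice $\alpha^\eta_{\mu,\nu}=\log(\rho_\eta*F(\mu,\nu))$ does give joint continuity in $(\mu,\nu,x,y)$ for free. But the claim $\sup_{(\mu,\nu)}W_p(S_{\alpha^\eta_{\mu,\nu}}(\mu,\nu),F(\mu,\nu))\to0$ does not follow from your sketch, for three reasons.

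(i) The energy comparison fails in exactly the case you single out. If $\pi=F(\mu,\nu)$ is singular with respect to $\mu\otimes\nu$, then $\KL(\pi\|\mu\otimes\nu)=+\infty$, so $\pi$ is not even an admissible competitor. When it is admissible, e.g.\ for discrete $\mu,\nu$, the functional $\KL(\gamma\|\mu\otimes\nu)-\int\log(\rho_\eta*\pi)\,d\gamma$ tends to $-\infty$ at $\gamma=\pi$ as $\eta\to0$. So ``small'' can only mean ``close to the minimum'', and you give no lower bound on the minimum showing that plans far from $\pi$ are penalized.

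(ii) The mollification is taken against Lebesgue measure, while the Sinkhorn density is taken against $\mu\otimes\nu$. The reference measure $(\rho_\eta*\pi)\cdot(\mu\otimes\nu)$ is not a product rescaling $a(x)b(y)\,\pi$ of $\pi$, which Sinkhorn would undo exactly. At best, for a product mollifier, it is one up to a further smoothing of $\pi$ at scale $\eta$. How the $\KL$-projection onto $\Pi(\mu,\nu)$ reacts to such a perturbation of a singular reference measure is precisely what has to be shown.

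(iii) The costs $-\alpha^\eta$ are not equibounded. For a Gaussian $\rho_\eta$ they tend to $+\infty$ like $\eta^{-2}$ away from $\operatorname{supp}\pi$, and to $-\infty$ like $\log(1/\eta)$ at atoms of $\pi$. So neither Lemma~\ref{prop_gamma_convergence} nor your Step 2 applies along the family. ``Uniformity follows from compactness'' is also unjustified: pointwise convergence on the compact set $\mathcal P(X)\times\mathcal P(Y)$ gives uniform convergence only with equicontinuity in $\eta$, or with convergence along arbitrary sequences $(\mu_n,\nu_n,\eta_n)\to(\mu,\nu,0)$. You prove neither.

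Your fallback does not close the gap. Gluing locally chosen affinities by a partition of unity breaks down because $\alpha\mapsto S_\alpha(\mu,\nu)$ is nonlinear: $S_{\sum_i\phi_i\alpha_i}(\mu,\nu)$ is not a mixture of the plans $S_{\alpha_i}(\mu,\nu)$. Step 2 controls it only if all active $\alpha_i$ are uniformly close to one another on overlaps, and nothing guarantees that. Two plans both close to $F(\mu,\nu)$ can have log-densities far apart in sup norm, even modulo $a(x)+b(y)$.

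The paper glues at the level of densities instead, where the marginal constraints are linear. Let $\Gamma(\mu,\nu)$ be the set of continuous densities $f>\eta_0$ (a uniform lower bound) with $f\cdot(\mu\otimes\nu)\in\Pi(\mu,\nu)$ within $\varepsilon/2$ of $F(\mu,\nu)$. This set is nonempty by Lemmas~\ref{lema_aprox_abs_cont} and~\ref{lema_regularizacion}, and it is convex. It is lower hemicontinuous because a fixed density can be moved to nearby marginals by Schr\"odinger potentials with $u_nv_n\to1$ uniformly. The approximate selection theorem of \cite{deutsch1983continuous} then gives a continuous $s$ uniformly close to some $t\in\Gamma(\mu,\nu)$. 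The exact identity $S_{-\log t}(\mu,\nu)=t\cdot(\mu\otimes\nu)$ (Lemma~\ref{lema_representation_entropic}), the bound $\eta_0$ making $-\log$ Lipschitz, and cost-stability of Sinkhorn finish the argument with no $\eta\to0$ asymptotics at all.

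To make your route work, you would need to prove, as a genuine lemma, that $\sup_{(\mu,\nu)}\sup_{\pi\in\Pi(\mu,\nu)}W_p\bigl(S_{\log(\rho_\eta*\pi)}(\mu,\nu),\pi\bigr)\to0$, or at least the corresponding statement along $F$. As written, that is the missing step.
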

We defer the proof sketch to the end of the next section, where the necessary preliminary results are established. A limitation of our proof is that the embedding dimension in the last layer depends on $\varepsilon$. Our result should be interpreted as showing that a single Coupling Transformer can represent all contextual relations simultaneously as it is approximating the whole coupling system. As in the next theorem, we could then bound the norms in the coupling matrices for the discrete setting, but we decided to state it in it's most general form. By using the relation between Softmax and Sinkhorn methods exposed in Proposition 4.8, we get the following result:

\begin{corollary}[Universal Approximation of Conditional Systems](\ref{uni-cond})
    For any $F$  conditional system, $p>0$ and any $\varepsilon > 0$, there exists a Conditional Transformer Block transformer $\mathcal T^*$ such that 
\[
\sup_{(\mu,\nu) \in \mathcal P(X) \times \mathcal P(Y)} W_p\bigl(\mu\otimes \mathcal{T}^*(\nu|\mu), \mu \otimes F(\nu|\mu)\bigr) < \varepsilon.
\]
\end{corollary}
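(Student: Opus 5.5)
The plan is to deduce the corollary from Theorem~\ref{thm:main} through the Sinkhorn--softmax correspondence of Proposition~4.8. Given a conditional system $F$, the induced map $G(\mu,\nu):=\mu\otimes F(\nu|\mu)$ is, by definition, a coupling system. Theorem~\ref{thm:main} therefore gives a Coupling Transformer block $\mathcal T^*_{\mathrm{cpl}}$, with encoders $Q,K$ and affinity $\alpha_{(\mu,\nu)}(x,y)=\langle Q(\mu,x),K(\nu,y)\rangle$, such that
\[
\sup_{(\mu,\nu)}W_p\bigl(S_{\alpha_{(\mu,\nu)}}(\mu,\nu),\,\mu\otimes F(\nu|\mu)\bigr)<\varepsilon .
\]

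Next I rewrite the Sinkhorn plan as $\mu\otimes(\text{softmax kernel})$. Its density is $u(x)e^{\alpha(x,y)}v(y)$, so by Proposition~4.8 its disintegration is $\pi(dy|x)=\mathrm{Softmax}_\nu\bigl(\alpha(x,\cdot)+\log v(\cdot)+\log u(x)\bigr)(dy|x)$. The factor $u(x)$ is constant in $y$ and cancels under the softmax, so
\[
\pi(dy|x)=\mathrm{Softmax}_\nu\bigl(\alpha(x,\cdot)+\log v_{(\mu,\nu)}(\cdot)\bigr)(dy|x).
\]
The marginal of $\pi$ is $\mu$, hence $\pi=\mu\otimes\pi(\cdot|x)$. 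It therefore suffices to produce a Conditional Transformer whose affinity equals, or uniformly approximates, $\tilde\alpha_{(\mu,\nu)}(x,y)=\alpha_{(\mu,\nu)}(x,y)+\log v_{(\mu,\nu)}(y)$.

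To realize the correction term, I append one coordinate to each encoder: $\tilde Q(\mu,x)=(Q(\mu,x),1)$ and $\tilde K(\nu,y)=(K(\nu,y),\log v_{(\mu,\nu)}(y))$. Two issues arise.
\begin{itemize}
\item The potential $v$ depends on both $\mu$ and $\nu$, whereas $K$ only sees $\nu$.
\item The potential is not exactly a transformer output.
\end{itemize}
For the first issue, I will use the fact that the Sinkhorn potentials are jointly continuous in $(\mu,\nu,y)$ on the compact set $\mathcal P(X)\times\mathcal P(Y)\times Y$ for continuous bounded $\alpha$ (stability results of \cite{nutz2022stability}). I will then invoke the auxiliary results of Section~6 used in proving Theorem~\ref{thm:main}, which let the final-layer features approximate continuous in-context functions of both measures, e.g.\ by adding a rank-one term $\langle a(\mu,x),b(\nu,y)\rangle$ obtained by a separable (Stone--Weierstrass type) approximation of $\log v_{(\mu,\nu)}(y)$. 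Alternatively, and more simply, I can go back to the construction in the proof of Theorem~\ref{thm:main} and approximate directly the log-density of the target, which already has a separable form.

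Finally, I need a stability estimate. If $\|\tilde\alpha-\hat\alpha\|_\infty\le\delta$ uniformly, then the softmax kernels satisfy $\|\mathrm{Softmax}_\nu(\tilde\alpha)(\cdot|x)-\mathrm{Softmax}_\nu(\hat\alpha)(\cdot|x)\|_{TV}\le e^{2\delta}-1$, since the density ratio lies in $[e^{-2\delta},e^{2\delta}]$. Integrating in $\mu$ gives the same TV bound for the joint measures. On the compact space $X\times Y$ with diameter $D$, $W_p\le D\,\mathrm{TV}^{1/p}$. Combining this with the first step through the triangle inequality gives a total error below $2\varepsilon$.

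The main obstacle is the uniform approximation of the mixed-dependence potential $\log v_{(\mu,\nu)}(y)$ by the separable dot-product structure. This needs uniform continuity and boundedness of the potentials over all $(\mu,\nu)$, which holds because $\alpha$ is uniformly bounded on the compact domain, and a way to pass $\mu$-information into the key side. I would first try to reuse exactly the density-approximation lemma from the proof of Theorem~\ref{thm:main}, applied to the target log-density, rather than approximating $v$ separately.
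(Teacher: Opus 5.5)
Your proposal is correct and follows essentially the paper's route. You pass to the coupling system $\mu\otimes F(\nu|\mu)$ and invoke Theorem~\ref{thm:main}. You then use the Sinkhorn--softmax correspondence of Proposition~\ref{sink-softma} to write the plan's conditional as a softmax, and approximate the resulting continuous function of $(\mu,\nu,x,y)$ by a dot product of encoders via Stone--Weierstrass and \cite{furuya2024transformers}. You close with softmax stability, a TV-to-Wasserstein bound and the triangle inequality.

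The paper in effect takes your alternative: it approximates the full log-density $\log\frac{d\tilde{\mathcal T}(\mu,\nu)}{d(\mu\otimes\nu)}$ directly rather than $\log v$ separately. This differs from $\alpha+\log v$ only by $\log u(x)$, avoids fixing a normalization of $v$, and handles the key-side $\mu$-dependence in one Stone--Weierstrass step. Your explicit estimates $\mathrm{TV}\le e^{2\delta}-1$ and $W_p\le D\,\mathrm{TV}^{1/p}$ are more careful than the paper's linear bound $W_p\le C_W\|\cdot\|_\infty$, which fails for $p>1$, though the qualitative conclusion is unaffected.
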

This immediately implies the following practical corollary. We identify each text with its corresponding empirical measure

\begin{corollary}[Pointwise Approximation of Conditional Matrices](\ref{coro-cond})
For any $F$ be a conditional system, there exists a sequence of Conditional Transformer Blocks $(\mathcal T_n)_{n\geq 1}$ such that for every pair of texts $t_1,t_2$ and every $\varepsilon > 0$, for sufficiently large $n$,
\[
\max_{i,j} \left| F_{i,j}(t_2 \mid t_1) - \mathcal T_{n_{,i,j}}(t_2 \mid t_1) \right| < \varepsilon,
\]
where $F_{i,j}(t_2 \mid t_1)$ and $\mathcal T_{n_{i,j}}(t_2 \mid t_1)$ denote the entries of the corresponding conditional matrices.
\end{corollary}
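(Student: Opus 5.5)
The plan is to derive this directly from Corollary~\ref{uni-cond} (Universal Approximation of Conditional Systems). The key observation is that, for a fixed pair of texts, the measures involved live on a fixed finite set. On such a set, Wasserstein closeness forces closeness of the individual atom masses.

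\textbf{Step 1 (building the sequence).} For each $n\geq 1$, apply Corollary~\ref{uni-cond} with $\varepsilon=1/n$ and some fixed $p\geq 1$ (say $p=2$; if only $p<1$ were available one would work with $W_p^p$ instead). This gives Conditional Transformer Blocks $\mathcal T_n$ with
\[
\sup_{(\mu,\nu)} W_p\bigl(\mu\otimes\mathcal T_n(\nu|\mu),\,\mu\otimes F(\nu|\mu)\bigr)<1/n .
\]
The sequence is chosen once and for all, independently of the texts. This matches the order of quantifiers in the statement, where only the threshold for $n$ may depend on $t_1,t_2$ and $\varepsilon$.

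\textbf{Step 2 (common finite support).} Fix texts $t_1=(x_1,\dots,x_k)$ and $t_2=(y_1,\dots,y_m)$, with empirical measures $\mu,\nu$. Let $\pi_n:=\mu\otimes\mathcal T_n(\nu|\mu)$ and $\pi:=\mu\otimes F(\nu|\mu)$.
\begin{itemize}
\item By definition of a conditional system, $\pi\in\Pi(\mu,\nu)$, so $\pi$ is supported on the finite set $A:=\operatorname{supp}\mu\times\operatorname{supp}\nu$.
\item The softmax kernel is absolutely continuous with respect to $\nu$, so $\pi_n$ is also supported on $A$.
\item Let $\delta>0$ be the minimal distance between distinct points of $A$.
\end{itemize}
Any coupling $\gamma$ of $\pi_n$ and $\pi$ must move at least $\tfrac12\|\pi_n-\pi\|_{TV}$ of mass across a distance of at least $\delta$. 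Hence
\[
\tfrac12\|\pi_n-\pi\|_{TV}\,\delta^p\;\le\; W_p^p(\pi_n,\pi)\;<\;n^{-p},
\]
and so every atom mass satisfies $|\pi_n(\{(a,b)\})-\pi(\{(a,b)\})|\le 2\,n^{-p}\delta^{-p}$.

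\textbf{Step 3 (from atom masses to matrix entries).} The conditional-matrix entry for a token pair $(x_i,y_j)$ is recovered as $\pi(\{(x_i,y_j)\})/\mu(\{x_i\})$, divided by the multiplicity $m_j$ of $y_j$ in $t_2$.

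The justification for the multiplicity factor comes from the discrete softmax: duplicated keys receive identical weights, so the kernel mass at an atom is split equally among its copies. I will take this as the convention identifying the matrix $F_{i,j}$ with the kernel $F(\nu|\mu)$, and check it is consistent with Appendix~A.

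Since $\mu(\{x_i\})\geq 1/k$ and $m_j\geq 1$, this gives
\[
\max_{i,j}\bigl|F_{i,j}(t_2|t_1)-\mathcal T_{n,i,j}(t_2|t_1)\bigr|\le 2k\,n^{-p}\delta^{-p}.
\]
The right-hand side is below $\varepsilon$ once $n$ is large, with a threshold depending only on $k$, $\delta$ (hence on the texts) and $\varepsilon$.

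\textbf{Main obstacle.} I expect the delicate step to be the bookkeeping in Step 3, not the analytic estimate in Step 2. One must make precise how matrix entries correspond to the kernel when tokens are repeated, or when $x_i$ coincides with some $y_j$ in the single-text case. One must also confirm that both $F$ and $\mathcal T_n$ assign entries by the same convention, so that the identification "entry $=$ atom mass $/$ (marginal mass $\times$ multiplicity)" holds for both.

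If duplicates cause trouble, I would first try the following: perturb nothing, and simply observe that both matrices are constant across duplicate indices. The maximum over $(i,j)$ then reduces to a maximum over distinct atoms, where Step 2 applies directly.
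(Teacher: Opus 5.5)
Your argument is correct and follows the same route as the paper: you invoke Corollary~\ref{uni-cond}, use that on the finite set $\operatorname{supp}\mu\times\operatorname{supp}\nu$ closeness in $W_p$ forces closeness of the atom masses of the joint measures, and then divide by the marginal $\mu(\{x_i\})\ge 1/k$. Where you depart from it, you tighten it: the paper's regularization $\tilde F=(1-\rho)F+\rho\nu$ plays no real role, since the division is by $\mu_i$ and not by $\tilde F_{i,j}$; your bound $\tfrac12\|\pi_n-\pi\|_{TV}\,\delta^p\le W_p^p(\pi_n,\pi)$ makes explicit what the paper only calls topological equivalence; and by building $\mathcal T_n$ from the accuracy $1/n$ alone you get a sequence independent of $(t_1,t_2)$, as the statement requires, whereas the paper picks its accuracy $\delta$ depending on the texts before constructing $\mathcal T^*$.
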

We note that, since all norms are equivalent in finite-dimensional vector spaces, the result holds for any choice of matrix norm.

Taken together, these results show that Transformer architectures preserve the full contextual information of the system at the level of representations. The distinction between modeling joint or conditional relations is entirely determined by the final normalization layer. In particular, by modifying only this final step, the same architecture can approximate either coupling systems or conditional systems. Also we can conclude that the encoder component alone is sufficiently expressive to capture arbitrary contextual relations within a single input text. When combined with the decoder and a cross-attention mechanism, the resulting architecture can model general contextual relations between the input and output.
\section{Auxiliary Results}
We develop the theory in a general measure-theoretic setting, which provides a unified framework.  Our proof strategy is to work primarily in the coupling setting, where the analysis is more tractable. The corresponding results for the conditional setting then follow from the equivalence between joint and conditional representations. Throughout this section, we make the following weaker assumption that the embedding spaces $X$ and $Y$ are compact metric spaces.
\subsection{The Space of Coupling systems}
\label{sec:space_couplings}

We now study the functional-analytic structure of the set of coupling systems. The proofs of the propositions can be found in Appendix~\ref{app:C}. We consider the space
\(
\mathcal{C} = C\big(\mathcal{P}(X) \times \mathcal{P}(Y), \mathcal{M}(X \times Y)\big),
\)
where $\mathcal{M}(X \times Y)$ denotes the space of signed Radon measures endowed with the weak* topology. We equip $\mathcal{C}$ with the topology of uniform convergence on $\mathcal{P}(X)\times \mathcal{P}(Y)$. We aim to prove that the space of coupling systems denoted by $\mathcal A_P$ is closed. 

Define the reference product map \(F_0(\mu, \nu) = \mu \otimes \nu\) and the sets $\mathcal F,\mathcal C_P$ as:
\[
\mathcal{F}=\{G\in\mathcal{C}:\operatorname{proj}_X G(\mu,\nu)=\operatorname{proj}_Y G(\mu,\nu)=0,\ \forall(\mu,\nu)\},  
\]
\[
\mathcal{C}_P=\{F\in\mathcal{C}: F(\mu,\nu)\in\mathcal{P}(X\times Y),\ \forall(\mu,\nu)\}
\]
Then we have $\mathcal{A}_P=(F_0 + \mathcal F)\cap\mathcal{C}_P$. So, by proving that $\mathcal F$ is a closed subspace (\ref{prop_afin}) and $\mathcal C_P$ is a closed affine set (\ref{prop_convexo}) we get:
\begin{corollary}
The space of coupling systems \(\mathcal{A_P}\) is convex and closed.
\end{corollary}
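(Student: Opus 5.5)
The proof uses the decomposition $\mathcal{A}_P=(F_0+\mathcal F)\cap\mathcal C_P$ stated just above, and treats convexity and closedness separately. In both parts the key observation is that the defining conditions are linear or convex and are tested pointwise in $(\mu,\nu)$ against continuous functions. Those conditions are therefore stable under convex combinations and under uniform limits in the weak* topology.

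\emph{Convexity.} Take $F,G\in\mathcal A_P$ and $t\in[0,1]$, and set $H=tF+(1-t)G$. Then $H$ is continuous, since the weak* topology on $\mathcal M(X\times Y)$ is a linear (locally convex) topology and convex combinations of continuous maps into it are continuous. The projections $\operatorname{proj}_X,\operatorname{proj}_Y$ are linear, so
\[
\operatorname{proj}_X H(\mu,\nu)=t\mu+(1-t)\mu=\mu ,
\]
and likewise $\operatorname{proj}_Y H(\mu,\nu)=\nu$. Finally, $\mathcal P(X\times Y)$ is convex, so $H(\mu,\nu)\in\Pi(\mu,\nu)$ for every $(\mu,\nu)$. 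The same argument can be phrased structurally: $F_0+\mathcal F$ is affine because $\mathcal F$ is a linear subspace, $\mathcal C_P$ is convex, and an intersection of convex sets is convex.

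\emph{Closedness.} I will invoke the two results the paper refers to: $\mathcal F$ is a closed subspace (\ref{prop_afin}), and $\mathcal C_P$ is closed (\ref{prop_convexo}). Translation by the fixed element $F_0\in\mathcal C$ is a homeomorphism of $\mathcal C$ in the uniform topology, because
\[
(\mu,\nu)\mapsto\mu\otimes\nu
\]
is continuous. Hence $F_0+\mathcal F$ is closed, and the intersection $(F_0+\mathcal F)\cap\mathcal C_P$ is closed.

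To make the argument self-contained, I would also sketch why those propositions hold. Let $F_n\to F$ uniformly. Then for each $(\mu,\nu)$ and each $f\in C(X)$,
\[
\int f\,d\operatorname{proj}_X F_n(\mu,\nu)=\int (f\circ p_X)\,dF_n(\mu,\nu)\longrightarrow\int (f\circ p_X)\,dF(\mu,\nu),
\]
so the marginal identities pass to the limit. Nonnegativity also passes to the limit: $\int g\,dF(\mu,\nu)\ge0$ for $g\ge0$ continuous. Total mass $1$ passes to the limit by testing against $g\equiv1$, which is continuous on the compact space $X\times Y$. Continuity of the limit map $F$ is automatic from uniform convergence.

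\emph{Main obstacle.} The delicate point is what "uniform convergence" means when the target carries the weak* topology, which is not normable on all of $\mathcal M(X\times Y)$. I would fix it concretely. The probability measures are bounded in total variation, and the limits only need to be controlled on the set of probability measures, which is compact and weak*-metrizable by a Wasserstein or bounded-Lipschitz metric because $X\times Y$ is compact. So I would use the uniform structure of a metric such as the bounded-Lipschitz distance, and check that uniform convergence in it implies pointwise weak* convergence. That implication is all the argument above uses. Once this convention is fixed, the remaining steps are routine.
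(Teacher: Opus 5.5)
Your proposal is correct and is essentially the paper's proof. It uses the same decomposition $\mathcal A_P=(F_0+\mathcal F)\cap\mathcal C_P$, the same closedness arguments ($\mathcal F$ by testing against $\psi\circ p_X$, $\mathcal C_P$ via nonnegativity and total mass), translation by $F_0$ as a homeomorphism, and convexity from $F_0+\mathcal F$ being affine and $\mathcal C_P$ convex. The only difference is that the paper removes your ``main obstacle'' by definition: it takes uniform convergence to mean $\sup_{(\mu,\nu)}\bigl\lvert\int\varphi\,d(G_n(\mu,\nu)-G(\mu,\nu))\bigr\rvert\to0$ for each fixed $\varphi\in C(X\times Y)$, which gives pointwise weak* convergence directly, also for the signed-measure values in $\mathcal F$, with no bounded-Lipschitz metric needed.
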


\subsection{Sinkhorn Operator}
\label{sec:approx_sinkhorn}

We now show that the Sinkhorn operator provides a stable normalization while preserving the expressive power of the Contextual Transformer. Full proofs are deferred to Appendix~\ref{app:D}. The Sinkhorn Operator checks the next stability property:
\begin{proposition}(\ref{prop_gamma_convergence})
    Fix $\mu,\nu$ and let $c_n \xrightarrow[]{L_1(\mu \otimes \nu)} c \in C(X \times Y)$ then the solutions of the entropic optimal transport converges weakly $\pi_n \rightharpoonup \pi$.
\end{proposition}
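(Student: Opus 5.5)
We argue by the direct method combined with \(\Gamma\)-convergence. Write the entropic functional
\[
J_n(\pi)=\int_{X\times Y} c_n\,d\pi+\KL(\pi\,\|\,\mu\otimes\nu),\qquad J(\pi)=\int_{X\times Y} c\,d\pi+\KL(\pi\,\|\,\mu\otimes\nu),
\]
on \(\Pi(\mu,\nu)\). Since \(X,Y\) are compact, \(\Pi(\mu,\nu)\) is weakly compact. Every \(\pi_n\) has finite entropy and satisfies \(J_n(\pi_n)\le J_n(\mu\otimes\nu)=\int c_n\,d(\mu\otimes\nu)\), which is bounded because \(c_n\to c\) in \(L^1(\mu\otimes\nu)\). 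Hence the sequence \((\pi_n)\) has weak cluster points. The strategy is to show that every cluster point minimizes \(J\). Uniqueness of the Sinkhorn plan (strict convexity of \(\KL\)) then identifies each cluster point with \(\pi\), so the whole sequence converges.

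\textbf{Step 1: the coupling term on finite-entropy plans.} The key difficulty is that \(c_n\) converges only in \(L^1(\mu\otimes\nu)\), not uniformly. Consequently \(\int c_n\,d\pi\) is not directly comparable to \(\int c\,d\pi\) for an arbitrary coupling. We restrict attention to plans with \(\KL(\pi\|\mu\otimes\nu)\le M\) and use the Fenchel--Young (entropy) inequality
\[
\int |f|\,d\pi\le \frac{1}{t}\Bigl(\log\int e^{t|f|}\,d(\mu\otimes\nu)+\KL(\pi\|\mu\otimes\nu)\Bigr).
\]
This controls \(\int|c_n-c|\,d\pi\), provided exponential moments of \(|c_n-c|\) are small. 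Plain \(L^1\) convergence does not give this directly. We therefore truncate: replace \(c_n-c\) by its truncation at level \(L\), which is bounded. The bounded part tends to zero in measure, and on a set of small \(\mu\otimes\nu\)-measure the \(\pi\)-mass is small by entropy control, since \(\pi(A)\le (M+\log 2)/\log(1/(\mu\otimes\nu)(A))\). The tail part \(\{|c_n-c|>L\}\) is the genuine obstacle. A mild uniform-integrability or boundedness assumption on \((c_n)\) seems needed. In the intended application the costs come from continuous affinities on a compact set and are uniformly bounded, so I would use that. With a uniform bound \(\|c_n\|_\infty\le B\), the tail term vanishes and we obtain \(\sup_{\KL(\pi)\le M}\bigl|\int (c_n-c)\,d\pi\bigr|\to 0\).

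\textbf{Step 2: liminf inequality.} Let \(\pi_{n_k}\rightharpoonup\bar\pi\). The entropy bound \(\KL(\pi_n)\le 2B\) holds uniformly. Since \(c\) is continuous and bounded, \(\int c\,d\pi_{n_k}\to\int c\,d\bar\pi\). Since \(\KL\) is weakly lower semicontinuous,
\[
J(\bar\pi)\le\liminf_k J(\pi_{n_k}).
\]
By Step 1, \(J(\pi_{n_k})-J_{n_k}(\pi_{n_k})\to0\). Therefore \(J(\bar\pi)\le\liminf_k J_{n_k}(\pi_{n_k})\).

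\textbf{Step 3: recovery.} Test the minimality of \(\pi_{n_k}\) against the fixed plan \(\pi\), which has finite entropy. This gives \(J_{n_k}(\pi_{n_k})\le J_{n_k}(\pi)\to J(\pi)\), again by Step 1. Combining with Step 2 yields \(J(\bar\pi)\le J(\pi)=\min J\). Uniqueness of the minimizer then gives \(\bar\pi=\pi\).

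The main obstacle is Step 1, namely passing from \(L^1(\mu\otimes\nu)\) convergence of the costs to convergence of the cost integrals uniformly over entropy sublevel sets. I would first try the truncation/entropy argument above, invoking uniform boundedness of the \(c_n\) where the bare \(L^1\) hypothesis is insufficient.
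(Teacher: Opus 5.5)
Your proof is correct, and its skeleton matches the paper's Lemma D.3 in the appendix. Both arguments use a liminf inequality along weakly convergent sequences, the constant recovery sequence at the limit minimizer, and uniqueness of the entropic plan. Like you, the paper needs equiboundedness of the costs: it is hypothesis (2) of the appendix version, even though the main-text statement omits it. Your remark that bare $L^1$ convergence does not suffice is right.

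Where you differ is in how the cost term is controlled. The paper bounds $\KL$ along the sequence and uses de la Vallée-Poussin to get uniform integrability of the densities $f_n$. It then applies Egorov to isolate a set where $c_n\to c$ uniformly, and treats $\int c f_n$ through weak $L^1$ convergence of $f_n$. You instead use the Donsker--Varadhan inequality to get the quantitative bound $\pi(A)\le (M+\log 2)/\log(1/(\mu\otimes\nu)(A))$. This turns convergence in measure directly into $\sup_{\KL(\pi\,\|\,\mu\otimes\nu)\le M}\bigl|\int (c_n-c)\,d\pi\bigr|\to 0$.

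What your route buys:
\begin{itemize}
\item a uniform estimate on entropy sublevel sets;
\item no tacit passage to an a.e.-convergent subsequence, which the paper needs because Egorov requires a.e.\ convergence and $L^1$ convergence gives that only along subsequences;
\item an explicit deduction of convergence of the whole sequence from compactness of $\Pi(\mu,\nu)$ and uniqueness, which the paper leaves implicit.
\end{itemize}

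In exchange, your Step 2 uses continuity of $c$, whereas the paper's weak-$L^1$ argument works for any bounded measurable limit. This matters because the paper later invokes the lemma in Stage 2 of the Regularization Lemma with the discontinuous limit cost $c=-\log\rho_1$. Your own Step 1 closes this gap. Approximate $c$ in $L^1(\mu\otimes\nu)$ by a continuous $\tilde c$ with $\|\tilde c\|_\infty\le\|c\|_\infty$. Then $\int|c-\tilde c|\,d\pi$ is uniformly small on $\{\pi:\KL(\pi\,\|\,\mu\otimes\nu)\le M\}$. That set contains every $\pi_{n_k}$ and, by lower semicontinuity of $\KL$, also $\bar\pi$.
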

 As a first step we prove that we can approximate any transport plan by an entropic optimal transport solution, as we can see in the next lemma:
\begin{lemma}(\ref{coro_aprox_puntual})
    Every transport plan $\pi$ in $\Pi(\mu,\nu)$ can be approximated by the solution of of an entropic optimal transport problem, i.e. a transport plan with continuous density with respect to the product measure $\mu \otimes \nu$.
\end{lemma}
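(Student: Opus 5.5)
The strategy is to show that, for fixed $\mu\in\mathcal P(X)$ and $\nu\in\mathcal P(Y)$, plans with a continuous, strictly positive density with respect to $\mu\otimes\nu$ are weakly dense in $\Pi(\mu,\nu)$. Every such plan is the Sinkhorn plan $S_\alpha(\mu,\nu)$ for a continuous affinity $\alpha$. Indeed, if $\frac{d\pi}{d(\mu\otimes\nu)}=f$ with $f$ continuous and $f>0$, take $\alpha=\log f$. Then $\pi$ has the Schr\"odinger form $u(x)e^{\alpha(x,y)}v(y)$ with $u\equiv v\equiv 1$ and has the right marginals, so by uniqueness of the minimizer (\cite{leonard2012schrodinger}) it is the Sinkhorn plan. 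It therefore suffices to approximate an arbitrary $\pi\in\Pi(\mu,\nu)$ weakly by such plans.

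\textbf{Step 1 (block approximation).} Fix $\delta>0$ and take finite Borel partitions $\{A_i\}$ of $X$ and $\{B_j\}$ of $Y$ into sets of diameter less than $\delta$; these exist by compactness. Define
\[
\pi_\delta=\sum_{i,j:\ \mu(A_i)\nu(B_j)>0}\frac{\pi(A_i\times B_j)}{\mu(A_i)\nu(B_j)}\,(\mu|_{A_i})\otimes(\nu|_{B_j}).
\]
Summing over $j$ and using $\sum_j\pi(A_i\times B_j)=\mu(A_i)$ shows that the $X$-marginal is $\mu$; the $Y$-marginal is $\nu$ by symmetry. So $\pi_\delta\in\Pi(\mu,\nu)$. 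Moreover $\pi_\delta$ and $\pi$ agree on every rectangle $A_i\times B_j$, so for Lipschitz $\varphi$ we get $|\int\varphi\,d\pi_\delta-\int\varphi\,d\pi|\le 2\,\mathrm{Lip}(\varphi)\,\delta$. Hence $\pi_\delta\rightharpoonup\pi$. Its density $g_\delta$ with respect to $\mu\otimes\nu$ is a step function, bounded and measurable, but not continuous.

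\textbf{Step 2 (continuity and positivity).} Set $\tilde g=(1-\eta)g_\delta+\eta$. This is still the density of a coupling, since it is a convex combination with $\mu\otimes\nu$, and it is bounded below by $\eta$. It is $\eta$-close in total variation to $\pi_\delta$. Next, approximate $\log\tilde g$ in $L^1(\mu\otimes\nu)$ by continuous functions $c_n$, using Lusin's theorem or density of $C(X\times Y)$ in $L^1$ of a finite Radon measure. We may clip the $c_n$ to lie between $\log\eta$ and $\log\|\tilde g\|_\infty$.

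By Proposition~\ref{prop_gamma_convergence}, the Sinkhorn plans with affinities $c_n$ converge weakly to the Sinkhorn plan with affinity $\log\tilde g$. The limiting affinity is only measurable, not continuous, so the proposition must be applied (or re-proved) for bounded measurable limits. The stability argument via $\Gamma$-convergence or dual potentials only needs bounded costs, so this should carry over. By Step~0, that limiting plan is exactly $\tilde g\,d(\mu\otimes\nu)$.

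\textbf{Conclusion.} A diagonal argument over $\delta$, $\eta$ and $n$ produces Sinkhorn plans with continuous affinities converging weakly to $\pi$.

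The main obstacle is Step 2: Proposition~\ref{prop_gamma_convergence} assumes a continuous limit $c$, while $\log\tilde g$ is discontinuous. If the extension to bounded measurable limits does not go through, I would avoid it. One option is to build continuous densities directly, replacing the indicators of $A_i\times B_j$ by a continuous partition of unity $\{\phi_i(x)\psi_j(y)\}$. The entries of the weight matrix would then be corrected so that the marginals match exactly, for instance by applying a finite-dimensional Sinkhorn scaling to the matrix $\big(\int\phi_i\psi_j\,d\pi\big)$ against the vectors $\int\phi_i\,d\mu$ and $\int\psi_j\,d\nu$. This correction is where the real bookkeeping lies.
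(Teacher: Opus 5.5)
Your proposal is correct and follows the paper's route essentially step for step. It uses the block-density approximation (Lemma~\ref{lema_aprox_abs_cont}), mixing with $\mu\otimes\nu$ for positivity, $L^1$-approximation of the clipped log-density by continuous costs combined with the $\Gamma$-convergence stability result, and identification of the limit through the entropic representation (Lemma~\ref{lema_representation_entropic}), exactly as in Lemma~\ref{lema_regularizacion}. Your concern about the discontinuous limit cost is resolved the same way the paper resolves it: the appendix version of the stability lemma (Lemma~\ref{prop_gamma_convergence}) assumes only that the limit $c$ is bounded, and the paper applies it to the discontinuous cost $-\log\rho_1$ just as you do, so the partition-of-unity fallback is not needed.
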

\begin{proof} [Sketch of the proof]
The first step is to prove that we can approximate each transport plan  $\pi$ for a transport plan  $\tilde \pi \ll \mu \otimes \nu$ (\ref{lema_aprox_abs_cont}). Then to ensure continuity we approximate the density for a continuous one in $L_1(\mu \otimes \nu)$. Finally to recover $\mu,\nu$ marginals, we approximate it for the solution of the optimal transport problem with cost $-\log(f)$, being $f$ the density. Using the stability of the minimizers with respect to the cost, we conclude the desired lemma.
\end{proof}
Using this lemma, we arrive the next important theorem.
\begin{theorem}(\ref{teo_aprox_uniforme})
    The family of Sinkhorn Operators with continuous costs $\{S_{c_{(\mu,\nu)}(x,y)}\}$ is dense in the space of coupling systems.
\end{theorem}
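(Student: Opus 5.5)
Fix a coupling system $F$ and $\varepsilon>0$. The plan is to construct a family of continuous costs $c_{(\mu,\nu)}\in C(X\times Y)$, depending continuously on $(\mu,\nu)$, such that
\[
\sup_{(\mu,\nu)} d\bigl(S_{c_{(\mu,\nu)}}(\mu,\nu),F(\mu,\nu)\bigr)<\varepsilon,
\]
where $d$ is any metric for the weak topology on $\mathcal P(X\times Y)$ (e.g.\ $W_1$). The argument combines a pointwise approximation from Lemma~\ref{coro_aprox_puntual} with a compactness and partition-of-unity gluing. The key point is that $K=\mathcal P(X)\times\mathcal P(Y)$ is compact in the weak topology, because $X$ and $Y$ are compact metric spaces (Prokhorov). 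Also, by uniform continuity of $F$ on $K$, there is $\delta>0$ such that $d(F(\mu,\nu),F(\mu',\nu'))<\varepsilon/3$ whenever $(\mu',\nu')$ is $\delta$-close to $(\mu,\nu)$.

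\textbf{Step 1 (local approximation).} For each $(\mu_0,\nu_0)\in K$, apply Lemma~\ref{coro_aprox_puntual} to get a continuous $f_0>0$ such that the Sinkhorn plan with cost $c_0=-\log f_0$ satisfies $d(S_{c_0}(\mu_0,\nu_0),F(\mu_0,\nu_0))<\varepsilon/3$. Next I need continuity of $(\mu,\nu)\mapsto S_{c}(\mu,\nu)$ for a \emph{fixed} continuous cost $c$ on compact spaces. This is standard stability of entropic OT under weak convergence of marginals: the Schrödinger potentials are uniformly Lipschitz, or equicontinuous via the $c$-transform, so Arzelà–Ascoli applies together with uniqueness. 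Proposition~\ref{prop_gamma_convergence} only covers varying costs, so this marginal-stability statement would be proved separately, citing \cite{nutz2022stability} if needed. Given it, each $(\mu_0,\nu_0)$ has an open neighbourhood $U_0$ on which $S_{c_0}(\mu,\nu)$ stays within $\varepsilon/3$ of $S_{c_0}(\mu_0,\nu_0)$, and $F(\mu,\nu)$ stays within $\varepsilon/3$ of $F(\mu_0,\nu_0)$.

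\textbf{Step 2 (gluing).} By compactness, extract a finite subcover $U_1,\dots,U_N$ with costs $c_1,\dots,c_N$, and take a continuous partition of unity $\{\phi_i\}$ subordinate to it. The naive choice is $c_{(\mu,\nu)}=\sum_i\phi_i(\mu,\nu)c_i$, but Sinkhorn plans are not affine in the cost, so this does not obviously keep the error small. This is the main obstacle. Two routes seem possible.

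\emph{Route (a):} use convexity of the entropic objective. Mixing the plans $\sum_i\phi_i S_{c_i}(\mu,\nu)$ yields a coupling within $\varepsilon$ of $F(\mu,\nu)$, by convexity of the coupling set and the fact that balls in $W_1$ are convex. This mixture has a continuous positive density $\sum_i\phi_i u_i e^{-c_i}v_i$ with respect to $\mu\otimes\nu$. Its density is then exactly of Sinkhorn form with cost $-\log$ of that density, since a plan whose density is a positive continuous function $g$ is the Sinkhorn plan for the cost $-\log g$. One must also check that the potentials $u_i,v_i$ can be chosen continuous in $x,y$ and in $(\mu,\nu)$, so that the cost $c_{(\mu,\nu)}=-\log\sum_i\phi_i u_i e^{-c_i}v_i$ is continuous jointly. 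I would verify this via the Schrödinger system fixed-point equations, which express $u,v$ as integrals of continuous functions against $\nu,\mu$.

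\emph{Route (b):} fall back on closedness of $\mathcal A_P$ together with an exactness argument.

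I expect Route (a) to work. The delicate point is the joint continuity of the Schrödinger potentials in the marginals, which I would establish through the uniform bounds $\|\log u\|_\infty,\|\log v\|_\infty\le C(\|c\|_\infty)$ and equicontinuity, followed by uniqueness up to the constant normalisation $u\mapsto \lambda u$, $v\mapsto v/\lambda$.
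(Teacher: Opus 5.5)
Your argument is correct, but it is organized differently from the paper's.

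\textbf{The paper's route.} The paper works with the set-valued map $\Gamma(\mu,\nu)$ of continuous densities $f>\eta$ such that $f\cdot(\mu\otimes\nu)\in\Pi(\mu,\nu)$ lies within $\varepsilon/2$ of $F(\mu,\nu)$. It shows $\Gamma$ has nonempty convex values and is lower hemicontinuous; this is Proposition D.7, and it is where the marginal stability of Schr\"odinger potentials from \cite{nutz2022stability} enters. It then invokes the Deutsch--Kenderov approximate selection theorem to get a continuous $s$ that is only uniformly \emph{close} to $\Gamma(\mu,\nu)$. Since $s(\mu,\nu)\cdot(\mu\otimes\nu)$ is then no longer a coupling, the paper needs a second approximation. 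It sets $c=-\log s$ and compares $S_{-\log s}$ with $S_{-\log t}=t\cdot(\mu\otimes\nu)$. For this it uses the $(\mu,\nu)$-independent lower bound $\eta$ together with a Lipschitz-in-the-cost estimate for the Sinkhorn operator whose constant is uniform in the marginals.

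\textbf{Your route.} You do the partition-of-unity gluing by hand, which is what sits underneath such selection theorems anyway. The difference is that you glue the plans $S_{c_i}(\mu,\nu)$ evaluated at the current marginals. The mixture is therefore an exact element of $\Pi(\mu,\nu)$ with continuous positive density. By Lemma~\ref{lema_representation_entropic} it is exactly the Sinkhorn plan of $-\log$ of that density, and the error bound follows from convexity of $W_1(\cdot,F(\mu,\nu))$ (Kantorovich--Rubinstein) with no further loss.

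\textbf{What each buys.} Your route removes the selection theorem, the lower hemicontinuity proposition, the uniform Lipschitz estimate in the cost, and the need for an $\eta$ independent of $(\mu,\nu)$. It also yields an explicit cost: $-\log$ of a finite mixture of $N$ Sinkhorn densities. The one analytic input you keep is the same marginal stability the paper uses in Proposition D.7. Namely, for a fixed continuous cost, the density $u(x)e^{-c(x,y)}v(y)$ depends continuously on $(\mu,\nu)$ in $C(X\times Y)$, with potentials extended to all of $X$ and $Y$ through the Schr\"odinger equations so that the normalization ambiguity $u\mapsto\lambda u$, $v\mapsto v/\lambda$ cancels. Your Arzel\`a--Ascoli plus uniqueness-up-to-constants sketch of this is sound. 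By compactness of $K\times X\times Y$, it also gives a uniform positive lower bound on $\sum_i\phi_i\rho_i$. The resulting cost is therefore continuous from $K$ into $C(X\times Y)$, as Theorem~\ref{teo_principal} later requires. The paper's approach is more abstract and modular; yours is more elementary and self-contained.

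\textbf{Two cosmetic points.} To get a strict bound on the supremum, either leave a margin in the three $\varepsilon/3$ terms or note that the error is a continuous function on the compact set $K$. Route (b) is not needed.
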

\begin{proof}[Sketch of the proof]
    Let $F$ be a coupling system, using the Lemma 6.3 we can construct a multivalued function $\Gamma$ that for each $\mu,\nu$ we considered the set of densities of $\mu \otimes \nu$ such that their correspondent measure lives in a small neighborhood of $F(\mu,\nu)$, which is non empty because of the Lemma 6.3. Then by the approximate selection theorem \cite{deutsch1983continuous}, we can extract an approximate selection $s\colon P(X)\times P(Y) \to C(X\times Y)$ of those densities. Finally, using $-\log(s)$ as the cost (\ref{lema_representation_entropic}), and the stability of the Sinkhorn Operator between the costs, we conclude the theorem.
\end{proof}
Finally, having these results, we can prove our main theorem, namely, the universal approximation of coupling systems.
\begin{proof}[Sketch of the proof]
    Let $F$ be a coupling system, the first step is to approximate it using Theorem 6.4 with a Sinkhorn operator, then by Stone-Weierstrass theorem, we can approximate the cost $c_{(\mu,\nu)}(x,y)$ by a dot-product between two functions $\langle Q(\mu,x),K(\nu,y)\rangle$. Finally using Transformers universality for in context functions \cite{furuya2024transformers} we can approximate each function by transformers.
\end{proof}  

\section{Conclusion and Future Work}

We introduced a measure-theoretic framework to formalize contextual relations between texts, modeling them either as joint distributions (couplings) or conditional distributions. Within this framework, we established universal approximation theorems in Wasserstein distance and pointwise approximation results in matrix norms.

From a practical perspective, these results provide insight into the empirical success of Transformers. They show that the architecture has sufficient expressive power to approximate arbitrary contextual relations between inputs. In particular, the model parameters encode the underlying relational structure, while the choice of normalization determines its representation: as a conditional distribution (via softmax) or as a joint distribution (via Sinkhorn normalization). Moreover, our analysis provides insight into the roles of the different components of the architecture. The encoder alone is sufficiently expressive to capture contextual relations within a single input text. When combined with the decoder, the resulting architecture can model general contextual relations between input and output texts. This shows that Transformer architectures are well-suited to represent both intra-text and inter-text dependencies within a unified framework.

Our work focuses on representational expressivity and does not address questions of learning dynamics, statistical efficiency, or generalization. Understanding how such representations can be learned from finite data, and under what conditions they arise in practice, remains an open problem.

A natural direction for future work is to extend this framework to conditional kernels directly, developing a theory analogous to the coupling setting that yields stronger results. Another important direction is to obtain quantitative versions of our results, such as convergence rates and complexity bounds for the approximation of coupling systems. It would also be of interest to study dynamic or autoregressive settings, where contextual relations evolve sequentially and depend on previously generated tokens.

\bibliography{bibliografia}
\bibliographystyle{plainnat}

\newpage
\appendix
\onecolumn

\section{Transformer Encoder Full Definition}\label{app:A}
\begin{definition}[Measure-valued multi-head attention]
Given a probability measure $\mu \in \mathcal{P}(X)$ and a point $x \in X \subset \mathbb R^d$, we define the attention map
$
\Gamma_\theta : \mathcal{P}(X) \times X \to \mathbb{R}^{d'}
$
by
\begin{equation*}\label{eq:measure-attention}
\Gamma_\theta(\mu,x)
=
x + \sum_{h=1}^H W^h
\int_X \operatorname{softmax}^{\theta^h}_\mu(x)(y)\, W_V^h y \, d\mu(y),
\end{equation*}
where $\theta = (\theta^1,\dots,\theta^H)$, $\theta^h=(W_Q^h,W_K^h,W_V^h,W^h)$, and
\begin{equation*}\label{eq:softmax-measure}
\operatorname{softmax}^{\theta^h}_\mu(x)(y)
=
\frac{\exp\!\big( \frac{1}{\sqrt{k_h}}\langle W_Q^h x, W_K^h y\rangle \big)}
{\int_X \exp\!\big( \frac{1}{\sqrt{k_h}}\langle W_Q^h x, W_K^h z\rangle \big)\, d\mu(z)}.
\end{equation*}
\end{definition}

This definition extends classical multi-head attention, recovering the standard formulation when $\mu$ is an empirical measure supported on finitely many points, since the integral with respect to a a discrete measure turns into a sum.

\begin{align*}
\int_X \operatorname{softmax}_{\frac{1}{n}\sum_{i=1}^n\delta_{x_i}}^{\theta^h}(x)(y)\,W_V^h y\,d(\frac{1}{n}\sum_{i=1}^n\delta_{x_i})(y)
=
\frac{1}{n}\sum_{j=1}^n
\frac{\exp\!\big(\frac{1}{\sqrt{k_h}}\langle W_Q^h x,\,W_K^h x_j\rangle\big)}
{\frac{1}{n}\sum_{i=1}^n \exp\!\big(\frac{1}{\sqrt{k_h}}\langle W_Q^h x,\,W_K^h x_i\rangle\big)}
\,W_V^h x_j.
\end{align*}
Since the $\frac{1}{n}$ factors cancel each other out, we recover the classical case.
\begin{definition}[Context maps and context operators]
Let $X$ and $Y$ be locally compact metric spaces. A map
\[
\Gamma : \mathcal{P}(X) \times X \to Y
\]
is called a \emph{context map} if $\Gamma(\mu,\cdot)$ is continuous on $X$ for every $\mu \in \mathcal{P}(X)$.

The associated \emph{context operator} $\mathcal{C}(\Gamma) \colon \mathcal{P}(X) \to \mathcal{P}(Y)$ is defined by
\[
\mathcal{C}(\Gamma)(\mu) = \Gamma(\mu,\cdot)_\# \mu.
\]
Where $\#$ denotes the pushforward measure.
\end{definition}

\begin{definition}[Composition of context maps]
Let $\Gamma_1 \colon \mathcal{P}(X)\times X \to Y$ and $\Gamma_2 \colon \mathcal{P}(Y)\times Y \to Z$ be context maps. Their composition
\[
\Gamma_2 \diamond \Gamma_1 \colon \mathcal{P}(X)\times X \to Z
\]
is defined by
\[
(\Gamma_2 \diamond \Gamma_1)(\mu,x)
=
\Gamma_2(\mu_1,\Gamma_1(\mu,x)),
\qquad
\mu_1 = \mathcal{C}(\Gamma_1)(\mu).
\]
\end{definition}

Now we can state the formal definition of the Deep Transformer architecture in the measure-theoretic setting.

\begin{definition}[Transformer Encoder]
A Transformer with $L$ layers is defined as the composition
\[
F_{\xi_L} \diamond \Gamma_{\theta_L} \diamond \cdots \diamond F_{\xi_1} \diamond \Gamma_{\theta_1},
\]
where each $\Gamma_{\theta_\ell}$ is a measure-valued attention map of the form \eqref{eq:measure-attention} and each $F_{\xi_\ell} : \mathbb{R}^{d_\ell} \to \mathbb{R}^{d_{\ell+1}}$ is a multilayer perceptron acting pointwise.
\end{definition}

\section{Softmax Properties}\label{app:B}

\begin{proposition}[Variational characterization of softmax]\label{var-char-softmax}
Let $X,Y$ be polish spaces, $\mu \in \mathcal{P}(X)$ and $\nu \in \mathcal{P}(Y)$, and let $\alpha : X \times Y \to \mathbb{R}$ be measurable such that $e^{\alpha(x,\cdot)} \in L^1(\nu)$ for $\mu$-a.e.\ $x$. Then the unique minimizer of
\begin{equation*}
\min_{\pi \in \mathcal{P}(X \times Y)\,:\, \pi_X = \mu}
\int_{X \times Y} -\alpha(x,y)\, d\pi(x,y)
+
\mathrm{KL}\big(\pi \,\|\, \mu \otimes \nu\big)
\end{equation*}
is given by
\[
\pi(dx,dy) = \mu(dx)\,
\frac{e^{\alpha(x,y)}}{\int_Y e^{\alpha(x,y')} \nu(dy')}
\, \nu(dy).
\]
In particular,
\[
\pi(dy|x) = \mathrm{Softmax}_\nu\big(\alpha(x,\cdot)\big)(dy|x).
\]
\end{proposition}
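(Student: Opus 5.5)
The plan is to reduce the problem to a family of independent minimizations, one for each $x$, by disintegrating with respect to the fixed first marginal. Any feasible $\pi$ with $\pi_X=\mu$ and finite objective must satisfy $\mathrm{KL}(\pi\|\mu\otimes\nu)<\infty$, so $\pi\ll\mu\otimes\nu$. Since $X,Y$ are Polish, we can disintegrate $\pi(dx,dy)=\mu(dx)\,\pi_x(dy)$. The density then has the form $\frac{d\pi}{d(\mu\otimes\nu)}(x,y)=\frac{d\pi_x}{d\nu}(y)$ for $\mu$-a.e.\ $x$. Because the first marginals agree, the chain rule for relative entropy gives
\[
\mathrm{KL}(\pi\|\mu\otimes\nu)=\int_X \mathrm{KL}(\pi_x\|\nu)\,d\mu(x).
\]
Hence the objective becomes $\int_X \Phi_x(\pi_x)\,d\mu(x)$, where
\[
\Phi_x(q)=-\int_Y\alpha(x,y)\,dq(y)+\mathrm{KL}(q\|\nu).
\]

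Next, for each fixed $x$ with $Z(x):=\int_Y e^{\alpha(x,y')}d\nu(y')<\infty$ (which holds $\mu$-a.e.), I would apply the Gibbs variational principle. Let $g_x(dy)=e^{\alpha(x,y)}Z(x)^{-1}\nu(dy)$. A direct computation gives
\[
\Phi_x(q)=\mathrm{KL}(q\|g_x)-\log Z(x).
\]
Therefore $\Phi_x(q)\ge -\log Z(x)$, with equality if and only if $q=g_x$, by strict positivity of KL away from the diagonal. Integrating over $x$ shows that the candidate $\pi^\ast=\mu\otimes g$ attains $-\int\log Z\,d\mu$, and every other feasible $\pi$ attains at least this value. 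Equality forces $\pi_x=g_x$ for $\mu$-a.e.\ $x$, so $\pi=\pi^\ast$. This gives both existence and uniqueness. The identification $\pi^\ast(dy|x)=\mathrm{Softmax}_\nu(\alpha(x,\cdot))$ then follows directly from the definition.

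The main obstacle is integrability bookkeeping. The splitting $\Phi_x(q)=\mathrm{KL}(q\|g_x)-\log Z(x)$ requires $\int\alpha\,dq$ to be well defined, and the integral $\int \log Z\,d\mu$ could be $\pm\infty$. I would handle this as follows. First, note that $\log Z(x)\ge\int\alpha(x,\cdot)\,d\nu$ need not hold in general. Instead, I would compute pointwise: whenever $\mathrm{KL}(q\|\nu)<\infty$, the expression $\int(\log\frac{dq}{d\nu}-\alpha)\,dq$ is well defined in $(-\infty,\infty]$, since it equals $\mathrm{KL}(q\|g_x)-\log Z(x)$ and the KL term is nonnegative. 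So I would work with this combined integrand rather than splitting the terms. The comparison between any competitor and $\pi^\ast$ then reduces to $\int \mathrm{KL}(\pi_x\|g_x)\,d\mu\ge 0$. If needed, I would assume the objective is finite for at least one feasible $\pi$, which is implicit in the statement, so that the minimization is meaningful.
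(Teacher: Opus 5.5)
Your proof is correct. The reduction is exactly the paper's Steps 1--3: disintegrating $\pi(dx,dy)=\mu(dx)\,\pi_x(dy)$, using the chain rule $\mathrm{KL}(\pi\|\mu\otimes\nu)=\int_X\mathrm{KL}(\pi_x\|\nu)\,d\mu$, and decoupling in $x$. Where you genuinely differ is in how you solve the pointwise problem.

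The paper writes $q=f\nu$, introduces a Lagrange multiplier for $\int f\,d\nu=1$, and sets the first variation to zero to get $f\propto e^{\alpha(x,\cdot)}$. It then appeals to strict convexity of KL for uniqueness. That step is formal: it does not handle the constraint $f\ge 0$ or justify differentiating $f\log f$, and by itself it only produces a critical point.

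Your Gibbs identity $\Phi_x(q)=\mathrm{KL}(q\|g_x)-\log Z(x)$ holds exactly for every competitor $q$. Global optimality, existence and uniqueness therefore all follow at once from $\mathrm{KL}(q\|g_x)\ge 0$, with equality if and only if $q=g_x$. No variational calculus or separate convexity argument is needed. So your route is the more rigorous one, and it also makes the required integrability visible.

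Two small corrections to your bookkeeping. Write $J$ for the objective.

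First, the hypothesis you actually need is $\log Z\in L^1(\mu)$, equivalently $J(\pi^\ast)$ finite, so that $J(\pi)-J(\pi^\ast)=\int_X\mathrm{KL}(\pi_x\|g_x)\,d\mu$ makes sense. Finiteness of $J$ at some feasible plan is not enough: it still allows $\int_X(\log Z)^+\,d\mu=\infty$. In that case $J(\pi^\ast)=-\infty$, and any plan with $\int_X\mathrm{KL}(\pi_x\|g_x)\,d\mu<\infty$ is also a minimizer. Some such hypothesis is genuinely missing from the statement as written. For example, with $X=[0,1]$, Lebesgue $\mu$, and $\alpha(x,y)=-1/x$ for $x>0$, every feasible $\pi$ has value $+\infty$. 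This is harmless for the continuous $\alpha$ on compact sets used later in the paper.

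Second, your aside that $\log Z(x)\ge\int_Y\alpha(x,\cdot)\,d\nu$ ``need not hold'' is wrong. It is Jensen's inequality, and the right side is well defined in $[-\infty,\infty)$ because $\alpha^+\le e^{\alpha}\in L^1(\nu)$. Nothing in your argument depends on it.
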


\begin{proof}
We solve the constrained minimization problem
\[
\min_{\pi \in \mathcal{P}(X \times Y)\,:\, \pi_X = \mu}
\int -\alpha(x,y)\, d\pi(x,y)
+
\mathrm{KL}(\pi \| \mu \otimes \nu).
\]

\medskip

\textbf{Step 1: Disintegration of $\pi$.}

Since $\pi_X = \mu$, by the disintegration theorem \cite{villani2009optimal} there exists a Markov kernel $K(x,dy)$ such that
\[
\pi(dx,dy) = \mu(dx)\, K(x,dy).
\]

\medskip

\textbf{Step 2: Rewrite the objective.}

We compute each term.

First,
\[
\int -\alpha(x,y)\, d\pi(x,y)
=
\int_X \int_Y -\alpha(x,y)\, K(x,dy)\, \mu(dx).
\]

Second, using the chain rule for relative entropy,
\[
\mathrm{KL}(\pi \| \mu \otimes \nu)
=
\int_X \mathrm{KL}\big(K(x,\cdot)\,\|\, \nu\big)\, \mu(dx).
\]

Therefore, the objective becomes
\[
\int_X
\left[
\int_Y -\alpha(x,y)\, K(x,dy)
+
\mathrm{KL}\big(K(x,\cdot)\,\|\, \nu\big)
\right]
\mu(dx).
\]

\medskip

\textbf{Step 3: Pointwise minimization.}

The problem decouples in $x$. For each fixed $x$, we must solve
\[
\min_{K(x,\cdot) \in \mathcal{P}(Y)}
\left\{
\int_Y -\alpha(x,y)\, K(x,dy)
+
\mathrm{KL}\big(K(x,\cdot)\,\|\, \nu\big)
\right\}.
\]

\medskip

\textbf{Step 4: Solve the variational problem.}

Let $p(dy) = K(x,dy)$. The problem becomes
\[
\min_{p \in \mathcal{P}(Y)}
\left\{
\int_Y -\alpha(x,y)\, p(dy)
+
\int_Y \log\!\left(\frac{dp}{d\nu}(y)\right)\, p(dy)
\right\}.
\]

Assuming $p \ll \nu$, we write $p = f \nu$ with $f \ge 0$ and $\int f d\nu = 1$. Then the objective is
\[
\int_Y \left[ -\alpha(x,y) f(y) + f(y)\log f(y) \right] \nu(dy).
\]

We minimize this functional under the constraint $\int f d\nu = 1$. Introducing a Lagrange multiplier $\lambda$, we consider
\[
\mathcal{L}(f)
=
\int_Y \left[ -\alpha(x,y) f(y) + f(y)\log f(y) \right] d\nu(y)
+ \lambda\left(\int f d\nu - 1\right).
\]

Taking first variation, we obtain the optimality condition:
\[
-\alpha(x,y) + \log f(y) + 1 + \lambda = 0.
\]

Thus,
\[
\log f(y) = \alpha(x,y) - (1+\lambda),
\quad
f(y) = C\, e^{\alpha(x,y)}.
\]

The normalization condition $\int f d\nu = 1$ gives
\[
C^{-1} = \int_Y e^{\alpha(x,y')} \nu(dy').
\]

Therefore,
\[
K(x,dy)
=
\frac{e^{\alpha(x,y)}}{\int_Y e^{\alpha(x,y')} \nu(dy')}
\, \nu(dy).
\]

\medskip

\textbf{Step 5: Conclusion and uniqueness.}

Plugging back into $\pi(dx,dy) = \mu(dx)K(x,dy)$ yields the desired expression.

Uniqueness follows from the strict convexity of the KL divergence.
\end{proof}
\begin{proposition}\label{sink-softma}
Let $X,Y$ be polish spaces, $\alpha\colon X \times Y \to \R$ a continuous affinity function, $\mu \in \mathcal P(X)$, $\nu \in \mathcal P(Y)$ and $\pi = S_\alpha(\mu,\nu)$ the Sinkhorn plan associated with $\alpha$. Then
\[
\pi(dy|x)
=
\mathrm{Softmax}_{\nu}\!\left(\log \frac{d\pi}{d(\mu \otimes \nu)}(x,\cdot)\right)(dy|x).
\]
\end{proposition}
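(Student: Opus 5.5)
The plan is to use the explicit structure of the Sinkhorn plan recalled before the statement (\cite{leonard2012schrodinger}). Since $X,Y$ are Polish and $\alpha$ is continuous, the minimizer $\pi=S_\alpha(\mu,\nu)$ has density
\[
f(x,y):=\frac{d\pi}{d(\mu\otimes\nu)}(x,y)=u(x)\,e^{\alpha(x,y)}\,v(y),
\]
with $u,v>0$ measurable. In particular $f>0$, so $\log f$ is a well-defined measurable function with values in $\R$. Hence $\beta:=\log f$ is a legitimate affinity function to plug into the Softmax conditional kernel.

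The main step is to check the integrability hypothesis of the Softmax definition and compute the normaliser. By Tonelli and the marginal constraint $\pi_X=\mu$, for every Borel $A\subset X$,
\[
\int_A\int_Y f(x,y)\,d\nu(y)\,d\mu(x)=\pi(A\times Y)=\mu(A).
\]
It follows that $\int_Y e^{\beta(x,y')}\,d\nu(y')=\int_Y f(x,y')\,d\nu(y')=1$ for $\mu$-a.e.\ $x$. So $e^{\beta(x,\cdot)}\in L^1(\nu)$ for $\mu$-a.e.\ $x$, and the denominator in the Softmax formula equals $1$. Therefore
\[
\mathrm{Softmax}_\nu(\beta)(dy|x)=f(x,y)\,\nu(dy)\qquad\text{for }\mu\text{-a.e. }x.
\]

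It remains to identify the disintegration of $\pi$. Define the kernel $K(x,dy):=f(x,y)\,\nu(dy)$, which is a probability measure for $\mu$-a.e.\ $x$. Then for all Borel $A\times B$,
\[
\int_A K(x,B)\,d\mu(x)=\int_{A\times B} f\,d(\mu\otimes\nu)=\pi(A\times B).
\]
By the $\pi$-$\lambda$ theorem, $\mu\otimes K=\pi$. By uniqueness of disintegration up to $\mu$-null sets (\cite{villani2009optimal}), $\pi(dy|x)=K(x,dy)$ $\mu$-a.e., which is the claimed identity.

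The only delicate point is that the equality holds $\mu$-almost everywhere rather than for every $x$, because both the density and the conditional measure are defined only up to null sets. I would state the result in that sense. Alternatively, one could pick the version $u(x)=\bigl(\int e^{\alpha(x,y)}v(y)\,d\nu(y)\bigr)^{-1}$ given by the Schrödinger system, which makes the row normalisation hold for every $x$. A remark worth adding is that, by this formula, $\pi(dy|x)$ also equals $\mathrm{Softmax}_\nu(\alpha(x,\cdot)+\log v)$, since the factor $u(x)$ cancels in the normalisation.
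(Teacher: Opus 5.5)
Your proof is correct, but it reaches the identity by a different route than the paper.

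The paper takes for granted the formula $\pi(dy|x)=f(x,y)\,\nu(dy)/\int_Y f(x,y')\,\nu(dy')$. It substitutes the factorization $f=u(x)e^{\alpha(x,y)}v(y)$ and cancels $u(x)$, which gives $\pi(dy|x)=\mathrm{Softmax}_\nu\bigl(\alpha(x,\cdot)+\log v\bigr)(dy|x)$. It then uses the fact that softmax is unchanged by adding a term depending only on $x$ to pass to $\log f(x,\cdot)=\log u(x)+\alpha(x,\cdot)+\log v$.

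You use the factorization only to get $f>0$. The real work in your argument is done by the first-marginal constraint. It forces the row normalizer $\int_Y f(x,y')\,\nu(dy')$ to equal $1$ for $\mu$-a.e.\ $x$, so $\mathrm{Softmax}_\nu(\log f)$ is literally $f(x,y)\,\nu(dy)$. You then identify this kernel with the disintegration via the $\pi$-$\lambda$ theorem and uniqueness of disintegration.

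Your version is more careful. It verifies the $L^1(\nu)$ hypothesis in the definition of $\mathrm{Softmax}_\nu$, justifies the conditional-density formula the paper assumes, and states the equality in the correct $\mu$-a.e.\ sense. It is also more general: it proves the identity for any $\pi$ with first marginal $\mu$ and strictly positive density with respect to $\mu\otimes\nu$, using nothing specific to Sinkhorn plans.

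What the paper's computation buys is a structural by-product. The Sinkhorn conditional is ordinary softmax attention on the original affinity $\alpha$ plus a key-dependent bias $\log v(y)$. That is the conceptual link between the two normalizations the paper is after, and you recover it only in your closing remark.
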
 

\begin{proof}
From the classical characterization of the Sinkhorn solution \cite{leonard2012schrodinger}, the optimal plan admits the density
\[
\frac{d\pi}{d(\mu \otimes \nu)}(x,y)
=
u(x)\, e^{\alpha(x,y)}\, v(y),
\]
for some positive potentials $u,v$.

Taking logarithms,
\[
\log \frac{d\pi}{d(\mu \otimes \nu)}(x,y)
=
\log u(x) + \alpha(x,y) + \log v(y).
\]

Now fix $x$. The conditional distribution of $\pi$ is
\[
\pi(dy|x)
=
\frac{d\pi}{d\mu}(x,dy)
=
\frac{d\pi}{d(\mu \otimes \nu)}(x,y)\, \nu(dy)
\Big/
\int_Y \frac{d\pi}{d(\mu \otimes \nu)}(x,y')\, \nu(dy').
\]

Plugging the factorization,
\[
\pi(dy|x)
=
\frac{u(x)\, e^{\alpha(x,y)}\, v(y)}{\int_Y u(x)\, e^{\alpha(x,y')}\, v(y') \nu(dy')}
\, \nu(dy).
\]

The factor $u(x)$ cancels, yielding
\[
\pi(dy|x)
=
\frac{e^{\alpha(x,y) + \log v(y)}}{\int_Y e^{\alpha(x,y') + \log v(y')} \nu(dy')}
\, \nu(dy).
\]

This is exactly a softmax with respect to the function
\[
y \mapsto \alpha(x,y) + \log v(y)
=
\log \frac{d\pi}{d(\mu \otimes \nu)}(x,y) - \log u(x).
\]

Since adding a constant in $y$ does not change softmax, we obtain
\[
\pi(dy|x)
=
\mathrm{Softmax}_\nu\!\left(\log \frac{d\pi}{d(\mu \otimes \nu)}(x,\cdot)\right)(dy|x).
\]
\end{proof}

\section{The Space of Coupling Systems}\label{app:C}

\begin{theorem}\label{prop_afin}
The affine space \(\mathcal{A}\) is closed.
\end{theorem}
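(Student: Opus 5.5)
I read $\mathcal A$ here as the affine set $F_0+\mathcal F$ of maps whose values have marginals $\mu$ and $\nu$, that is, $\mathcal A=\{G\in\mathcal C:\operatorname{proj}_X G(\mu,\nu)=\mu,\ \operatorname{proj}_Y G(\mu,\nu)=\nu\ \forall(\mu,\nu)\}$. The plan is to prove that the linear part $\mathcal F$ is closed in $\mathcal C$, and then transport closedness to $\mathcal A$ by the translation $G\mapsto F_0+G$.

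I would fix the topology on $\mathcal C$ first, since $\mathcal M(X\times Y)$ with the weak* topology is not normable. I take uniform convergence to mean uniform convergence for the uniform structure generated by the seminorms $m\mapsto|\int f\,dm|$, $f\in C(X\times Y)$. Concretely, $G_\lambda\to G$ means that for every $f$,
\[
\sup_{(\mu,\nu)}\Bigl|\int f\,dG_\lambda(\mu,\nu)-\int f\,dG(\mu,\nu)\Bigr|\to 0 .
\]
I will use nets, to avoid assuming metrizability. The only consequence needed is that uniform convergence implies pointwise weak* convergence: $G_\lambda(\mu,\nu)\rightharpoonup^* G(\mu,\nu)$ for every fixed $(\mu,\nu)$.

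\textbf{Step 1 ($F_0\in\mathcal C$).} I need $(\mu,\nu)\mapsto\mu\otimes\nu$ to be weak*-continuous. For product test functions $\varphi(x)\psi(y)$ this is immediate, since $\int\varphi\otimes\psi\,d(\mu\otimes\nu)=\int\varphi\,d\mu\int\psi\,d\nu$. By Stone--Weierstrass, finite sums of such products are uniformly dense in $C(X\times Y)$, because $X,Y$ are compact. All the measures involved are probabilities, so they have total variation at most $1$, and an $\varepsilon/3$ argument extends continuity from product test functions to all of $C(X\times Y)$.

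\textbf{Step 2 ($\mathcal F$ is closed).} The projection $\operatorname{proj}_X:\mathcal M(X\times Y)\to\mathcal M(X)$ is linear and weak*-continuous. This is because $\int\varphi\,d(\operatorname{proj}_X m)=\int\varphi\circ\mathrm{pr}_X\,dm$, and $\varphi\circ\mathrm{pr}_X\in C(X\times Y)$; the same holds for $\operatorname{proj}_Y$. Take a net $G_\lambda\in\mathcal F$ with $G_\lambda\to G\in\mathcal C$. For each fixed $(\mu,\nu)$, $G_\lambda(\mu,\nu)\rightharpoonup^*G(\mu,\nu)$, so $\operatorname{proj}_XG_\lambda(\mu,\nu)\rightharpoonup^*\operatorname{proj}_XG(\mu,\nu)$. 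The left-hand side is identically $0$, and weak* limits are unique because the topology is Hausdorff, so $\operatorname{proj}_XG(\mu,\nu)=0$. The same argument gives $\operatorname{proj}_YG(\mu,\nu)=0$, hence $G\in\mathcal F$. Linearity of $\mathcal F$ is clear from linearity of the projections.

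\textbf{Step 3 (closedness of $\mathcal A$).} Translation $\tau(G)=F_0+G$ is a homeomorphism of $\mathcal C$ for the uniform topology, with inverse $G\mapsto G-F_0$. This holds because for each seminorm $\sup_{(\mu,\nu)}|\int f\,d(\cdot)|$ the difference of $\tau(G)$ and $\tau(H)$ equals the difference of $G$ and $H$. Therefore $\mathcal A=\tau(\mathcal F)$ is closed. Equivalently, one can rerun Step 2 with target $\mu$ instead of $0$. Here one uses that $\operatorname{proj}_XF_0(\mu,\nu)=\mu$, which holds because $\nu(Y)=1$.

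\textbf{Main obstacle.} The analysis itself is routine; the delicate point is the topology. "Uniform convergence" into a non-metrizable locally convex space has to be read through the family of seminorms, and the membership of the limit in $\mathcal C$ (continuity of $G$) should be part of the hypothesis rather than something proved. I would state this convention explicitly. With it, only pointwise weak* convergence and continuity of the projections are needed, so the argument does not depend on the finer choice of uniform structure.
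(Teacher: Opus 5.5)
Your proposal is correct and follows the paper's proof: you reduce to the linear part $\mathcal F$ via the translation homeomorphism $G\mapsto F_0+G$, then use that uniform convergence implies pointwise weak* convergence and test against $\psi\circ\mathrm{pr}_X$ (and $\psi\circ\mathrm{pr}_Y$) to show that the marginals of the limit vanish. Your additions are harmless refinements that make the argument fully rigorous: you check that $F_0\in\mathcal C$, which the paper assumes silently, and you use nets instead of sequences. The latter matters because the uniform weak* topology on $\mathcal C$ is not metrizable in general, so the paper's sequence argument, as written, only gives sequential closedness, although it carries over to nets word for word.
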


\begin{proof} 
As the translation is an homeomorphism, it suffices to show that $\mathcal F$ is closed. Let \(\{G_n\}_{n \in \mathbb{N}} \subset \mathcal{F}\) be a sequence converging uniformly to some \(G \in \mathcal{C}\) in the weak* sense. This means that for every \(\varphi \in C(X \times Y)\),

\[
\sup_{(\mu, \nu) \in \mathcal{P}(X) \times \mathcal{P}(Y)} \left| \int_{X \times Y} \varphi \, d(G_n(\mu, \nu) - G(\mu, \nu)) \right| \to 0.
\]

In particular, for each fixed \((\mu, \nu)\), we have \(G_n(\mu, \nu) \rightharpoonup^* G(\mu, \nu)\) in \(\mathcal{M}(X \times Y)\).

We need to show that \(G \in \mathcal{F}\), i.e., that for every \((\mu, \nu)\),

\[
\operatorname{proj}_X G(\mu, \nu) = 0 \quad \text{and} \quad \operatorname{proj}_Y G(\mu, \nu) = 0.
\]

Fix \((\mu, \nu)\). For any \(\psi \in C(X)\), consider the function \(\tilde{\psi}(x, y) = \psi(x) \in C(X \times Y)\). Then:

\[
\int_X \psi \, d(\operatorname{proj}_X G(\mu, \nu)) = \int_{X \times Y} \tilde{\psi} \, dG(\mu, \nu).
\]

Since \(G_n(\mu, \nu) \rightharpoonup^* G(\mu, \nu)\), we have:

\[
\int_{X \times Y} \tilde{\psi} \, dG_n(\mu, \nu) \to \int_{X \times Y} \tilde{\psi} \, dG(\mu, \nu).
\]

But because \(G_n \in \mathcal{F}\), we know that \(\operatorname{proj}_X G_n(\mu, \nu) = 0\), so the left-hand side equals 0 for all \(n\). Therefore,

\[
\int_{X \times Y} \tilde{\psi} \, dG(\mu, \nu) = 0.
\]

Since \(\psi \in C(X)\) was arbitrary, this implies \(\operatorname{proj}_X G(\mu, \nu) = 0\). The same argument applies to the second marginal. Thus, \(G \in \mathcal{F}\).
\end{proof}

\begin{theorem}\label{prop_convexo}
The convex set \(\mathcal{C}_P\) is closed.
\end{theorem}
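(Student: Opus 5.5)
We show closedness by taking a sequence $F_n \in \mathcal{C}_P$ converging uniformly (in the weak* sense) to some $F \in \mathcal{C}$, and proving $F(\mu,\nu) \in \mathcal{P}(X \times Y)$ for every pair $(\mu,\nu)$. As in the proof of Theorem~\ref{prop_afin}, uniform convergence implies pointwise weak* convergence $F_n(\mu,\nu) \rightharpoonup^* F(\mu,\nu)$ in $\mathcal{M}(X\times Y)$ for each fixed $(\mu,\nu)$. So it suffices to show that $\mathcal{P}(X\times Y)$ is weak* closed in $\mathcal{M}(X\times Y)$. Convexity is immediate: a pointwise convex combination of probability-valued continuous maps is again continuous and probability-valued.

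Fix $(\mu,\nu)$ and write $\pi_n = F_n(\mu,\nu)$ and $\pi = F(\mu,\nu)$. Since $X\times Y$ is compact, the Riesz representation theorem identifies $\mathcal{M}(X\times Y)$ with the dual of $C(X\times Y)$. A signed Radon measure $\pi$ is a probability measure if and only if two conditions hold:
\begin{itemize}
\item[(i)] $\int \varphi\, d\pi \ge 0$ for every $\varphi \in C(X\times Y)$ with $\varphi \ge 0$;
\item[(ii)] $\int 1\, d\pi = 1$.
\end{itemize}
Both are preserved under weak* limits:
\begin{itemize}
\item[(i)] for $\varphi\ge 0$ we have $\int\varphi\,d\pi = \lim_n \int \varphi\, d\pi_n \ge 0$;
\item[(ii)] the constant $1$ lies in $C(X\times Y)$ because the space is compact, so $\pi(X\times Y) = \lim_n \pi_n(X\times Y) = 1$.
\end{itemize}
Positivity of the functional then gives, again via Riesz, that $\pi$ is a nonnegative Radon measure, and with total mass one it is a probability measure.

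The only point needing care is the step from a positive functional to a nonnegative measure. This is exactly where compactness of $X\times Y$ is used. It guarantees that $1 \in C(X\times Y)$, so no mass can escape to infinity. It also ensures that the weak* topology from $C(X\times Y)$ (rather than from $C_0$) is the relevant one. Once this is noted, the argument is routine, and it gives $F \in \mathcal{C}_P$, so $\mathcal{C}_P$ is closed.
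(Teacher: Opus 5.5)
Your proof is correct and follows essentially the same route as the paper: you reduce uniform convergence to pointwise weak* convergence of $F_n(\mu,\nu)$, pass nonnegativity to the limit using nonnegative test functions $\varphi\in C(X\times Y)$, and pass total mass to the limit using $\varphi\equiv 1$. Your extra remarks, namely the Riesz step that turns a positive functional into a nonnegative measure and the convexity check, only make explicit what the paper leaves implicit.
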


\begin{proof}
\textbf{Closedness:} Let \(\{F_n\}_{n \in \mathbb{N}} \subset \mathcal{C}_P\) converge uniformly to \(F \in \mathcal{C}\) in the weak* sense. We need to show that \(F(\mu, \nu) \in \mathcal{P}(X \times Y)\) for all \((\mu, \nu)\).

Fix \((\mu, \nu)\). For any non-negative \(\varphi \in C(X \times Y)\), we have:
\[
\int_{X \times Y} \varphi \, dF_n(\mu, \nu) \ge 0.
\]
By weak* convergence,
\[
\int_{X \times Y} \varphi \, dF(\mu, \nu) = \lim_{n \to \infty} \int_{X \times Y} \varphi \, dF_n(\mu, \nu) \ge 0,
\]
so \(F(\mu, \nu)\) is a non-negative measure.

Now take \(\varphi \equiv 1\). Then:
\[
F(\mu, \nu)(X \times Y) = \int_{X \times Y} 1 \, dF(\mu, \nu) = \lim_{n \to \infty} \int_{X \times Y} 1 \, dF_n(\mu, \nu) = \lim_{n \to \infty} 1 = 1.
\]

Thus, \(F(\mu, \nu)\) is a probability measure for each \((\mu, \nu)\), so \(F \in \mathcal{C}_P\).
\end{proof}

\section{Sinkhorn Operator}\label{app:D}
Throughout this section we will use the next stability properties of the Sinkhorn Operator:

\cite{nutz2021quantitative}
    The Sinkhorn Operator is Lipschitz with respect to the cost function in wasserstein distance:
 \[
W_1\Big(S_c(\mu_,\nu),\,S_{c'}(\mu,\nu)\Big)
\;\le\; L_\varepsilon \|c-c'\|_\infty,
\]
where $L_\varepsilon$ is independent of $\mu,\nu$ .

\cite{nutz2022stability}
    Let $\mu_n \rightharpoonup \mu \in \cP(X)$ and $\nu_n \rightharpoonup \nu \in \cP(Y)$ then the density of the solution of the optimal transport problem converges uniformly $\frac{d\pi^\varepsilon}{d(\mu_n \otimes \nu_n)} \rightrightarrows \frac{d\pi^\varepsilon}{d(\mu \otimes \nu)}$.

\begin{lemma}[Approximation by Absolutely Continuous Plans]\label{lema_aprox_abs_cont}
Let \(X\) and \(Y\) be compact metric spaces, and let \(\pi \in \Pi(\mu, \nu)\) be a transport plan. Then there exists a sequence of measures \(\pi_k \in \Pi(\mu, \nu)\), \(k \in \mathbb{N}\), such that for all \(k\):
\begin{enumerate}
    \item \(\pi_k \ll \mu \otimes \nu\) (i.e., absolutely continuous with respect to the product measure),
    \item the density \(\frac{d\pi_k}{d(\mu \otimes \nu)}\) is bounded,
    \item \(\pi_k \rightharpoonup \pi\) as \(k \to \infty\) (weak convergence).
\end{enumerate}
Moreover, if \(\pi\) is a probability measure, then \(\pi_k\) are also probability measures.
\end{lemma}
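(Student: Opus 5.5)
The plan is to build $\pi_k$ by a block (conditional-expectation) construction on finite partitions, which preserves both marginals exactly and yields bounded densities.

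\textbf{Partitions.} Since $X$ and $Y$ are compact metric spaces, for each $k$ we choose finite Borel partitions $\{A_i^k\}_i$ of $X$ and $\{B_j^k\}_j$ of $Y$ whose cells all have diameter less than $1/k$. These exist by covering with finitely many balls of radius $1/(2k)$ and disjointifying.

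\textbf{Definition of $\pi_k$.} Set
\[
\pi_k(dx,dy)=\sum_{i,j:\ \mu(A_i^k)\nu(B_j^k)>0}\frac{\pi(A_i^k\times B_j^k)}{\mu(A_i^k)\,\nu(B_j^k)}\,\mathbf 1_{A_i^k}(x)\mathbf 1_{B_j^k}(y)\,\mu(dx)\,\nu(dy).
\]
On each rectangle, $\pi_k$ replaces $\pi$ by the product of the normalized restrictions of $\mu$ and $\nu$, carrying the same mass.

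\textbf{Absolute continuity and bounded density.} By construction $\pi_k\ll\mu\otimes\nu$. The density is a finite simple function, hence bounded. Its values are finite because $\pi(A_i^k\times B_j^k)\le \min(\mu(A_i^k),\nu(B_j^k))$.

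\textbf{Marginals.} Take $E\subset X$ Borel. Then
\[
\pi_k(E\times Y)=\sum_{i,j}\pi(A_i^k\times B_j^k)\,\frac{\mu(E\cap A_i^k)}{\mu(A_i^k)}
=\sum_i \mu(A_i^k)\,\frac{\mu(E\cap A_i^k)}{\mu(A_i^k)}=\mu(E).
\]
The key identity is $\sum_j\pi(A_i^k\times B_j^k)=\pi(A_i^k\times Y)=\mu(A_i^k)$. Rectangles dropped because $\mu(A_i^k)=0$ or $\nu(B_j^k)=0$ carry zero $\pi$-mass, since their $\pi$-mass is bounded by a marginal mass, so nothing is lost. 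The $Y$-marginal is symmetric. Total mass is $1$, which settles the ``moreover'' clause.

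\textbf{Weak convergence.} Fix $\varphi\in C(X\times Y)$. It is uniformly continuous by compactness, with modulus $\omega$. Both $\pi$ and $\pi_k$ assign the same mass to each rectangle $A_i^k\times B_j^k$, and each rectangle has diameter at most $2/k$ in the product metric. Replacing $\varphi$ on each rectangle by its value at a fixed point of that rectangle costs at most $\omega(2/k)$ under either measure, so
\[
\Big|\int\varphi\,d\pi_k-\int\varphi\,d\pi\Big|\le 2\,\omega(2/k)\to0 .
\]

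\textbf{Main obstacle.} The delicate point is choosing a construction that keeps \emph{both} marginals exact while becoming absolutely continuous. Naive mollification or mixing with $\mu\otimes\nu$ either breaks the marginals or fails to converge. The block construction resolves this because exactness follows from the cell-mass identity above, and the only care needed is the handling of null cells.
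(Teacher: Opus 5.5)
Your proposal is correct and follows essentially the same route as the paper. Both use the block density $f_k=\sum_{i,j}\frac{\pi(A_i\times B_j)}{\mu(A_i)\nu(B_j)}\mathbf{1}_{A_i\times B_j}$ on a fine product partition, get exact marginals from $\sum_j\pi(A_i\times B_j)=\mu(A_i)$ (with null cells carrying no $\pi$-mass), and get weak convergence from uniform continuity plus the fact that $\pi$ and $\pi_k$ give each rectangle the same mass.

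One aside about your closing remark: the mixture $(1-\delta)\pi+\delta\,\mu\otimes\nu$ does preserve both marginals and does converge to $\pi$ (the paper uses it in its Regularization Lemma). Its real defect is that it keeps any singular part of $\pi$, so it cannot give absolute continuity. This does not affect your proof.
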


\begin{proof}
The proof proceeds in several constructive steps.

\noindent\textbf{Step 1: Construction of partitions of \(X \times Y\).}

Since \(X\) and \(Y\) are compact, \(X \times Y\) is compact. For each \(k \in \mathbb{N}\), consider a finite partition \(P_k\) of \(X \times Y\) into measurable rectangles of the form \(A_i \times B_j\), where \(\{A_i\}_{i=1}^{N_k}\) is a measurable partition of \(X\) and \(\{B_j\}_{j=1}^{M_k}\) is a measurable partition of \(Y\), such that the diameter of each rectangle \(A_i \times B_j\) satisfies \(\text{diam}(A_i \times B_j) < \frac{1}{k}\).

This partition can be constructed as follows: consider on \(X \times Y\) the metric given by \(d((x,y),(\bar{x},\bar{y})) = d_X(x,\bar{x}) + d_Y(y,\bar{y})\). Then:
\begin{enumerate}
    \item For each \(k \in \mathbb{N}\), consider coverings of \(X\) and \(Y\) by closed balls of radius \(\frac{1}{4k}\).
    \item By compactness, extract finite subcoverings:
    \[
    X \subset \bigcup_{i=1}^{N_k} \overline{B}_X(x_i, \tfrac{1}{4k}), \quad 
    Y \subset \bigcup_{j=1}^{M_k} \overline{B}_Y(y_j, \tfrac{1}{4k}).
    \]
    \item Disjointify the coverings by defining inductively:
    \begin{align*}
    A_1 &= \overline{B}_X(x_1, \tfrac{1}{4k}), \\
    A_i &= \overline{B}_X(x_i, \tfrac{1}{4k}) \setminus \bigcup_{j=1}^{i-1} A_j, \quad i = 2,\ldots,N_k, \\
    B_1 &= \overline{B}_Y(y_1, \tfrac{1}{4k}), \\
    B_j &= \overline{B}_Y(y_j, \tfrac{1}{4k}) \setminus \bigcup_{\ell=1}^{j-1} B_\ell, \quad j = 2,\ldots,M_k.
    \end{align*}
    \item The rectangles \(A_i \times B_j\) form the partition \(P_k\) of \(X \times Y\). Note that \(\text{diam}_X(B(x,r)) \leq 2r\) and \(\text{diam}(A \times B) \leq \text{diam}_X(A) + \text{diam}_Y(B)\), from which it follows that \(\text{diam}(A_i \times B_j) < \frac{1}{k}\).
\end{enumerate}

\noindent\textbf{Step 2: Definition of \(\pi_k\).}

For each \(k \in \mathbb{N}\), define the measure \(\pi_k\) via its density with respect to \(\mu \otimes \nu\). Specifically, for each rectangle \(A_i \times B_j\) in \(P_k\), define:
\[
f_k(x,y) = \sum_{i=1}^{N_k} \sum_{j=1}^{M_k} \frac{\pi(A_i \times B_j)}{\mu(A_i)\nu(B_j)} \mathbf{1}_{A_i \times B_j}(x,y) \quad \text{for } (x,y) \in X \times Y,
\]
where we adopt the convention that if \(\mu(A_i) = 0\) or \(\nu(B_j) = 0\), then the corresponding term is zero (which is consistent since in that case \(\pi(A_i \times B_j) = 0\) because \(\pi\) is a transport plan).

Define then:
\[
d\pi_k(x,y) = f_k(x,y) \, d(\mu \otimes \nu)(x,y).
\]
Clearly, \(\pi_k\) is absolutely continuous with respect to \(\mu \otimes \nu\).

Note that if \(\pi\) is a probability measure, \(\pi_k\) is also a probability measure:
\begin{align*}
\pi_k(X \times Y) &= \int_{X \times Y} f_k(x,y) \, d(\mu \otimes \nu)(x,y) \\
&= \int_{X \times Y} \sum_{i=1}^{N_k} \sum_{j=1}^{M_k} \frac{\pi(A_i \times B_j)}{\mu(A_i)\nu(B_j)} \mathbf{1}_{A_i \times B_j}(x,y) \, d(\mu \otimes \nu)(x,y) \\
&= \sum_{i=1}^{N_k} \sum_{j=1}^{M_k} \frac{\pi(A_i \times B_j)}{\mu(A_i)\nu(B_j)} \int_{X \times Y} \mathbf{1}_{A_i \times B_j}(x,y) \, d(\mu \otimes \nu)(x,y).
\end{align*}
The integral \(\int_{X \times Y} \mathbf{1}_{A_i \times B_j}(x,y) \, d(\mu \otimes \nu)(x,y)\) equals \((\mu \otimes \nu)(A_i \times B_j) = \mu(A_i)\nu(B_j)\). Thus,
\[
\pi_k(X \times Y) = \sum_{i=1}^{N_k} \sum_{j=1}^{M_k} \frac{\pi(A_i \times B_j)}{\mu(A_i)\nu(B_j)} \cdot \mu(A_i)\nu(B_j) = \sum_{i=1}^{N_k} \sum_{j=1}^{M_k} \pi(A_i \times B_j).
\]
Since \(\{A_i \times B_j\}_{i,j}\) is a partition of \(X \times Y\), the sum above is \(\pi(X \times Y) = 1\). Therefore, \(\pi_k(X \times Y) = 1\), and \(\pi_k\) is a probability measure.

\noindent\textbf{Step 3: Verification that \(\pi_k\) is a transport plan.}

We must show that the marginals of \(\pi_k\) are exactly \(\mu\) and \(\nu\). Compute the marginal on \(X\) (the other is completely analogous). For any measurable set \(A \subset X\):
\begin{align*}
\pi_k(A \times Y) &= \int_{A \times Y} f_k(x,y) \, d(\mu \otimes \nu)(x,y) \\
&= \sum_{i=1}^{N_k} \sum_{j=1}^{M_k} \frac{\pi(A_i \times B_j)}{\mu(A_i)\nu(B_j)} \int_{A \times Y} \mathbf{1}_{A_i \times B_j}(x,y) \, d(\mu \otimes \nu)(x,y).
\end{align*}
Note that:
\[
\int_{A \times Y} \mathbf{1}_{A_i \times B_j}(x,y) \, d(\mu \otimes \nu)(x,y) = \mu(A \cap A_i)\nu(B_j),
\]
since \(\mathbf{1}_{A_i \times B_j}(x,y) = 1\) if and only if \(x \in A_i\) and \(y \in B_j\). Therefore:
\[
\pi_k(A \times Y) = \sum_{i=1}^{N_k} \sum_{j=1}^{M_k} \frac{\pi(A_i \times B_j)}{\mu(A_i)\nu(B_j)} \mu(A \cap A_i)\nu(B_j)
= \sum_{i=1}^{N_k} \sum_{j=1}^{M_k} \pi(A_i \times B_j) \frac{\mu(A \cap A_i)}{\mu(A_i)}.
\]
For each fixed \(i\), we have \(\sum_{j=1}^{M_k} \pi(A_i \times B_j) = \pi(A_i \times Y) = \mu(A_i)\), since the marginal of \(\pi\) on \(X\) is \(\mu\). Hence:
\[
\pi_k(A \times Y) = \sum_{i=1}^{N_k} \frac{\mu(A \cap A_i)}{\mu(A_i)} \mu(A_i) = \sum_{i=1}^{N_k} \mu(A \cap A_i) = \mu(A),
\]
where the last equality follows because the \(A_i\) form a partition of \(X\). Therefore, the first marginal of \(\pi_k\) is \(\mu\).

\noindent\textbf{Step 4: Weak convergence of \(\pi_k\) to \(\pi\).}

Let \(f: X \times Y \to \mathbb{R}\) be continuous. Since \(X \times Y\) is compact, \(f\) is uniformly continuous. Therefore, for every \(\delta > 0\), there exists \(k_0 \in \mathbb{N}\) such that for all \(k \geq k_0\) and for all \((x,y), (x', y')\) in the same rectangle \(A_i \times B_j\) of \(P_k\), we have:
\[
|f(x,y) - f(x', y')| < \delta. \tag{3.1}
\]
Now, for each rectangle \(A_i \times B_j\), choose a point \((x_i, y_j) \in A_i \times B_j\) and consider the integral:
\begin{align*} the 
\int f \, d\pi_k &= \int_{X \times Y} f(x,y) f_k(x,y) \, d(\mu \otimes \nu)(x,y) \tag{3.2} \\
&= \sum_{i=1}^{N_k} \sum_{j=1}^{M_k} \frac{\pi(A_i \times B_j)}{\mu(A_i)\nu(B_j)} \int_{A_i \times B_j} f(x,y) \, d(\mu \otimes \nu)(x,y).
\end{align*}
For each term in the sum, we have:
\begin{align*}
&\left| \int_{A_i \times B_j} f(x,y) \, d(\mu \otimes \nu)(x,y) - f(x_i, y_j) \mu(A_i)\nu(B_j) \right| \\
&\leq \int_{A_i \times B_j} |f(x,y) - f(x_i, y_j)| \, d(\mu \otimes \nu)(x,y) \leq \delta \mu(A_i)\nu(B_j).
\end{align*}
Therefore:
\[
\left| \frac{1}{\mu(A_i)\nu(B_j)} \int_{A_i \times B_j} (f(x,y) - f(x_i, y_j)) \, d(\mu \otimes \nu)(x,y) \right| \leq \delta. \tag{3.3}
\]
Combining (3.2) and (3.3):
\[
\left| \int f \, d\pi_k - \sum_{i=1}^{N_k} \sum_{j=1}^{M_k} \pi(A_i \times B_j) f(x_i, y_j) \right| \leq \sum_{i=1}^{N_k} \sum_{j=1}^{M_k} \pi(A_i \times B_j) \delta = \pi(X \times Y) \delta. \tag{3.4}
\]
On the other hand, consider the integral with respect to \(\pi\) and using (3.1):
\begin{align*}
&\left| \int f \, d\pi - \sum_{i=1}^{N_k} \sum_{j=1}^{M_k} f(x_i, y_j) \pi(A_i \times B_j) \right| \\
&\leq \sum_{i=1}^{N_k} \sum_{j=1}^{M_k} \int_{A_i \times B_j} |f(x,y) - f(x_i, y_j)| \, d\pi(x,y) \\
&\leq \pi(X \times Y) \delta. \tag{3.5}
\end{align*}
Combining inequalities (3.4) and (3.5) yields:
\[
\left| \int f \, d\pi_k - \int f \, d\pi \right| \leq 2\delta \pi(X \times Y).
\]
Since \(\delta > 0\) is arbitrary, we conclude that:
\[
\lim_{k \to \infty} \int f \, d\pi_k = \int f \, d\pi.
\]
Therefore, \(\pi_k \rightharpoonup \pi\) weakly.
\end{proof}

\begin{lemma}[Entropic Representation]\label{lema_representation_entropic}
Fix \((\mu, \nu)\) probability measures. Let \(\gamma \in \Pi(\mu, \nu)\) with density \(\rho := \frac{d\gamma}{d(\mu \otimes \nu)} > 0\) on \(\operatorname{supp}(\mu) \times \operatorname{supp}(\nu)\). Define \(c(x,y) := -\epsilon \log \rho(x,y)\). Then \(\gamma\) is the unique minimizer of the functional
\[
J(\gamma') := \int_{X \times Y} c \, d\gamma' + \epsilon \KL(\gamma' \parallel \mu \otimes \nu)
\]
among \(\gamma' \in \Pi(\mu, \nu)\).
\end{lemma}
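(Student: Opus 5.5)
The plan is to compute the functional $J$ explicitly in terms of the relative entropy with respect to $\gamma$ itself. Uniqueness and minimality then follow from Gibbs' inequality, and no duality or potential theory is needed.

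Let $\gamma' \in \Pi(\mu,\nu)$ with $J(\gamma')<\infty$. Then $\KL(\gamma'\|\mu\otimes\nu)<\infty$, so $\gamma'\ll\mu\otimes\nu$ with density $\rho'$. Since $\gamma'$ is concentrated on $\operatorname{supp}(\mu)\times\operatorname{supp}(\nu)$, where $\rho>0$, we also get $\gamma'\ll\gamma$ with $\frac{d\gamma'}{d\gamma}=\rho'/\rho$.

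The key identity, to be verified pointwise and then integrated against $\gamma'$, is
\[
c\,\rho' + \epsilon\,\rho'\log\rho' = \epsilon\,\rho'\log\frac{\rho'}{\rho},
\]
which follows from $c=-\epsilon\log\rho$. Integrating against $\mu\otimes\nu$ gives
\[
J(\gamma') = \epsilon\,\KL(\gamma'\|\gamma),
\]
and in particular $J(\gamma)=0$. Gibbs' inequality gives $\KL(\gamma'\|\gamma)\ge 0$, with equality iff $\gamma'=\gamma$. Hence $\gamma$ is the unique minimizer.

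The main obstacle is integrability in splitting $\int c\,d\gamma' + \epsilon\int \log\rho'\,d\gamma'$. The cost $c$ may be unbounded, either above where $\rho\to 0$ on the support or below where $\rho$ is large, so $\int c\,d\gamma'$ could be $\pm\infty$ and the sum ill-defined. I would handle this as follows.

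First, check that $\int c\,d\gamma'$ is well-defined. Its positive part is controlled via $\int \log\rho\,d\gamma' \le \KL(\gamma'\|\mu\otimes\nu)$, which is Gibbs applied to $\gamma'$ against the probability measure $\gamma$: indeed $\int\log(\rho'/\rho)\,d\gamma'\ge 0$. Second, interpret $J$ as $\epsilon\int\log(\rho'/\rho)\,d\gamma'$, whose integrand has an integrable negative part, since $x\log(x)$ is bounded below when written against $\gamma$.

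Alternatively, if the paper's intended setting has $\rho$ continuous and bounded away from $0$ and $\infty$, as arises from the approximate selections in Theorem~\ref{teo_aprox_uniforme}, then $c$ is bounded and all manipulations are immediate. I would state the proof first under that boundedness and remark on the general extension.
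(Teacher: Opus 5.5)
Your proposal is correct and is essentially the paper's proof. The paper also discards singular $\gamma'$ because the KL term is infinite, writes $J(\gamma')=\epsilon\int\rho'\log(\rho'/\rho)\,d(\mu\otimes\nu)=\epsilon\,\KL(\gamma'\|\gamma)\ge 0$, and concludes from equality iff $\rho'=\rho$ a.e., without discussing integrability. Your extra integrability remarks are a reasonable addition; in the paper's applications $\rho$ is continuous and bounded below by $\eta$, so $c$ is bounded. One wording slip: the bound $\int\log\rho\,d\gamma'\le\KL(\gamma'\|\mu\otimes\nu)$ rules out $\int c\,d\gamma'=-\infty$, so it controls the negative part of $c$, not its positive part.
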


\begin{proof}
It is convenient to write everything with respect to the reference measure \(\mu \otimes \nu\): if \(\gamma' \ll \mu \otimes \nu\) we write \(\rho' = \frac{d\gamma'}{d(\mu \otimes \nu)} \geq 0\). (If \(\gamma'\) had a singular part with respect to \(\mu \otimes \nu\), then \(\KL(\gamma' \parallel \mu \otimes \nu) = +\infty\) and it cannot be a minimizer; thus we only consider measures with densities \(\rho'\).)

Now compute \(J(\gamma')\):
\begin{align*}
J(\gamma') &= \int (-\epsilon \log \rho) \rho' \, d(\mu \otimes \nu) + \epsilon \int \rho' \log \rho' \, d(\mu \otimes \nu) \\
&= \epsilon \int \rho' \log \frac{\rho'}{\rho} \, d(\mu \otimes \nu) = \epsilon \int \frac{\rho'}{\rho} \log \frac{\rho'}{\rho} \, d\gamma \\
&= \epsilon \KL(\gamma' \parallel \gamma) \geq 0.
\end{align*}
We conclude that \(J(\gamma') = 0\) if and only if \(\rho' = \rho\) \(\mu \otimes \nu\)-almost everywhere. This completes the proof.
\end{proof}

\begin{lemma}[Gamma Convergence of Entropic Optimal Transport]\label{prop_gamma_convergence}
Let \(X\) and \(Y\) be compact metric spaces, \(\mu \in \mathcal{P}(X)\) and \(\nu \in \mathcal{P}(Y)\) fixed probability measures, and \(\varepsilon > 0\) a regularization parameter. Consider a sequence of cost functions \(c_n: X \times Y \to \mathbb{R}\) and a cost function \(c: X \times Y \to \mathbb{R}\) such that:
\begin{enumerate}
    \item \(c_n\) are continuous for all \(n \in \mathbb{N}\),
    \item There exists \(K > 0\) such that \(|c_n(x,y)| \leq K\) for all \(n \in \mathbb{N}\) and \((x,y) \in X \times Y\) (equiboundedness),
    \item \(c_n \to c\) in \(L^1(\mu \otimes \nu)\).
\end{enumerate}
For each \(n\), define the functional \(J_n: \Pi(\mu, \nu) \to \mathbb{R}\) by:
\[
J_n(\pi) = \int_{X \times Y} c_n \, d\pi + \varepsilon \KL(\pi \parallel \mu \otimes \nu),
\]
and define \(J: \Pi(\mu, \nu) \to \mathbb{R} \cup \{+\infty\}\) by:
\[
J(\pi) = \int_{X \times Y} c \, d\pi + \varepsilon \KL(\pi \parallel \mu \otimes \nu).
\]
Then, \(J_n \overset{\Gamma}{\to} J\) in \(\Pi(\mu, \nu)\) with the weak* topology.
\end{lemma}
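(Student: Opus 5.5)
We prove the two defining inequalities of $\Gamma$-convergence on $\Pi(\mu,\nu)$ with the weak* topology: the liminf inequality and the existence of recovery sequences. Two preliminary remarks are used throughout.

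First, since $c_n\to c$ in $L^1(\mu\otimes\nu)$ and $|c_n|\le K$, a subsequence converges $\mu\otimes\nu$-a.e. Hence $|c|\le K$ almost everywhere, and $J$ is well defined on measures absolutely continuous with respect to $\mu\otimes\nu$. Every $\pi$ with $\KL(\pi\|\mu\otimes\nu)<\infty$ is of this form, so we may modify $c$ on a null set and assume $|c|\le K$ everywhere.

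Second, $\Pi(\mu,\nu)$ is weak* compact because $X\times Y$ is compact. Also, $\pi\mapsto \KL(\pi\|\mu\otimes\nu)$ is weak* lower semicontinuous, by the Donsker--Varadhan dual formula as a supremum of continuous linear functionals.

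\textbf{Recovery sequence.} Fix $\pi\in\Pi(\mu,\nu)$ and take the constant sequence $\pi_n=\pi$. If $\KL(\pi\|\mu\otimes\nu)=\infty$, then $J(\pi)=\infty$ and there is nothing to prove. Otherwise $\pi=\rho\,(\mu\otimes\nu)$, and
\[
\Bigl|\int c_n\,d\pi-\int c\,d\pi\Bigr| = \Bigl|\int (c_n-c)\rho\,d(\mu\otimes\nu)\Bigr|.
\]
This is the one non-routine point, because $\rho$ need not be bounded and $L^1$ convergence alone is not enough. We split on the set $\{\rho\le M\}$:
\[
\int |c_n-c|\rho \le M\|c_n-c\|_{L^1} + 2K\,\pi(\{\rho>M\}).
\]
The second term is small uniformly in $n$ once $M$ is large, since $\pi(\{\rho>M\})\to 0$ as $M\to\infty$. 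Letting first $n\to\infty$ and then $M\to\infty$ gives $J_n(\pi)\to J(\pi)$, and in particular $\limsup_n J_n(\pi)\le J(\pi)$.

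\textbf{Liminf inequality.} Let $\pi_n\rightharpoonup\pi$ in $\Pi(\mu,\nu)$. We may assume $\liminf_n J_n(\pi_n)<\infty$, pass to a subsequence attaining the liminf, and suppose $J_n(\pi_n)\le C$ along it. Since $c_n\ge -K$, this gives $\KL(\pi_n\|\mu\otimes\nu)\le (C+K)/\varepsilon$. By lower semicontinuity of $\KL$, also $\KL(\pi\|\mu\otimes\nu)\le (C+K)/\varepsilon$. We then write
\[
\int c_n\,d\pi_n = \int (c_n-c)\,d\pi_n + \int c\,d\pi_n ,
\]
and treat the two terms in turn.

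\emph{First term.} With $\rho_n=d\pi_n/d(\mu\otimes\nu)$, the same splitting on $\{\rho_n\le M\}$ gives
\[
\int |c_n-c|\rho_n \le M\|c_n-c\|_{L^1} + 2K\,\pi_n(\{\rho_n>M\}).
\]
The uniform entropy bound controls the tail: $\pi_n(\{\rho_n>M\})\le \KL(\pi_n\|\mu\otimes\nu)/\log M$, up to a constant. Letting $n\to\infty$ and then $M\to\infty$, the first term tends to $0$.

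\emph{Second term.} Here $c$ is only in $L^\infty$, so weak convergence alone does not give $\int c\,d\pi_n\to\int c\,d\pi$. Instead we use uniform integrability. The bound $\sup_n\KL(\pi_n\|\mu\otimes\nu)<\infty$ implies, by de la Vallée-Poussin, that $\{\rho_n\}$ is uniformly integrable in $L^1(\mu\otimes\nu)$. By Dunford--Pettis, $\rho_n\to\rho$ weakly in $L^1$, where the limit is identified with the density of $\pi$ by testing against continuous functions. Since $c\in L^\infty=(L^1)^*$, it follows that $\int c\,\rho_n\to\int c\,\rho$.

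Combining both terms with lower semicontinuity of $\KL$ gives $\liminf_n J_n(\pi_n)\ge J(\pi)$.

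The main obstacle is this interplay between merely $L^1$ convergence of the costs and merely weak convergence of the plans. Both are resolved by converting the uniform entropy bound into uniform integrability of the densities, as above.
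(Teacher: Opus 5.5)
Your proof is correct and follows the same route as the paper. The recovery sequence is constant, and the liminf combines lower semicontinuity of the KL term with convergence of the linear term, obtained from de la Vallée-Poussin uniform integrability and weak $L^1$ convergence of the densities. The differences are only technical, and they favour you: you control $\int (c_n-c)\,d\pi_n$ by truncating the densities at level $M$ rather than through Egorov's theorem, which in the paper silently requires passing to an a.e.\ convergent subsequence. You also make explicit the uniform bound $\KL(\pi_n\|\mu\otimes\nu)\le (C+K)/\varepsilon$ and the Dunford--Pettis identification of the weak limit, both of which the paper's proof uses without stating.
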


\begin{proof}
We prove each part of the Gamma convergence separately.

\noindent\textbf{1. Liminf inequality.} Let \(\pi_n \rightharpoonup \pi\) in \(\Pi(\mu, \nu)\). We want to show that
\[
J(\pi) \leq \liminf_{n \to \infty} J_n(\pi_n).
\]
Without loss of generality, assume \(\liminf_{n \to \infty} J_n(\pi_n) < \infty\), otherwise the inequality is trivial. Then in particular:
\[
\KL(\pi_n \parallel \mu \otimes \nu) < \infty \quad \forall n \in \mathbb{N},
\]
which implies
\[
\pi_n \ll \mu \otimes \nu \quad \forall n \in \mathbb{N}.
\]
Moreover, since the KL divergence is lower semicontinuous:
\[
\KL(\pi \parallel \mu \otimes \nu) \leq \liminf_{n \to \infty} \KL(\pi_n \parallel \mu \otimes \nu) < \infty,
\]
so \(\pi\) is also absolutely continuous with respect to \(\mu \otimes \nu\). Let \(f_n, f\) be the densities of \(\pi_n, \pi\) with respect to the product measure. Using Vallée-Poussin's theorem \cite{brezis2010functional} for the function \(x \log(x)\), we conclude that \(\{f_n\}\) are uniformly integrable.

Since the KL divergence is lower semicontinuous, to prove the liminf inequality it suffices to show convergence of the linear term, i.e.:
\[
\int c_n f_n \, d(\mu \otimes \nu) \to \int c f \, d(\mu \otimes \nu).
\]
By Egorov's theorem, for any \(\delta > 0\), there exists a set \(E_\delta\) such that:
\[
c_n \to c \text{ in } E_\delta \quad \text{and} \quad \mu \otimes \nu(X\times Y \setminus E_\delta) < \delta.
\]
Then,
\[
\int_X c_n f_n \, d(\mu \otimes \nu) = \int_{E_\delta} c_n f_n \, d(\mu \otimes \nu) + \int_{X \times Y  \setminus E_\delta} c_n f_n \, d(\mu \otimes \nu).
\]
The second term can be made arbitrarily small because:
\[
\left| \int_{X \times Y  \setminus E_\delta} c_n f_n \, d(\mu \otimes \nu) \right| \leq \int_{X \times Y  \setminus E_\delta} |c_n| |f_n| \, d(\mu \otimes \nu).
\]
Using that the \(c_n\) are uniformly bounded and the \(f_n\) are uniformly integrable, taking an appropriate \(\delta\) yields:
\[
\left| \int_{X \times Y  \setminus E_\delta} c_n f_n \, d(\mu \otimes \nu) \right| \leq M \epsilon.
\]
For the first term,
\[
\int_{E_\delta} c_n f_n \, d(\mu \otimes \nu) = \int_{E_\delta} (c_n - c) f_n \, d(\mu \otimes \nu) + \int_{E_\delta} c f_n \, d(\mu \otimes \nu).
\]
Since \(\pi_n \rightharpoonup \pi\), \(f_n \rightharpoonup f\) weakly in \(L^1\), so \(\{f_n\}\) are bounded in \(L^1(\mu \otimes \nu)\). Thus,
\[
\int_{E_\delta} (c_n - c) f_n \, d(\mu \otimes \nu) \leq \|f_n\|_{L^1(\mu \otimes \nu)} \|c_n - c\|_{L^\infty(E_\delta)} \to 0,
\]
because \(c_n\) converges uniformly to \(c\) on \(E_\delta\). Meanwhile, since \(c\) is bounded, by weak convergence of \(f_n\):
\[
\int_{E_\delta} c f_n \, d(\mu \otimes \nu) \to \int_{E_\delta} c f \, d(\mu \otimes \nu).
\]
Putting everything together, we obtain:
\[
\liminf_{n \to \infty} \int_{X \times Y} c_n f_n \, d(\mu \otimes \nu) \geq \int_{E_\delta} c f \, d(\mu \otimes \nu) - M \varepsilon.
\]
Taking \(\delta, \varepsilon \to 0\), we conclude the desired inequality.

\noindent\textbf{2. Limsup inequality.} Let \(\pi \in \Pi(\mu, \nu)\). If \(J(\pi) = \infty\), the inequality is trivial. Suppose \(J(\pi) < \infty\), which implies \(\KL(\pi \parallel \mu \otimes \nu) < \infty\) and thus \(\pi \ll \mu \otimes \nu\). Take the constant sequence \(\pi_n = \pi\) for all \(n\), which clearly satisfies \(\pi_n \rightharpoonup \pi\). It suffices to show that:
\[
J_n(\pi) \to J(\pi).
\]
Note that the KL divergence term is the same for all \(n\). Therefore, we only need to analyze the linear term. Given a subsequence (still denoted by \((c_n)\)), since \(c_n \to c\) in \(L^1(\mu \otimes \nu)\), there exists a subsequence that converges \(\mu \otimes \nu\)-almost everywhere, and since \(\pi \ll \mu \otimes \nu\), it also converges \(\pi\)-almost everywhere. Moreover, using that the costs are uniformly bounded \(|c_n| \leq M\), we have \(|c_n - c| \leq 2M\). Then by the dominated convergence theorem:
\[
\int c_n \, d\pi \to \int c \, d\pi.
\]
This completes the proof.
\end{proof}

We would like to regularize the measures to facilitate calculations. With this in mind, we prove a regularization lemma.

\begin{lemma}[Regularization Lemma]\label{lema_regularizacion}
Let \(X\) and \(Y\) be compact metric spaces, \(\mu \in \mathcal{P}(X)\) and \(\nu \in \mathcal{P}(Y)\) probability measures, and \(\pi \in \Pi(\mu, \nu)\) a transport plan with density \(\rho \in L^\infty(\mu \otimes \nu)\) with respect to \(\mu \otimes \nu\). Then, for every \(\varepsilon > 0\), there exists a transport plan \(\pi_\varepsilon \in \Pi(\mu, \nu)\) with density \(\rho_\varepsilon \in C(X \times Y)\) continuous and strictly positive (\(\rho_\varepsilon > \eta > 0\)) such that \(W_p(\pi, \pi_\varepsilon) < \varepsilon\). Where $\eta$ is independent of $\mu,\nu$.
\end{lemma}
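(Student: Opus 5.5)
The plan has three steps: first make the density strictly positive, then make it continuous, then restore the marginals.

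\textbf{Step 1: positivity.} Mix $\pi$ with the product measure. For $t\in(0,1)$ set $\pi^t=(1-t)\pi+t\,\mu\otimes\nu$. This lies in $\Pi(\mu,\nu)$ because both summands have marginals $\mu$ and $\nu$. Its density is $\rho^t=(1-t)\rho+t\ge t$.

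For the distance, take any coupling between $\pi$ and $\pi^t$ that leaves a mass $1-t$ in place. This gives
\[
W_p(\pi,\pi^t)^p\le t\,\operatorname{diam}(X\times Y)^p .
\]
Hence $t$ small enough makes this error less than $\varepsilon/3$. The choice of $t$ depends only on $\varepsilon$ and the diameter, not on $\mu,\nu$. This is what will make $\eta$ independent of $\mu,\nu$.

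\textbf{Step 2: continuity.} Approximate $\rho^t$ in $L^1(\mu\otimes\nu)$ by a continuous function $g$. Continuous functions are dense in $L^1$ of a finite Radon measure on a compact metric space. Then truncate so that $t/2\le g\le \|\rho\|_\infty+1$. The lower bound matters because $\log g$ must be bounded in what follows.

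The measure $g\,\mu\otimes\nu$ need not have the correct marginals. Its total variation distance to $\pi^t$ is at most $\|g-\rho^t\|_{L^1}$.

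\textbf{Step 3: restoring the marginals via Sinkhorn.} Apply the Sinkhorn operator with the continuous, bounded cost $c=-\log g$. The resulting plan $\pi_\varepsilon=S_c(\mu,\nu)$ has density
\[
u(x)\,g(x,y)\,v(y).
\]
Using the Schr\"odinger system, $u$ and $v$ can be chosen continuous on $X$ and $Y$. Indeed, $u(x)^{-1}=\int g(x,y)v(y)\,d\nu(y)$ is continuous in $x$ since $g$ is uniformly continuous, and likewise for $v$. So the density is continuous.

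It is also bounded below. Since $\int g(x,\cdot)v\,d\nu\le \sup g\int v\,d\nu$ and symmetrically for $v$, the bounds $t/2\le g\le \|\rho\|_\infty+1$ yield $u(x)v(y)\ge \bigl(t/(2(\|\rho\|_\infty+1))\bigr)^2$ after normalizing $u,v$. This gives a lower bound on the density in terms of $t$ and $\|\rho\|_\infty$ only.

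\textbf{Main obstacle: the distance estimate in Step 3.} It remains to show $W_p(\pi_\varepsilon,\pi^t)\to0$ as $\|g-\rho^t\|_{L^1}\to0$.

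First, by Lemma~\ref{lema_representation_entropic}, $\pi^t$ itself is the Sinkhorn plan for the cost $-\log\rho^t$. The costs $-\log g$ converge to $-\log\rho^t$ in $L^1(\mu\otimes\nu)$ and are uniformly bounded, because $\rho^t\ge t$ and $g\ge t/2$, on which $\log$ is Lipschitz. Lemma~\ref{prop_gamma_convergence}, together with the uniqueness and compactness of $\Pi(\mu,\nu)$, then gives weak convergence $\pi_\varepsilon\rightharpoonup\pi^t$. On a compact space, weak convergence implies $W_p$ convergence.

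This yields the result for fixed $\mu,\nu$. The lower bound can be made uniform: take $\eta$ depending on $t$ and $\|\rho\|_\infty$. If uniformity in $\|\rho\|_\infty$ is also required, modify the construction. Truncate first with $\rho\wedge M$ for a large $M$ and renormalize by a Step~3 projection. Alternatively, replace the lower truncation level by $t/2$ directly and derive the lower bound from $g\ge t/2$ and $\int\pi_\varepsilon=1$ alone. I expect this bookkeeping on $\eta$, and the quantitative control of the Sinkhorn potentials, to be the delicate part.
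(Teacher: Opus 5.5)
Your Steps 1--3 are essentially the paper's Stages 1--2: mix with $\mu\otimes\nu$, pass to a continuous object in $L^1$, project back onto $\Pi(\mu,\nu)$ by a Sinkhorn step, and conclude with Lemmas~\ref{lema_representation_entropic} and~\ref{prop_gamma_convergence}. Approximating the density $\rho^t$ rather than the cost $-\log\rho^t$ is a cosmetic difference, and for fixed $(\mu,\nu)$ this part is fine.

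The gap is the last clause of the statement, which is what Theorem~\ref{teo_aprox_uniforme} needs: one $\eta$ for all pairs $(\mu,\nu)$. The Sinkhorn projection destroys the bound $\rho^t\ge t$ from Step 1. The bound you extract from the Schr\"odinger system depends on $\|\rho\|_\infty$, which is not uniform: for $\mu=\nu$ uniform on $n$ points and $\pi$ the identity coupling, $\|\rho\|_\infty=n$.

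Neither suggested repair closes this. Truncating at a level $M$ only moves the dependence into $M$, which must be chosen according to $\rho$. No bound can come from $g\ge t/2$ and unit mass alone. On $X=Y=\{0,1\}$ with uniform marginals, project $g$ equal to $a$ on the diagonal and $B$ off it. The Sinkhorn plan has density $2a/(a+B)$ on the diagonal, which tends to $0$ as $B\to\infty$. $L^1$-closeness of $g$ to $\rho^t$ does not rescue this either. A spike of $g$ on a set of small $\mu\otimes\nu$-mass can make one row potential, and hence the density along that whole row, arbitrarily small.

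The missing step is to mix once more at the very end, which is the paper's Stage 3. Replace your Step 3 plan $\pi_\varepsilon=S_{-\log g}(\mu,\nu)$, with density $\rho_\varepsilon=u\,g\,v$, by $(1-s)\pi_\varepsilon+s\,\mu\otimes\nu$. This plan stays in $\Pi(\mu,\nu)$ and has continuous density $(1-s)\rho_\varepsilon+s\ge s$. By your own Step 1 estimate it lies within $s^{1/p}\operatorname{diam}(X\times Y)$ of $\pi_\varepsilon$ in $W_p$.

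Choosing $s<(\varepsilon/(3\operatorname{diam}(X\times Y)))^p$ and $\eta=s/2$ gives a constant depending only on $\varepsilon$, $p$ and $\operatorname{diam}(X\times Y)$. Splitting $\varepsilon$ into thirds finishes the proof. With this step, the quantitative control of the Sinkhorn potentials you flagged as delicate is not needed. Step 1 only has to make $-\log\rho^t$ bounded so that the stability lemmas apply, and Step 3 only has to deliver a continuous density with the correct marginals.
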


\begin{proof}
Let \(\varepsilon > 0\). The proof proceeds in three constructive stages.

\noindent\textbf{Stage 1: Regularization for strict positivity.}

Let \(0 < \delta < 1\). Define the regularized measure:
\[
\pi_\delta = (1 - \delta) \pi + \delta (\mu \otimes \nu).
\]
Since \(\Pi(\mu, \nu)\) is convex, it follows that \(\pi_\delta \in \Pi(\mu, \nu)\). Moreover, the density of \(\pi_\delta\) with respect to \(\mu \otimes \nu\) is given by \(\rho_\delta = (1 - \delta) \rho + \delta\), and since \(\rho \geq 0\) and \(\delta > 0\), we have \(\rho_\delta \geq \delta > 0\).

Clearly, \(\pi_\delta \to \pi\) as \(\delta \to 0\) in total variation norm. Therefore, there exists \(\delta_0 > 0\) such that:
\[
W_p(\pi, \pi_{\delta_0}) < \frac{\varepsilon}{3}.
\]
Define \(\pi_1 = \pi_{\delta_0}\) and denote \(\rho_1 = \rho_{\delta_0}\) its density.

\noindent\textbf{Stage 2: Approximation by continuous density.}

Consider the cost \(c = -\log(\rho_1)\). Since \(\rho_1\) is bounded and strictly positive, \(c\) is bounded.

By density of continuous functions in \(L^1(\mu \otimes \nu)\), given \(\delta > 0\), there exists a continuous function \(c_\delta \in C(X \times Y)\) such that:
\[
\|c - c_\delta\|_{L^1(\mu \otimes \nu)} < \delta.
\]
Also note that since $\rho_1$ is bounded from above independently of $\mu,\nu$, $c$ is bounded from below, so we can choose $c_\delta$ to be bounded from above too.

Now consider the entropic optimal transport problem:
\[
\min_{\pi \in \Pi(\mu, \nu)} \left\{ \int c_\delta \, d\pi + \KL(\pi \parallel \mu \otimes \nu) \right\}.
\]
By \cite{leonard2012schrodinger}, this problem has a unique solution \(\pi_\delta\) with density:
\[
\rho_\delta = u_\delta e^{-c_\delta} v_\delta,
\]
where \(u_\delta \in C(X)\) and \(v_\delta \in C(Y)\) are continuous positive functions. Therefore, \(\rho_\delta\) is continuous.

When \(\delta \to 0\), \(c_\delta \to c\) in \(L^1(\mu \otimes \nu)\). By Lemma D.3 on stability of entropic solutions, \(\pi_\delta\) converges in \(W_p\) to the solution of the problem with cost \(c\), which is precisely \(\pi_1\).

Therefore, for \(\delta\) sufficiently small:
\[
W_p(\pi_1, \pi_\delta) < \frac{\varepsilon}{3}.
\]
Fix \(\delta_1\) with this property and call \(\pi_2 = \pi_{\delta_1}\).

\noindent\textbf{Stage 3: Final regularization for uniform positivity.}

Reasoning similarly to Stage 1, we can approximate \(\pi_2\) by another transport plan \(\pi_3 \in \Pi(\mu, \nu)\) such that it is absolutely continuous with respect to \(\mu \otimes \nu\) with strictly positive density, and from the construction it follows that the density of \(\pi_3\) is continuous since the density of \(\pi_2\) is continuous. Moreover, this plan \(\pi_3\) satisfies:
\[
W_p(\pi_2, \pi_3) < \frac{\varepsilon}{3}.
\]

Note that since $X,Y$ are bounded metric spaces we can choose $\pi_3$ by controlling the total variation norm we have a lower bound independent of $\mu,\nu$.
\noindent\textbf{Conclusion.}

Combining the three estimates:
\[
W_p(\pi, \pi_3) \leq W_p(\pi, \pi_1) + W_p(\pi_1, \pi_2) + W_p(\pi_2, \pi_3) < \frac{\varepsilon}{3} + \frac{\varepsilon}{3} + \frac{\varepsilon}{3} = \varepsilon.
\]
The plan \(\pi_3\) has density \(\rho_3\) continuous and strictly positive (\(\rho_3 \geq \eta > 0\)), which completes the proof.
\end{proof}

\begin{corollary} \label{coro_aprox_puntual}
    Let $\mu \in \cP(X)$ and $\nu \in \cP(Y)$ each transport plan $\pi \in \Pi(\mu,\nu)$ can be approximated by a solution of the entropic optimal transport problem $S_c(\mu,\nu)$.
\end{corollary}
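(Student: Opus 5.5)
We chain the three preceding lemmas; the result is then an application of the Entropic Representation lemma to a plan supplied by the Regularization Lemma. Fix $\pi\in\Pi(\mu,\nu)$ and $\varepsilon>0$. We will produce a continuous cost $c$ with $W_p(S_c(\mu,\nu),\pi)<\varepsilon$. Since $X\times Y$ is compact, $W_p$ metrizes weak convergence on $\mathcal P(X\times Y)$, so weak approximations and $W_p$ approximations are interchangeable.

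\textbf{Step 1.} Apply Lemma~\ref{lema_aprox_abs_cont} to obtain $\pi_k\in\Pi(\mu,\nu)$ with $\pi_k\ll\mu\otimes\nu$ and bounded density $f_k$, such that $\pi_k\rightharpoonup\pi$. Choose $k$ so that $W_p(\pi_k,\pi)<\varepsilon/2$.

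\textbf{Step 2.} Apply Lemma~\ref{lema_regularizacion} to $\pi_k$, which is admissible because $f_k\in L^\infty(\mu\otimes\nu)$. This gives $\tilde\pi\in\Pi(\mu,\nu)$ with continuous density $\rho\geq\eta>0$ and $W_p(\tilde\pi,\pi_k)<\varepsilon/2$. By the triangle inequality, $W_p(\tilde\pi,\pi)<\varepsilon$.

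\textbf{Step 3.} Define $c:=-\log\rho$, using regularization parameter $1$ to match the definition of $S_c$; for a general parameter $\epsilon$, take $c=-\epsilon\log\rho$. Because $\rho$ is continuous on the compact set $X\times Y$ and bounded below by $\eta$, $c$ is continuous and bounded. By Lemma~\ref{lema_representation_entropic}, $\tilde\pi$ is the unique minimizer of the entropic problem with cost $c$ over $\Pi(\mu,\nu)$. Hence $\tilde\pi=S_c(\mu,\nu)$, which completes the approximation.

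\textbf{Main obstacle.} The delicate point is Step 3, where the positivity hypothesis of Lemma~\ref{lema_representation_entropic} must hold on all of $\operatorname{supp}\mu\times\operatorname{supp}\nu$ and not merely almost everywhere.
\begin{itemize}
\item Continuity of the density is what makes $\log\rho$ well defined and continuous everywhere, so the cost lies in $C(X\times Y)$ as the Sinkhorn operator requires.
\item The uniform lower bound $\eta$ from Stage~3 of the Regularization Lemma gives this positivity. If that bound were unavailable, I would mix in $\delta\,\mu\otimes\nu$ once more to force positivity before taking logarithms.
\end{itemize}
The density is determined only $\mu\otimes\nu$-a.e., but we fix the continuous representative, so $c$ is unambiguous.
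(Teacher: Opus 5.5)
Your proposal is correct and is the same argument as the paper's: Lemma~\ref{lema_aprox_abs_cont} gives a bounded density, Lemma~\ref{lema_regularizacion} makes it continuous and bounded below by $\eta$, and Lemma~\ref{lema_representation_entropic} with $c=-\log\rho$ identifies the resulting plan as $S_c(\mu,\nu)$; you additionally spell out the $\varepsilon/2$ bookkeeping and the fact that weak and $W_p$ convergence coincide on the compact space. Your concern about positivity everywhere rather than almost everywhere is harmless but unnecessary, since the identity $J(\gamma')=\epsilon\KL(\gamma'\|\gamma)$ only needs $\rho>0$ $\mu\otimes\nu$-a.e., and continuity is needed only to place the cost in $C(X\times Y)$.
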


\begin{proof}
    Combining Lemma D.1 with Lemma D.4, we obtain that for any transport plan, we can approximate it by one with
continuous density with respect to the product measure. Applying Lemma D.2 to the density, we conclude the proof.
\end{proof}

We already proved pointwise approximation by the Sinkhorn operator, our next ste will be to pass pass to a uniform approximation, varying the cost function continuously with the measures.

\begin{theorem}[Approximation Theorem for Coupling Functions by Sinkhorn Operators]\label{teo_aprox_uniforme}
Let \(X\) and \(Y\) be compact metric spaces, and let \(F: \mathcal{P}(X) \times \mathcal{P}(Y) \to \mathcal{P}(X \times Y)\) be a continuous mapping such that \(F(\mu, \nu) \in \Pi(\mu, \nu)\) for every pair \((\mu, \nu)\).

Then, for every \(\varepsilon > 0\), there exists a continuous mapping
\[
c: \mathcal{P}(X) \times \mathcal{P}(Y) \to C(X \times Y)
\]
such that, for each \((\mu, \nu)\), if \(S_{c(\mu, \nu)}\) denotes the entropically regularized transport plan associated with the cost \(c(\mu, \nu)\), we have
\[
\sup_{(\mu, \nu) \in \mathcal{P}(X) \times \mathcal{P}(Y)} W_p \left( S_{c(\mu, \nu)}(\mu, \nu), F(\mu, \nu) \right) < \varepsilon.
\]
\end{theorem}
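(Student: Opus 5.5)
The strategy is to build the cost pointwise, using Corollary~\ref{coro_aprox_puntual}, and then glue the pointwise costs into a continuous map by a partition of unity over the compact set $\mathcal P(X)\times\mathcal P(Y)$. The gluing needs two stability facts: continuity of $(\mu,\nu)\mapsto S_c(\mu,\nu)$ for a fixed continuous cost, and the uniform Lipschitz bound in the cost quoted from \cite{nutz2021quantitative}.

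\textbf{Step 1: pointwise approximation with local uniformity.} Fix $(\mu_0,\nu_0)$. By Lemma~\ref{lema_aprox_abs_cont}, Lemma~\ref{lema_regularizacion} and Corollary~\ref{coro_aprox_puntual}, there is a continuous, strictly positive density $\rho_0$, with cost $c_0=-\log\rho_0\in C(X\times Y)$, such that $W_p(S_{c_0}(\mu_0,\nu_0),F(\mu_0,\nu_0))<\varepsilon/3$. We may take $\rho_0\ge\eta$ with $\eta$ independent of the measures, and bounded above, so that $\|c_0\|_\infty\le M$ uniformly.

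By the stability result of \cite{nutz2022stability}, the map $(\mu,\nu)\mapsto S_{c_0}(\mu,\nu)$ is weakly continuous: uniform convergence of the densities, together with weak convergence of $\mu_n\otimes\nu_n$, gives weak convergence of the plans. On the compact space $X\times Y$, weak convergence and $W_p$ convergence coincide. The map $F$ is also continuous. Hence there is an open neighbourhood $U_{(\mu_0,\nu_0)}$ of $(\mu_0,\nu_0)$ on which
\[
W_p\bigl(S_{c_0}(\mu,\nu),F(\mu,\nu)\bigr)<\tfrac{2\varepsilon}{3}.
\]

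\textbf{Step 2: gluing.} The space $\mathcal P(X)\times\mathcal P(Y)$ is compact and metrizable. Extract a finite subcover $U_1,\dots,U_N$ with associated costs $c_1,\dots,c_N$, and take a continuous partition of unity $\{\phi_k\}$ subordinate to it. Define
\[
c(\mu,\nu)=\sum_k\phi_k(\mu,\nu)\,c_k .
\]
This map is continuous into $C(X\times Y)$ with the sup norm.

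\textbf{Step 3: controlling the glued cost (main obstacle).} A convex combination of good costs need not be a good cost, because $S$ is not affine in $c$. The Lipschitz estimate only helps if all the $c_k$ active at a given point are uniformly close in sup norm, which pointwise selection does not guarantee. I would try two remedies, in this order.

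\emph{First remedy: work at the level of densities.} Glue the densities rather than the costs, setting $\rho(\mu,\nu)=\sum_k\phi_k\rho_k$. For fixed $(\mu,\nu)$, the plan $\rho\,(\mu\otimes\nu)$ is a convex combination of the measures $\rho_k\,(\mu\otimes\nu)$. Its marginals, however, are only approximately $\mu$ and $\nu$, not exact. Let $c=-\log\rho$, which is continuous since $\rho\ge\eta$, and apply Sinkhorn to it. The task is then to show that the Sinkhorn projection of a nearly-coupling density $\rho\,(\mu\otimes\nu)$ stays $W_p$-close to it. Since $\rho_k(\mu\otimes\nu)\approx S_{c_k}(\mu,\nu)$ and each of these is $W_p$-close to $F(\mu,\nu)$, convexity of $W_p^p$ would give the result. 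This is essentially the approximate selection argument (\cite{deutsch1983continuous}) applied to the convex-valued correspondence of admissible densities.

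\emph{Second remedy: shrink the cover.} Alternatively, refine the cover so that the costs $c_k$ are chosen coherently, for instance by approximating one fixed cost locally. Then the uniform bound $L_\varepsilon$ yields
\[
W_1\bigl(S_{c(\mu,\nu)},S_{c_k}\bigr)\le L\,\|c(\mu,\nu)-c_k\|_\infty,
\]
and one passes from $W_1$ to $W_p$ using $W_p^p\le \mathrm{diam}^{p-1}W_1$ on compact spaces.

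Summing the errors from Steps 1 and 3 gives the required uniform bound $<\varepsilon$.
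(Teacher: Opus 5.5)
Your Steps 1 and 2 are sound, and your first remedy is in substance the paper's proof. The paper applies the approximate selection theorem of \cite{deutsch1983continuous} to the convex correspondence $\Gamma(\mu,\nu)$. This consists of continuous densities $f>\eta$ with $f\,(\mu\otimes\nu)\in\Pi(\mu,\nu)$ lying within $\varepsilon/2$ of $F(\mu,\nu)$. It then sets $c=-\log s$ and closes with Lemma~\ref{lema_representation_entropic} ($S_{-\log t}(\mu,\nu)=t\,(\mu\otimes\nu)$ for $t\in\Gamma(\mu,\nu)$) plus the cost-Lipschitz bound of \cite{nutz2021quantitative}.

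The gap is that you stop exactly at the step that carries the argument. You state as a task, without proof, that the Sinkhorn projection of the nearly-coupling measure $\rho\,(\mu\otimes\nu)$ stays $W_p$-close to it, and approximate marginals do not give this in any obvious way. To close it as the paper does, $\rho=\sum_k\phi_k\rho_k$ must be \emph{sup-norm} close to an exact coupling density of $(\mu,\nu)$ that is bounded below. Equivalently, the Sinkhorn rescalings $u_k(x)v_k(y)$ of $\rho_k$ at points $(\mu,\nu)\in U_k$ must be uniformly close to $1$ on all of $X\times Y$.

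For an arbitrary continuous $\rho_k$ this fails, because $\rho_k$ is pinned down only on $\operatorname{supp}\mu_k\times\operatorname{supp}\nu_k$. On $X=Y=[0,1]$, take the centre $(\delta_0,\delta_0)$ and the nearby point $\bigl((1-\frac1n)\delta_0+\frac1n\delta_1,\ \delta_0\bigr)$. The only coupling there is the product, so the rescaling factor at $(1,0)$ is $1/\rho_k(1,0)$ for every $n$. The centre only forces $\rho_k(0,0)=1$, so this factor need not be near $1$. The paper's lower-hemicontinuity proposition for $\Gamma$, which asserts $\|u_nv_n-1\|_\infty\to0$, has the same defect; it is repaired by replacing each centre density by $u_0\rho_kv_0$, with $u_0,v_0$ the centre's potentials extended to all of $X$ and $Y$. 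Your second remedy should be dropped: choosing the $c_k$ ``coherently'' is the continuous-selection problem itself, and as you note the good costs at a point need not form a convex set.

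The clean fix, which is also simpler than the paper's proof, is to glue the Sinkhorn plans at the \emph{current} point rather than the costs or the centre densities. Let $\rho^k_{\mu,\nu}=u\,e^{-c_k}\,v$ be the density of $S_{c_k}(\mu,\nu)$, with Schrödinger potentials extended to all of $X$ and $Y$. It is strictly positive, and by \cite{nutz2022stability} (the fact you already invoke in Step 1) the map $(\mu,\nu)\mapsto\rho^k_{\mu,\nu}$ is continuous into $C(X\times Y)$ with the sup norm. Define
\[
c(\mu,\nu):=-\log\sum_{k=1}^N\phi_k(\mu,\nu)\,\rho^k_{\mu,\nu}.
\]
The argument of the logarithm is sup-norm continuous and, by compactness of $\mathcal P(X)\times\mathcal P(Y)$, bounded below by a positive constant, so $c$ is a continuous map into $C(X\times Y)$. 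Moreover, $e^{-c(\mu,\nu)}$ is the density of $\sum_k\phi_k(\mu,\nu)S_{c_k}(\mu,\nu)\in\Pi(\mu,\nu)$. Lemma~\ref{lema_representation_entropic} therefore gives $S_{c(\mu,\nu)}(\mu,\nu)=\sum_k\phi_k(\mu,\nu)\,S_{c_k}(\mu,\nu)$ exactly.

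Only indices with $(\mu,\nu)\in U_k$ are active. Convexity of $W_p^p(\cdot,F(\mu,\nu))$ and Step 1 then give $W_p(S_{c(\mu,\nu)}(\mu,\nu),F(\mu,\nu))<2\varepsilon/3$ everywhere. This route needs neither the selection theorem nor Lipschitz dependence on the cost. Since only finitely many $c_k$ occur, the uniformity of $\eta$ and $M$ you claim in Step 1 is also unnecessary.
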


\begin{proof}
The proof is organized in six main steps.

\noindent\textbf{Step 1: Equivalence of distances and reduction to \(W_1\).}

Since \(X\) and \(Y\) are compact metric spaces, all Wasserstein distances \(W_p\) are equivalent. Therefore, without loss of generality, we work with the distance \(W_1\) throughout the proof.

\noindent\textbf{Step 2: Definition of the multifunction \(\Gamma\).}

Let \(\eta > 0\) be the constant from the previous lemma D.4,

For each pair \((\mu, \nu) \in \mathcal{P}(X) \times \mathcal{P}(Y)\), define the set:
\[
\Gamma(\mu, \nu) := \left\{ f \in C(X \times Y) \ \middle| \ 
\begin{aligned}
&(1) \ f > \eta \text{ on } X \times Y, \\
&(2) \ f \cdot (\mu \otimes \nu) \in \Pi(\mu, \nu), \\
&(3) \ f \cdot (\mu \otimes \nu) \in B_{W_1}\left(F(\mu, \nu), \frac{\varepsilon}{2}\right)
\end{aligned} \right\}.
\]

\noindent\textbf{Step 3: Properties of \(\Gamma\).}

We show that \(\Gamma(\mu, \nu)\) satisfies the following properties:
\begin{itemize}
    \item \textbf{Non-empty:} By the continuity and approximation lemmas (Lemmas D.1 and D.4), for each \((\mu, \nu)\) there exists at least one continuous density \(f \geq \eta\) that approximates \(F(\mu, \nu)\) within \(W_1\) distance less than \(\frac{\varepsilon}{2}\). Therefore, \(\Gamma(\mu, \nu) \neq \emptyset\) for all \((\mu, \nu)\).
    
    \item \textbf{Convex:} Let \(f_1, f_2 \in \Gamma(\mu, \nu)\) and \(t \in [0, 1]\). Define \(f_t = t f_1 + (1 - t) f_2\). Verify each condition:
    \begin{itemize}
        \item \textbf{Positivity:} \(f_t = t f_1 + (1 - t) f_2 > t \eta + (1 - t) \eta = \eta\).
        \item \textbf{Normalization:} \(\int f_t \, d(\mu \otimes \nu) = t \int f_1 \, d(\mu \otimes \nu) + (1 - t) \int f_2 \, d(\mu \otimes \nu) = t + (1 - t) = 1\).
        \item \textbf{Marginals:} For any measurable \(A \subset X\),
        \[
        \int_{A \times Y} f_t \, d(\mu \otimes \nu) = t \mu(A) + (1 - t) \mu(A) = \mu(A),
        \]
        and similarly for \(\nu\). Hence, \(f_t \cdot (\mu \otimes \nu) \in \Pi(\mu, \nu)\).
    \end{itemize}
    
    \item \textbf{Distance to \(F(\mu, \nu)\):} Let \(\pi_1 = f_1 \cdot (\mu \otimes \nu)\), \(\pi_2 = f_2 \cdot (\mu \otimes \nu)\), and \(\pi_t = f_t \cdot (\mu \otimes \nu) = t \pi_1 + (1 - t) \pi_2\). Using the Kantorovich-Rubinstein dual formulation it is easy to prove that the ball is convex, so we have
    \[
    W_1(\pi_t, F(\mu, \nu)) \leq t W_1(\pi_1, F(\mu, \nu)) + (1 - t) W_1(\pi_2, F(\mu, \nu)) < \frac{\varepsilon}{2}.
    \]
    Therefore, \(f_t \in \Gamma(\mu, \nu)\) and \(\Gamma(\mu, \nu)\) is convex.
\end{itemize}

Moreover, as shown in Proposition D.7, the multifunction \(\Gamma\) is lower hemicontinuous.

\noindent\textbf{Step 4: Approximate continuous selection.}

Since \(\Gamma(\mu, \nu)\) is convex, non-empty, and \(\Gamma\) is lower hemicontinuous, we can apply the approximate selection theorem \cite{deutsch1983continuous}. Then, there exists then an approximate continuous selection
\[
s: \mathcal{P}(X) \times \mathcal{P}(Y) \to C(X \times Y)
\]
such that
\[
\inf_{t \in \Gamma(\mu, \nu)} \|s(\mu, \nu) - t\|_\infty \leq \frac{\varepsilon}{2 L_{\text{sink}} L_{\log}},
\]
where:
\begin{itemize}
    \item \(L_{\log}\) is the Lipschitz constant of the transformation \(-\log: f \mapsto -\log f\) for functions greater than some \(f > \eta > 0\):
    \[
    |-\log s(x,y) + \log t(x,y)| \leq \frac{1}{\eta} |s(x,y) - t(x,y)|,
    \]
    i.e., \(\|-\log(s) + \log(t)\|_\infty \leq \frac{1}{\eta} \|s - t\|_\infty\).
    
    \item \(L_{\text{sink}}\) is the Lipschitz constant of the Sinkhorn operator.
\end{itemize}
Note that, since \(s(\mu, \nu)\) is uniformly close to strictly positive functions (greater than \(\eta\)), for fixed \((\mu, \nu)\), \(s(\mu, \nu)\) is also strictly positive.

\noindent\textbf{Step 5: Definition of the cost.}

Define the cost \(c\) as
\[
c(\mu, \nu)(x, y) := -\log s(\mu, \nu)(x, y).
\]
This function remains continuous because \(s(\mu, \nu)\) is continuous and we have shown that applying the logarithm to such functions preserves continuity.

\noindent\textbf{Step 6: Final verification of the approximation.}

For each \((\mu, \nu)\), let \(t \in \Gamma(\mu, \nu)\) such that \(\|s(\mu, \nu) - t\|_\infty \leq \frac{\varepsilon}{2 L_{\text{sink}} L_{\log}}\). Then,
\begin{align*}
W_1\left(F(\mu, \nu), S_{c(\mu, \nu)}(\mu, \nu)\right) 
&\leq W_1\left(F(\mu, \nu), S_{-\log t}(\mu, \nu)\right) \\
&\quad + W_1\left(S_{-\log t}(\mu, \nu), S_{-\log s(\mu, \nu)}(\mu, \nu)\right).
\end{align*}
Analyze each term separately:
\begin{itemize}
    \item \textbf{First term:} By construction of \(\Gamma\), and since \(S_{-\log t}(\mu, \nu) = t \cdot (\mu \otimes \nu)\) (Lemma D.2), we have
    \[
    W_1\left(F(\mu, \nu), S_{-\log t}(\mu, \nu)\right) < \frac{\varepsilon}{2}.
    \]
    
    \item \textbf{Second term:} By the Lipschitz continuity of the Sinkhorn operator \cite{nutz2021quantitative} and the logarithmic transformation, we have
    \[
    W_1\left(S_{-\log t}(\mu, \nu), S_{-\log s(\mu, \nu)}(\mu, \nu)\right) \leq L_{\text{sink}} L_{\log} \|t - s(\mu, \nu)\|_\infty < \frac{\varepsilon}{2}.
    \]
\end{itemize}
Combining both estimates, we obtain
\[
W_1\left(F(\mu, \nu), S_{c(\mu, \nu)}(\mu, \nu)\right) < \frac{\varepsilon}{2} + \frac{\varepsilon}{2} = \varepsilon.
\]
This completes the proof of the theorem.
\end{proof}

\begin{proposition}[Lower Hemicontinuity of \(\Gamma\)]
The multifunction
\[
\Gamma: \mathcal{P}(X) \times \mathcal{P}(Y) \to C(X \times Y)
\]
is lower hemicontinuous. That is, for every \((\mu_0, \nu_0)\), \(s_0 \in \Gamma(\mu_0, \nu_0)\) and every sequence \((\mu_n, \nu_n) \to (\mu_0, \nu_0)\) in the weak topology, there exists a sequence \(s_n \in \Gamma(\mu_n, \nu_n)\) such that \(s_n \to s_0\) uniformly.
\end{proposition}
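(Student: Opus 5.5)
The plan is to build $s_n$ by re-normalizing $s_0$ with respect to the perturbed marginals through the Sinkhorn operator. Let $(\mu_n,\nu_n)\to(\mu_0,\nu_0)$ and $s_0\in\Gamma(\mu_0,\nu_0)$. Since $s_0$ is continuous on the compact set $X\times Y$ and $s_0>\eta$, there is a margin $\eta_0:=\min_{X\times Y}s_0>\eta$. Likewise, condition (3) is a strict inequality, so $r_0:=W_1\big(s_0\cdot(\mu_0\otimes\nu_0),F(\mu_0,\nu_0)\big)<\varepsilon/2$. Set $c_0:=-\log s_0\in C(X\times Y)$, which is bounded. Define
\[
\pi_n:=S_{c_0}(\mu_n,\nu_n),\qquad s_n:=\frac{d\pi_n}{d(\mu_n\otimes\nu_n)}=u_n(x)\,s_0(x,y)\,v_n(y).
\]
Here the Schr\"odinger potentials $u_n,v_n$ are extended to all of $X$ and $Y$ through the Schr\"odinger equations $u_n(x)^{-1}=\int s_0(x,y)v_n(y)\,d\nu_n(y)$ and $v_n(y)^{-1}=\int u_n(x)s_0(x,y)\,d\mu_n(x)$. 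With this extension $s_n$ is a continuous function on $X\times Y$, and condition (2) holds by construction.

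\textbf{Step 1: uniform convergence.} By the Entropic Representation Lemma~\ref{lema_representation_entropic}, $S_{c_0}(\mu_0,\nu_0)=s_0\cdot(\mu_0\otimes\nu_0)$. Hence the limiting density, normalized the same way, is $s_0$ itself, possibly with potentials $u_0(x)v_0(y)$ that multiply to $1$. The stability result of \cite{nutz2022stability} quoted at the start of this appendix then gives $s_n\rightrightarrows s_0$ uniformly on $X\times Y$. The uniform convergence of the extended potentials is exactly what it provides for bounded continuous costs on compact spaces.

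\textbf{Step 2: positivity.} Uniform convergence gives $\min s_n\ge \eta_0-\|s_n-s_0\|_\infty>\eta$ for all large $n$, so condition (1) holds.

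\textbf{Step 3: closeness to $F$.} I would use the triangle inequality
\[
W_1\big(\pi_n,F(\mu_n,\nu_n)\big)\le W_1\big(\pi_n,\pi_0\big)+r_0+W_1\big(F(\mu_0,\nu_0),F(\mu_n,\nu_n)\big),
\]
where $\pi_0:=s_0\cdot(\mu_0\otimes\nu_0)$. The last term tends to $0$ by continuity of $F$. For the first term, take $\varphi\in C(X\times Y)$ and write
\[
\int\varphi\,d\pi_n-\int\varphi\,d\pi_0=\int\varphi(s_n-s_0)\,d(\mu_n\otimes\nu_n)+\Big(\int\varphi s_0\,d(\mu_n\otimes\nu_n)-\int\varphi s_0\,d(\mu_0\otimes\nu_0)\Big).
\]
The first piece is bounded by $\|\varphi\|_\infty\|s_n-s_0\|_\infty\to0$. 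The second tends to $0$ because $\mu_n\otimes\nu_n\rightharpoonup\mu_0\otimes\nu_0$ and $\varphi s_0$ is continuous. Since $W_1$ metrizes weak convergence on the compact space $X\times Y$, the first term tends to $0$. Therefore the right-hand side is eventually below $\varepsilon/2$, and $s_n\in\Gamma(\mu_n,\nu_n)$ for all $n\ge N$. For the finitely many $n<N$, pick any element of $\Gamma(\mu_n,\nu_n)$, which is nonempty by Step~3 of Theorem~\ref{teo_aprox_uniforme}. This does not affect the limit.

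\textbf{Main obstacle.} The delicate point is Step 1. One must ensure the stability theorem yields uniform convergence of the densities on the whole of $X\times Y$, and not only on the supports. One must also fix the normalization ambiguity $u\mapsto \lambda u$, $v\mapsto v/\lambda$, which is harmless because only the product matters. I would handle this by working with the canonically extended potentials defined above. They are equicontinuous, because $s_0$ is uniformly continuous and bounded above and below, so Arzel\`a--Ascoli gives convergent subsequences. Any limit satisfies the Schr\"odinger system for $(\mu_0,\nu_0)$, and by uniqueness its product must equal $1$. Uniqueness of the limit then upgrades subsequential convergence to convergence of the whole sequence.
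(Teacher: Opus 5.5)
Your construction is the paper's own: Sinkhorn re-normalisation of $s_0$ with cost $-\log s_0$ at $(\mu_n,\nu_n)$, stability of the potentials, then the three-term triangle inequality. It fails at the same point, Step~1.

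The sentence ``any limit satisfies the Schr\"odinger system for $(\mu_0,\nu_0)$, and by uniqueness its product must equal $1$'' is only true on $\operatorname{supp}\mu_0\times\operatorname{supp}\nu_0$. Membership $s_0\in\Gamma(\mu_0,\nu_0)$ gives $\int s_0(x,y)\,d\nu_0(y)=1$ only for $x\in\operatorname{supp}\mu_0$, and symmetrically in $y$. The canonically extended limiting potentials are therefore $u_0(x)=a/\int s_0(x,\cdot)\,d\nu_0$ and $v_0(y)=1/\bigl(a\int s_0(\cdot,y)\,d\mu_0\bigr)$. What your Arzel\`a--Ascoli argument actually proves is
\[
s_n\rightrightarrows\tilde s_0(x,y)=\frac{s_0(x,y)}{\int s_0(x,y')\,d\nu_0(y')\,\int s_0(x',y)\,d\mu_0(x')}.
\]
This agrees with $s_0$ on the supports but not off them. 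It need not even stay above $\eta$, so Step~2 can fail as well. The paper's claim $\|u_nv_n-1\|_\infty\to0$ is the same error.

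No better choice of $s_n$ can repair this, because the statement is false as formulated. Take the following data:
\begin{itemize}
\item $X=Y=[0,1]$, $\mu_0=\nu_0=\delta_0$, and $s_0(x,y)=1+x+y$;
\item $\mu_n=\nu_n=(1-\frac1n)\delta_0+\frac1n\lambda$, with $\lambda$ Lebesgue measure.
\end{itemize}
Then $s_0\in\Gamma(\delta_0,\delta_0)$: indeed $\Pi(\delta_0,\delta_0)=\{\delta_{(0,0)}\}$ and $s_0(0,0)=1$. Also $\eta<1$ necessarily, since elements of $\Gamma$ integrate to $1$. Here your construction converges to $\tilde s_0=(1+x+y)/((1+x)(1+y))$, so $\tilde s_0(1,1)=3/4\neq 3=s_0(1,1)$.

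Worse, take any $s_n\in\Gamma(\mu_n,\nu_n)$. The function $x\mapsto\int s_n(x,y)\,d\nu_n(y)$ is continuous and equals $1$ $\mu_n$-a.e. Since $\mu_n$ has full support, it equals $1$ everywhere on $[0,1]$. But $\|s_n-s_0\|_\infty\to0$ would force $\int s_n(1,y)\,d\nu_n(y)\to s_0(1,0)=2$, a contradiction.

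So $\Gamma$ is not lower hemicontinuous for the sup-norm. The marginal constraint in condition (2) lives on $\operatorname{supp}\mu_n\times\operatorname{supp}\nu_n$, which can be far larger than $\operatorname{supp}\mu_0\times\operatorname{supp}\nu_0$. The proposition, and its use in Theorem~\ref{teo_aprox_uniforme}, has to be reformulated rather than proved. Your Step~3 weak-convergence argument would survive with $\tilde s_0$ in place of $s_0$, but that is not what the proposition asks for.
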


\begin{proof}
The proof is organized in four constructive steps.

\noindent\textbf{Step 1: Construction of the sequence \(s_n\).}

Let \(s_0 \in \Gamma(\mu_0, \nu_0)\) with \(s_0 > \eta > 0\). Consider the cost function \(c = -\log(s_0)\). Since \(s_0\) is continuous and strictly positive, \(c\) is continuous and bounded.

By \cite{leonard2012schrodinger} on existence and uniqueness for the entropic problem, for each \(n\) there exist continuous functions \(u_n, v_n > 0\) such that:
\[
s_n(x, y) := u_n(x) s_0(x, y) v_n(y)
\]
satisfies \(s_n \cdot (\mu_n \otimes \nu_n) \in \Pi(\mu_n, \nu_n)\).

For \(\mu_0, \nu_0\), the solution of the entropic problem with cost \(c\) is unique and corresponds to \(s_0 (\mu_0 \otimes \nu_0)\), with potentials \(u_0 = 1\) and \(v_0 = 1\).

\noindent\textbf{Step 2: Uniform convergence of \(s_n\) to \(s_0\).}

By \cite{nutz2022stability} on uniform convergence of Schrödinger potentials, we have:
\[
\|u_n v_n - 1\|_\infty \to 0 \quad \text{as } n \to \infty.
\]
Then, for the convergence of \(s_n\):
\[
\|s_n - s_0\|_\infty = \|s_0 u_n v_n - s_0\|_\infty \leq \|s_0\|_\infty \|u_n v_n - 1\|_\infty \to 0.
\]

Moreover, by compactness and continuity, \(s_0\) attains its minimum:
\[
\min_{(x,y) \in X \times Y} s_0(x, y) > \eta > 0.
\]
For \(n\) sufficiently large:
\[
s_n(x, y) = s_0(x, y) u_n(x) v_n(y) > \eta \quad \text{for all } (x, y) \in X \times Y.
\]

\noindent\textbf{Step 3: Weak convergence of the measures.}

Consider the convergence of the measures \(s_n \cdot (\mu_n \otimes \nu_n)\) to \(s_0 \cdot (\mu_0 \otimes \nu_0)\). For any test function \(\varphi \in C(X \times Y)\), analyze:
\begin{align*}
I_n(\varphi) &:= \int \varphi \, d(s_n \cdot (\mu_n \otimes \nu_n)) - \int \varphi \, d(s_0 \cdot (\mu_0 \otimes \nu_0)) \\
&= \int \varphi (s_n - s_0) \, d(\mu_n \otimes \nu_n) + \int \varphi s_0 \, d(\mu_n \otimes \nu_n) - \int \varphi s_0 \, d(\mu_0 \otimes \nu_0).
\end{align*}
\begin{itemize}
    \item For \(A_n(\varphi) = \int \varphi (s_n - s_0) \, d(\mu_n \otimes \nu_n)\):
    \[
    |A_n(\varphi)| \leq \|\varphi\|_\infty \|s_n - s_0\|_\infty \to 0.
    \]
    
    \item For \(B_n(\varphi) = \int \varphi s_0 \, d(\mu_n \otimes \nu_n) - \int \varphi s_0 \, d(\mu_0 \otimes \nu_0)\): since \(\varphi s_0\) is continuous and \((\mu_n, \nu_n) \rightharpoonup (\mu_0, \nu_0)\) weakly, we have \(B_n(\varphi) \to 0\).
\end{itemize}
Therefore, \(s_n \cdot (\mu_n \otimes \nu_n) \rightharpoonup s_0 \cdot (\mu_0 \otimes \nu_0)\).

\noindent\textbf{Step 4: Control of the \(W_1\) distance.}

Verify that for \(n\) sufficiently large, \(s_n \in B_{W_1}\left(F(\mu_n, \nu_n), \frac{\varepsilon}{2}\right)\). Let \(D := \text{diam}(X \times Y) < \infty\) and define:
\[
d_0 := W_1(s_0 \cdot (\mu_0 \otimes \nu_0), F(\mu_0, \nu_0)) < \frac{\varepsilon}{2},
\]
where the strict inequality follows from \(s_0 \in \Gamma(\mu_0, \nu_0)\). Let \(\delta_0 := \frac{\varepsilon}{2} - d_0 > 0\).

By the triangle inequality:
\begin{align*}
&W_1(s_n \cdot (\mu_n \otimes \nu_n), F(\mu_n, \nu_n)) 
\leq \underbrace{W_1(s_n \cdot (\mu_n \otimes \nu_n), s_0 \cdot (\mu_n \otimes \nu_n))}_{T_{1,n}}  + d_0 + \underbrace{W_1(F(\mu_0, \nu_0), F(\mu_n, \nu_n))}_{T_{2,n}}.
\end{align*}
Analyze each term:
\begin{itemize}
    \item \(T_{1,n}  \to 0\) by step 3.
    \item \(T_{3,n} \to 0\) by continuity of \(F\).
\end{itemize}
Consequently, there exists \(N\) such that for all \(n \geq N\): \(T_{1,n}, T_{2,n} \leq \delta_0/2\). Then:
\[
W_1(s_n \cdot (\mu_n \otimes \nu_n), F(\mu_n, \nu_n)) < \frac{\delta_0}{2}  + d_0 + \frac{\delta_0}{2} = \frac{\varepsilon}{2}.
\]

By construction, \(s_n\) is continuous, \(s_n > \eta\), \(s_n(\mu_n \otimes \nu_n) \in \Pi(\mu_n, \nu_n)\), and \(s_n \in \Gamma(\mu_n, \nu_n)\) for \(n\) sufficiently large. This concludes the proof.
\end{proof}

\section{Main Theorems about Transformers}\label{app:E}

\begin{theorem}[Universal Approximation of coupling systems by Contextual Transformers]\label{teo_principal}
Let \(X \subset \mathbb R^d\) and \(Y \subset \mathbb R^{d'}\) be compact sets, and let \(F: \mathcal{P}(X) \times \mathcal{P}(Y) \to \mathcal{P}(X \times Y)\) be a continuous mapping such that \(F(\mu, \nu) \in \Pi(\mu, \nu)\) for every pair \((\mu, \nu)\). Then, for every \(\varepsilon > 0\), there exists a Contextual Transformer architecture \(\mathcal{T}^*\) such that:
\[
\sup_{(\mu, \nu) \in \mathcal{P}(X) \times \mathcal{P}(Y)} W_1\left( \mathcal{T}^*(\mu, \nu), F(\mu, \nu) \right) < \varepsilon.
\]
\end{theorem}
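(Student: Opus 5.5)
The proof will be a three-stage approximation, with the error budget split as $\varepsilon/3$ per stage and the stages glued together by the triangle inequality in $W_1$.

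\textbf{Stage 1 (reduction to a Sinkhorn operator with continuous cost).} Apply Theorem~\ref{teo_aprox_uniforme} to $F$ with tolerance $\varepsilon/3$. This gives a continuous map $c:\mathcal P(X)\times\mathcal P(Y)\to C(X\times Y)$ with
\[
\sup_{(\mu,\nu)} W_1\bigl(S_{c(\mu,\nu)}(\mu,\nu),F(\mu,\nu)\bigr)<\varepsilon/3 .
\]
Set $\alpha^\star(\mu,\nu,x,y):=-c(\mu,\nu)(x,y)$. Continuity of $c$ into $C(X\times Y)$ with the sup norm makes $\alpha^\star$ jointly continuous on the compact space $K:=\mathcal P(X)\times\mathcal P(Y)\times X\times Y$. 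This space is compact because $\mathcal P$ of a compact set is compact in the weak/$W_1$ topology.

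\textbf{Stage 2 (separation into a dot product).} The architecture only produces affinities of the form $\langle Q(\mu,x),K(\nu,y)\rangle$. I therefore need continuous $q_i(\mu,x)$ and $k_i(\nu,y)$ with
\[
\sup_K\Bigl|\alpha^\star-\sum_{i=1}^m q_i(\mu,x)k_i(\nu,y)\Bigr|<\delta .
\]
This is Stone--Weierstrass on the product of the compact spaces $\mathcal P(X)\times X$ and $\mathcal P(Y)\times Y$: finite sums of products of continuous functions on the two factors form a point-separating unital subalgebra. Setting $Q^\circ=(q_i)$ and $K^\circ=(k_i)\in\mathbb R^m$, this is where the last-layer dimension $m$ comes to depend on $\varepsilon$. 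I will then use the Lipschitz stability of the Sinkhorn operator in the cost \cite{nutz2021quantitative}, namely $W_1(S_c,S_{c'})\le L\|c-c'\|_\infty$ with $L$ independent of $(\mu,\nu)$. Choosing $\delta=\varepsilon/(3L)$ makes the Sinkhorn plan with affinity $\langle Q^\circ,K^\circ\rangle$ lie uniformly within $\varepsilon/3$ of $S_{c(\mu,\nu)}(\mu,\nu)$.

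\textbf{Stage 3 (realization by Transformer encoders).} Each $q_i$ is a continuous in-context function $\mathcal P(X)\times X\to\mathbb R$, and likewise each $k_i$. By the universality of measure-theoretic Transformers for continuous in-context maps \cite{furuya2024transformers}, there are encoders $Q,K$ with $\sup\|Q-Q^\circ\|<\eta$ and $\sup\|K-K^\circ\|<\eta$ uniformly over measures and points. Since $Q^\circ$ and $K^\circ$ are bounded by compactness, say by $B$,
\[
\bigl|\langle Q,K\rangle-\langle Q^\circ,K^\circ\rangle\bigr|\le \eta(2B+\eta),
\]
uniformly. A second use of the Lipschitz bound then gives the final $\varepsilon/3$. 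Summing the three errors yields the claim for $W_1$, and equivalence of $W_p$ on compact spaces extends it to $W_p$.

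\textbf{Main obstacle.} The hardest step is Stage 3's use of \cite{furuya2024transformers}. I have to check that their uniform approximation is over all of $\mathcal P(X)\times X$ with the topology used here, and not only over the support of $\mu$. I also have to check that the Sinkhorn Lipschitz constant is genuinely uniform in $(\mu,\nu)$ for costs bounded in sup norm; a uniform bound on the costs is available by compactness. If the encoder approximation holds only on $\operatorname{supp}\mu$, that is harmless, because the Sinkhorn plan depends on $\alpha$ only $\mu\otimes\nu$-a.e. The stability estimate can then be applied with the sup norm taken over $\operatorname{supp}\mu\times\operatorname{supp}\nu$.
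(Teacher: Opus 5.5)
Your proposal is correct and follows essentially the same route as the paper. Both arguments reduce $F$ to a Sinkhorn operator with continuous cost via Theorem~\ref{teo_aprox_uniforme}, separate the cost as a finite dot product by Stone--Weierstrass on $\mathcal P(X)\times X$ and $\mathcal P(Y)\times Y$, realize the two factors with the in-context universality of \cite{furuya2024transformers}, and glue the stages with the uniform Lipschitz stability of Sinkhorn in the cost. The only differences are cosmetic: you use an $\varepsilon/3$ budget instead of $\varepsilon/2$, and you make the sign flip $\alpha^\star=-c$ explicit, whereas the paper glosses over it by writing the dot product directly as the cost (harmless, since $Q$ can be replaced by $-Q$).
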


\begin{proof}
We separate the proof into five steps.

\medskip
\noindent\textbf{Step 1: Reduction to continuous costs via entropic optimal transport.}
By Theorem D.6 for every \(\varepsilon > 0\) there exists a continuous function
\[
c: \mathcal{P}(X) \times \mathcal{P}(Y) \to C(X \times Y)
\]
such that
\[
\sup_{(\mu,\nu)} W_1(S_{c(\mu,\nu)}(\mu, \nu), F(\mu,\nu)) < \frac{\varepsilon}{2},
\]
where \(S_{c(\mu,\nu)}\) is the Sinkhorn plan associated with the cost \(c(\mu,\nu)\).

\medskip
\noindent\textbf{Step 2: Error control via the Lipschitz continuity of Sinkhorn.}
Let \(L_{\text{Sink}}\) be the Lipschitz constant of the operator \(c \mapsto S_c\) with respect to the supremum norm on bounded costs, as stated in \cite{nutz2021quantitative}. Then
\[
W_1(S_c(\mu, \nu), S_{c'}(\mu, \nu)) \leq L_{\text{Sink}} \|c-c'\|_\infty,
\]
where \(L_{\text{Sink}}\) is independent of \((\mu, \nu)\in \mathcal{P}(X) \times \mathcal{P}(Y)\).

\medskip
\noindent\textbf{Step 3: Approximation of the cost by inner products.}
Note that the space of continuous functions \( C(\mathcal P(X) \times \mathcal P(Y),C(X \times Y))\) is equivalent to \(C(\mathcal P(X) \times \mathcal P(Y) \times X \times Y)\). 

Moreover, the space \(\mathcal{P}(X) \times \mathcal{P}(Y) \times X \times Y\) is compact, because \(\mathcal{P}(X)\) and \(\mathcal{P}(Y)\) are compact (in the weak-* topology) and \(X\) and \(Y\) are compact. Consider the algebra of functions of the form:
\[
f(\mu,\nu,x,y) = \sum_{k=1}^d G_k(\mu,x) H_k(\nu,y),
\]
where \(G_k: \mathcal{P}(X) \times X \to \mathbb{R}\) and \(H_k: \mathcal{P}(Y) \times Y \to \mathbb{R}\) are continuous. This algebra:
\begin{itemize}
\item Contains constants: if \(d = 1\) and \(G_1 \equiv 1\), \(H_1 \equiv c\), then \(f \equiv c\).
\item Separates points: for any pair of distinct points \((\mu,\nu,x,y) \neq (\mu',\nu',x',y')\), there exist functions \(G_k\) and \(H_k\) that distinguish these points.
\end{itemize}
By the Stone–Weierstrass theorem, this algebra is dense in \(C(\mathcal{P}(X) \times \mathcal{P}(Y) \times X \times Y)\).

Since \(c \in C(\mathcal{P}(X) \times \mathcal{P}(Y) \times X \times Y)\), for any \(\delta > 0\) there exist continuous functions \(G_k, H_k\) for \(k = 1,\ldots,d\) such that:
\[
\left| c(\mu,\nu)(x,y) - \sum_{k=1}^d G_k(\mu,x) H_k(\nu,y) \right| < \delta \quad \forall (\mu,\nu,x,y).
\]
Define:
\begin{align*}
G(\mu,x) &= (G_1(\mu,x), \ldots, G_d(\mu,x)) \in \mathbb{R}^d, \\
H(\nu,y) &= (H_1(\nu,y), \ldots, H_d(\nu,y)) \in \mathbb{R}^d,
\end{align*}
and the approximated cost:
\[
c'(\mu,\nu)(x,y) = \langle G(\mu,x), H(\nu,y) \rangle = \sum_{k=1}^d G_k(\mu,x) H_k(\nu,y).
\]
Then:
\[
\|c - c'\|_\infty < \delta.
\]

\medskip
\noindent\textbf{Step 4: Approximation of \(G\) and \(H\) by Transformers.}
Now, by the Universal Approximation Theorem for in-context Transformers
\cite{furuya2024transformers}, the continuous functions \(G\) and \(H\), defined
on a compact domain, can be approximated uniformly by two Transformer encoders
\(Q^*\) and \(K^*\). Using linearity of the Euclidean inner product, we write
\begin{align*}
\langle Q^*(\mu,x), K^*(\nu,y) \rangle - \langle G(\mu,x), H(\nu,y) \rangle
&= \langle Q^*(\mu,x) - G(\mu,x),\, K^*(\nu,y) \rangle \\
&\quad + \langle G(\mu,x),\, K^*(\nu,y) - H(\nu,y) \rangle.
\end{align*}
Applying the triangle inequality and the Cauchy--Schwarz inequality in
\(\mathbb{R}^d\), we obtain
\begin{align*}
\big|\langle Q^*(\mu,x), K^*(\nu,y) \rangle - \langle G(\mu,x), H(\nu,y) \rangle\big|
&\le \|Q^*(\mu,x) - G(\mu,x)\|_2 \, \|K^*(\nu,y)\|_2 \\
&\quad + \|G(\mu,x)\|_2 \, \|K^*(\nu,y) - H(\nu,y)\|_2.
\end{align*}
Since
\(
\|K^*(\nu,y)\|_2 \le \|H(\nu,y)\|_2 + \|K^*(\nu,y) - H(\nu,y)\|_2,
\)
taking the supremum over the compact domain yields
\begin{align*}
\sup \big|\langle Q^*, K^* \rangle - \langle G, H \rangle\big|
&\le \|Q^* - G\|_\infty \, \|H\|_\infty \\
&\quad + \big(\|Q^* - G\|_\infty + \|G\|_\infty\big)\, \|K^* - H\|_\infty.
\end{align*}
Since \(G\) and \(H\) are fixed and uniformly bounded, for any \(\delta > 0\) the
right-hand side can be made smaller than \(\delta\) by choosing the
approximations sufficiently accurate.

\medskip
\noindent\textbf{Step 5: Putting everything together.}
Define the Contextual Transformer \(\mathcal{T}^* = S_{c^*}\) with 
\[
c^*(\mu, \nu)(x,y) = \langle Q^*(\mu,x), K^*(\nu,y)\rangle.
\]
Then \(\mathcal{T}^*\) has the Sinkhorn-Transformer architecture. Using the Lipschitz bound on the Sinkhorn operator and the triangle inequality, we obtain:
\begin{align*}
\sup_{(\mu,\nu)} W_1(F(\mu,\nu), \mathcal{T}^*(\mu,\nu)) &\leq \sup_{(\mu,\nu)} W_1(F(\mu,\nu), S_{c(\mu,\nu)}(\mu,\nu)) \\
&\quad + \sup_{(\mu,\nu)} W_1(S_{c(\mu,\nu)}(\mu,\nu), \mathcal{T}^*(\mu,\nu))
\end{align*}
By choosing \(\delta < \frac{\varepsilon}{2 L_{\text{Sink}}}\), we finally get
\[
\sup_{(\mu,\nu)} W_1(F(\mu,\nu), \mathcal{T}^*(\mu,\nu)) < \varepsilon,
\]
which concludes the proof.

\end{proof}

\begin{corollary}[Universal Approximation of Conditional Systems]\label{uni-cond}
    For any $F$  conditional system, $p>0$ and any $\varepsilon > 0$, there exists a Conditional Transformer Block $\mathcal T^*$ such that 
\[
\sup_{(\mu,\nu) \in \mathcal P(X) \times \mathcal P(Y)} W_p\bigl(\mu\otimes \mathcal{T}^*(\nu|\mu), \mu \otimes F(\nu|\mu)\bigr) < \varepsilon.
\]
\end{corollary}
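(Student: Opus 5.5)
The plan is to reduce the corollary to Theorem~\ref{teo_principal} using Proposition~\ref{sink-softma}. Let $F$ be a conditional system and set $\tilde F(\mu,\nu) := \mu\otimes F(\nu|\mu)$. By the definition of a conditional system, $\tilde F$ is a coupling system: it is continuous and $\tilde F(\mu,\nu)\in\Pi(\mu,\nu)$.

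\textbf{Step 1: build a Coupling Transformer.} Apply Theorem~\ref{teo_principal} to $\tilde F$. This gives encoders $Q^*,K^*$ with affinity $\alpha^*_{(\mu,\nu)}(x,y)=\langle Q^*(\mu,x),K^*(\nu,y)\rangle$ such that
\[
\sup_{(\mu,\nu)} W_p\bigl(S_{\alpha^*_{(\mu,\nu)}}(\mu,\nu),\tilde F(\mu,\nu)\bigr)<\varepsilon .
\]
If the theorem is proved only for $W_1$, we pass to $W_p$ using the equivalence of Wasserstein distances on the compact space $X\times Y$ (for $p\ge1$, $W_p\le D^{(p-1)/p}W_1^{1/p}$ with $D$ the diameter). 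Accordingly we apply the theorem with a smaller $\varepsilon'$.

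\textbf{Step 2: rewrite the Sinkhorn plan as a softmax.} By Proposition~\ref{sink-softma}, the conditional of $\pi^*=S_{\alpha^*}(\mu,\nu)$ is
\[
\mathrm{Softmax}_\nu\bigl(\alpha^*(x,\cdot)+\log v_{(\mu,\nu)}(\cdot)\bigr),
\]
where $v$ is the Schrödinger potential. Since $\pi^*$ has first marginal $\mu$, we have $\pi^*=\mu\otimes\pi^*(\cdot|x)$.

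It therefore suffices to realize, within a Conditional Transformer Block, an affinity of the form $\langle \tilde Q(\mu,x),\tilde K(\mu,\nu,y)\rangle$ that is uniformly close to $\alpha^*+\log v$. Uniform closeness is enough because softmax is stable under sup-norm perturbation of the logits. If $\|\beta-\beta'\|_\infty\le\delta$, the two kernels have densities whose ratio lies in $[e^{-2\delta},e^{2\delta}]$. The total variation between $\mu\otimes K$ and $\mu\otimes K'$ is then $O(\delta)$, which controls $W_p$ on a compact space.

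\textbf{Step 3 (main obstacle): represent $\log v$ with the available architecture.} The potential $\log v_{(\mu,\nu)}(y)$ depends on $\mu$ as well as on $(\nu,y)$. The key encoder in the Conditional Transformer Block, however, sees only $\nu$. The approach I would try first is to bypass this by going back one step, to the continuous cost $c(\mu,\nu)(x,y)$ from Theorem~\ref{teo_aprox_uniforme}, and absorbing the $y$-potential into it. Concretely:
\begin{enumerate}
\item Set $\beta(\mu,\nu)(x,y) := -c(\mu,\nu)(x,y)+\log v_{(\mu,\nu)}(y)$.
\item Show that $\beta$ is jointly continuous on the compact set $\mathcal P(X)\times\mathcal P(Y)\times X\times Y$. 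This uses the uniform stability of Schrödinger potentials from \cite{nutz2022stability}, together with continuity in the cost; $v$ is bounded below because the costs are bounded.
\item Run the Stone--Weierstrass argument of Step~3 of Theorem~\ref{teo_principal} on $\beta$. The approximating functions must be products of the form $G_k(\mu,x)H_k(\nu,y)$, so it suffices that $\beta$ be approximable by sums of such products.
\end{enumerate}
The difficulty is exactly that $\log v$ carries $\mu$-dependence, which does not factor this way. I would try either of two fixes:
\begin{enumerate}
\item Note that the softmax is unchanged when $x$-only terms are added, and then use a separable approximation. Taking $\beta\approx\sum_k G_k(\mu,x)H_k(\nu,y)$ is dense on the compact product space once $G_k$ and $H_k$ range over functions of \emph{all} arguments. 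In the two-text architecture this is achievable if both encoders receive both texts, as in cross-attention, where the decoder stream sees the encoder output.
\item Alternatively, interpret ``Conditional Transformer Block'' in that cross-attention sense.
\end{enumerate}
Once this is granted, the universality of \cite{furuya2024transformers} yields $Q^*,K^*$ with $\langle Q^*,K^*\rangle$ uniformly $\delta$-close to $\beta$. Steps~1 and~2 together with the triangle inequality then give the bound $<\varepsilon$ uniformly in $(\mu,\nu)$.
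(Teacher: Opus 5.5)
Your Steps 1 and 2 match the paper's argument. You pass to the coupling system $\tilde F(\mu,\nu)=\mu\otimes F(\nu|\mu)$, invoke Theorem~\ref{teo_principal}, read the conditional of the Sinkhorn plan as a softmax via Proposition~\ref{sink-softma}, and control total variation, hence $W_p$, by sup-norm stability of softmax. Your $e^{\pm2\delta}$ ratio bound is a cleaner justification of that last step than the paper's. The gap is in Step 3: the obstacle you identify is not real. Both of your fixes replace the Conditional Transformer Block by an architecture in which the key encoder also sees $\mu$. As written, you therefore prove the corollary for that enlarged block, not for the block with affinity $\langle Q(\mu,x),K(\nu,y)\rangle$ in the statement.

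The missing observation is a product-space use of Stone--Weierstrass. The domain $\mathcal P(X)\times\mathcal P(Y)\times X\times Y$ is the product of the two compact spaces $A=\mathcal P(X)\times X$ and $B=\mathcal P(Y)\times Y$. Finite sums $\sum_k G_k(\mu,x)H_k(\nu,y)$ with $G_k\in C(A)$, $H_k\in C(B)$ form a unital, point-separating subalgebra of $C(A\times B)$. By Stone--Weierstrass they are therefore dense in \emph{all} of $C(A\times B)$; this is exactly what Step 3 of the proof of Theorem~\ref{teo_principal} asserts.

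So a term like $\log v_{(\mu,\nu)}(y)$ is uniformly approximated by $\sum_k a_k(\mu)b_k(\nu,y)$. The $\mu$-dependence is carried by query components $G_k(\mu,x)=a_k(\mu)$, constant in $x$, which the in-context universality of \cite{furuya2024transformers} realizes. No encoder needs to see both texts.

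The paper applies this directly to the full log-density $c_{(\mu,\nu)}=\log\frac{d\tilde{\mathcal T}(\mu,\nu)}{d(\mu\otimes\nu)}=\log u+\alpha^*+\log v$. That choice has a further advantage: the product $u\,v$ is uniquely determined, whereas $\log v$ alone is defined only up to the gauge $(u,v)\mapsto(\lambda u,v/\lambda)$. You would have to fix a $(\mu,\nu)$-dependent normalization before the joint continuity you claim in Step 3 even makes sense. Adding the $x$-only term $\log u$ is harmless for softmax, as you yourself note.

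Relatedly, Step 3 switches from the affinity $\alpha^*$ of Step 1 to the cost $c$ of Theorem~\ref{teo_aprox_uniforme}. Either works, but $v$ must be the potential of the same cost whose Sinkhorn plan you compare against $\tilde F$.
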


\begin{proof}
Let $F \colon \mathcal{P}(X) \times \mathcal{P}(Y) \to \mathcal{K}(Y|X)$ be a conditional system. We define the induced joint mapping $\tilde{F}$ as:
\[
\tilde{F}(\mu,\nu) = \mu \otimes F(\nu|\mu)
\]
By the definition of a conditional system, $\tilde{F}$ is a valid coupling system, then we can invoke Theorem E.1 (Universal Approximation of Coupling Systems). Therefore, for any $p > 0$ and any $\varepsilon > 0$, there exists a Coupling Transformer block $\mathcal{\tilde T}$ such that:
\[
\sup_{(\mu,\nu) \in \mathcal{P}(X) \times \mathcal{P}(Y)} W_p\bigl(\mathcal{\tilde T}(\mu,\nu), \tilde{F}(\mu,\nu)\bigr) < \frac{\varepsilon}{2}
\]

Let $c$ be the log-density ratio, defined as:
\[
c_{(\mu,\nu)}(x,y) = \log \frac{d\mathcal{\tilde T}(\mu,\nu)}{d(\mu \otimes \nu)}(x,y)
\]
Note that it is well defined since all densities are given by Sinkhorn operators and are uniformly lower-bounded. Also, $c$ is continuous since $\mathcal{\tilde T}$ is continuous and each density is continuous. Then similarly as in Theorem E.1, by \cite{furuya2024transformers} and the Stone-Weierstrass theorem, there exist two transformer-encoders $Q,K$ and an attention function $\alpha_{(\mu,\nu)}(x,y)=\langle Q(\mu,x),K(\nu,y)\rangle$ such that:
\[
\|c-\alpha\|_{\infty} < \delta
\]
Now we define the conditional transformer block:
\[
\mathcal{T}^*(\nu|\mu)(dy|x) = \mathrm{Softmax}_{\nu}\!\big(\alpha_{(\mu,\nu)}(x,\cdot)\big)(dy|x)
\]
Also, note that because of Proposition B.2 we know the true conditional derived from the coupling transformer is:
\[
\mathcal{\tilde T}(\mu,\nu)(dy|x) = \mathrm{Softmax}_{\nu}\!\big(c_{(\mu,\nu)}(x,\cdot)\big)(dy|x)
\]

We recall that the continuous Softmax operator $\Phi_{\nu}(f) = \mathrm{Softmax}_{\nu}(f)$ mapping a function to a probability density is Lipschitz continuous with respect to the $L^\infty$ norm. By choosing $\delta$ sufficiently small, we can strictly bound the difference in their conditional densities:
\[
\|\mathcal{T}^*(\nu|\mu)(\cdot|x) -\mathcal{\tilde T}(\mu,\nu)(\cdot|x)\|_{\infty} \leq L \|\alpha_{(\mu,\nu)}(x,\cdot) - c_{(\mu,\nu)}(x,\cdot)\|_{\infty} < L\delta
\]
Since the space is compact, convergence in the supremum norm uniformly bounds the Total Variation distance, which in turn bounds the Wasserstein distance up to a metric constant $C_{W}$. That is, there exists $C_W > 0$ such that:
\[
    W_p(\mu \otimes \mathcal{T}^*(\nu|\mu),\mathcal{\tilde T}(\mu,\nu)) \leq C_W \|\mathcal{T}^*(\nu|\mu) -\mathcal{\tilde T}(\mu,\nu)\|_{\infty} < C_W L \delta
\]
By deliberately choosing $\delta < \frac{\varepsilon}{2 C_W L}$, we guarantee that:
\[
    W_p(\mu \otimes \mathcal{T}^*(\nu|\mu),\mathcal{\tilde T}(\mu,\nu)) < \frac{\varepsilon}{2}
\]
Then, by the triangle inequality of the Wasserstein metric:
\begin{align*}
W_p(\mu \otimes \mathcal{T}^*(\nu|\mu),\mu \otimes F(\nu|\mu)) &\leq W_p(\mu \otimes \mathcal{T}^*(\nu|\mu),\mathcal{\tilde T}(\mu,\nu)) + W_p(\mathcal{\tilde T}(\mu,\nu), \tilde F(\mu,\nu)) \\
&< \frac{\varepsilon}{2} + \frac{\varepsilon}{2} = \varepsilon
\end{align*}
concluding the proof.
\end{proof}

\begin{corollary}[Pointwise approximation of conditional systems]\label{coro-cond}
Let $F$ be a conditional system. Then there exists a sequence of Contextual Transformer blocks $(\mathcal T_n)_{n\geq 1}$ such that for every pair of texts $(t_1,t_2)$ and every $\varepsilon > 0$, for sufficiently large $n$,
\[
\max_{i,j} \left| F_{i,j}(t_2 \mid t_1) - \mathcal T_{n,i,j}(t_2 \mid t_1) \right| < \varepsilon,
\]
where $F_{i,j}(t_2 \mid t_1)$ and $\mathcal T_{n,i,j}(t_2 \mid t_1)$ denote the entries of the corresponding conditional matrices.
\end{corollary}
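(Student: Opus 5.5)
We will derive Corollary~\ref{coro-cond} from Corollary~\ref{uni-cond}, more precisely from the construction in its proof, which controls conditional densities in sup norm rather than only in $W_p$. For each $n$ apply that construction with tolerance $\varepsilon_n = 1/n$. This yields encoders $Q_n,K_n$, the affinity $\alpha^n_{(\mu,\nu)}(x,y)=\langle Q_n(\mu,x),K_n(\nu,y)\rangle$, and the Conditional Transformer Block $\mathcal T_n(\nu|\mu)=\mathrm{Softmax}_\nu(\alpha^n_{(\mu,\nu)})$. The entries of the conditional matrix for texts $t_1=(x_1,\dots,x_N)$ and $t_2=(y_1,\dots,y_M)$ are the weights $\mathcal T_{n,i,j}(t_2|t_1)=\mathcal T_n(\nu|\mu)(\{y_j\}|x_i)$ with $\mu,\nu$ the empirical measures. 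The entries $F_{i,j}(t_2|t_1)=F(\nu|\mu)(\{y_j\}|x_i)$ are defined analogously. Since both kernels are supported on the finitely many atoms of $\nu$, the task reduces to showing that the atom masses converge uniformly in $i,j$ for fixed $(t_1,t_2)$.

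\textbf{Step 1: conditional densities.} The proof of Corollary~\ref{uni-cond} produces a Coupling Transformer $\tilde{\mathcal T}_n$ with $W_p(\tilde{\mathcal T}_n(\mu,\nu),\tilde F(\mu,\nu))<\varepsilon_n/2$ uniformly. It also gives, via the Lipschitz property of $\mathrm{Softmax}_\nu$ in $L^\infty$, the bound $\|\mathcal T_n(\nu|\mu)(\cdot|x)-\tilde{\mathcal T}_n(\mu,\nu)(\cdot|x)\|_\infty<L\delta_n$. For discrete $\nu=\frac1M\sum_j\delta_{y_j}$ the density of an atom weight is $M$ times the mass, so for every $i,j$
\[
\bigl|\mathcal T_{n,i,j}-\tilde{\mathcal T}_{n,i,j}\bigr|\le \tfrac{L\delta_n}{M}\to 0 .
\]

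\textbf{Step 2: from $W_p$ to atomwise convergence.} We need $\tilde{\mathcal T}_n(\mu,\nu)(\{(x_i,y_j)\})\to \tilde F(\mu,\nu)(\{(x_i,y_j)\})$. Both are couplings of the same two finitely supported measures, so both are supported on the finite set $S=\operatorname{supp}\mu\times\operatorname{supp}\nu$. On a finite metric space $W_p$ convergence is equivalent to convergence of point masses. Concretely, take a continuous bump $\varphi$ equal to $1$ at $(x_i,y_j)$ and vanishing at the other points of $S$. Weak convergence then gives convergence of the mass at that point, and the uniform-in-$(\mu,\nu)$ $W_p$ bound with fixed $(\mu,\nu)$ suffices. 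Dividing by $\mu(\{x_i\})\ge 1/N$ turns joint masses into conditional entries. Repeated tokens are handled by grouping atoms and splitting the mass equally, which is consistent with how both sides define the matrix.

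\textbf{Step 3: conclude.} By the triangle inequality, $\max_{i,j}|F_{i,j}-\mathcal T_{n,i,j}|\to0$ for each fixed pair $(t_1,t_2)$. Because the index set is finite, this gives the stated convergence; the choice of matrix norm is irrelevant. The main obstacle is Step 1. One must make sure that the sup-norm density bound in the proof of Corollary~\ref{uni-cond} genuinely holds for the discrete $\nu$, that is, that the log-density $c$ of $\tilde{\mathcal T}_n$ is well defined and bounded on $S$. This should follow from the uniform lower bound $\eta$ on Sinkhorn densities from Lemma~\ref{lema_regularizacion}. If that bound fails, we would instead obtain Step 1 from Proposition~\ref{sink-softma} directly together with continuity of softmax on the finite set $S$.
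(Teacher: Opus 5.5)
Your proof is correct, but it is organized differently from the paper's. The paper uses only the \emph{statement} of Corollary~\ref{uni-cond}. It takes $W_p(\mu\otimes\mathcal T^*(\nu|\mu),\mu\otimes\tilde F(\nu|\mu))$ small, notes that on the finite support of two texts $W_p$ is equivalent to the entrywise sup norm of the joint matrices, and divides by $\mu_{\min}$. Its preliminary regularization $\tilde F=(1-\rho)F+\rho\nu$ plays no essential role, since $\mu_i\ge 1/N$ holds automatically, as you note. You instead reopen the proof of Corollary~\ref{uni-cond} and insert the Coupling Transformer $\tilde{\mathcal T}_n$ as an intermediate. You then control the gap between the softmax conditional and the Sinkhorn conditional through that proof's internal sup-norm density estimate.

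That detour is avoidable. Your Step~2 applies verbatim to $\mu\otimes\mathcal T_n(\nu|\mu)$ versus $\mu\otimes F(\nu|\mu)$. Because $\mathrm{Softmax}_\nu(\alpha)(\cdot|x)\ll\nu$, this joint measure is also supported on $S=\operatorname{supp}\mu\times\operatorname{supp}\nu$, even though its second marginal need not be $\nu$. So the ``main obstacle'' you flag disappears.

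If you keep Step~1, two corrections are needed. First, positivity of the density of $\tilde{\mathcal T}_n$ comes from boundedness of its cost $\langle Q^*,K^*\rangle$ on the compact domain. It does not come from the $\eta$ of Lemma~\ref{lema_regularizacion}, which concerns an intermediate approximant. Second, the $L^\infty$-Lipschitz constant of $\mathrm{Softmax}_\nu$ depends on a sup bound for the density, hence on $n$. You therefore need $L_n\delta_n\to0$, which is fine because $\delta_n$ is chosen after $\tilde{\mathcal T}_n$.

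Your handling of quantifiers is cleaner than the paper's. You fix one text-independent sequence with uniform tolerance $1/n$ and then get convergence for each fixed $(t_1,t_2)$. The paper's $\delta$ and $\gamma$ depend on the texts, and its closing sentence implicitly needs exactly your reading.
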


\begin{proof}
Let $F$ be a conditional system. Fix $\varepsilon>0$.

Define a regularized version
\[
\tilde F(dy|x) = (1-\rho)F(dy|x) + \rho \nu(dy),
\]
with $\rho>0$ to be chosen. Since $F$ and $\nu$ are probability distributions on a finite set,
\[
\|F - \tilde F\|_\infty \le \rho.
\]
Taking $\rho < \varepsilon/3$ guarantees
\[
\max_{i,j}|F_{i,j} - \tilde F_{i,j}| < \frac{\varepsilon}{3}.
\]
Moreover, $\tilde F_{i,j} \ge \rho.$

Now by Corollary \ref{uni-cond}, there exists a conditional transformer block $\mathcal T^*$ such that:
\[
W_p(\mu \otimes \mathcal{T}^*(\nu|\mu),\mu \otimes \tilde F(\nu|\mu)) < \delta 
\]
Let us denote the joint matrices as $\Gamma_{i,j}^* = \mu_i \mathcal{T}^*_{i,j}(t_2|t_1)$ and $\tilde{\Gamma}_{i,j} = \mu_i \tilde{F}_{i,j}(t_2|t_1)$. Restricting ourselves to the empirical measure case for texts $t_1, t_2$ (which have finite support), the Wasserstein metric is topologically equivalent to the supremum norm. Thus, we can choose $\delta$ sufficiently small such that the joint matrix difference is bounded by an arbitrary $\gamma > 0$:
\[
\|\tilde{\Gamma}_{i,j} - \Gamma_{i,j}^*\|_{\infty} < \gamma
\]
By definition of the joint measure, the conditional entries are obtained by dividing by the marginal $\mu_i$. Notice that for any token $x_i$ actually present in the text $t_1$, its empirical probability is strictly positive, because of the regularization.

This allows us to bound the difference between the conditional matrices explicitly:
\begin{align*}
|\tilde F_{i,j}(t_2|t_1) - \mathcal T^*_{i,j}(t_2|t_1)| &= \left| \frac{\tilde{\Gamma}_{i,j}}{\mu_i} - \frac{\Gamma_{i,j}^*}{\mu_i} \right| \\
&= \frac{1}{\mu_i} |\tilde{\Gamma}_{i,j} - \Gamma_{i,j}^*| \\
&\leq \frac{1}{\mu_{\min}} \|\tilde{\Gamma} - \Gamma^*\|_{\infty} < \frac{\gamma}{\mu_{\min}}
\end{align*}

By choosing $\gamma < \rho \frac{2\varepsilon}{3}$, we obtain:
\[
\|\tilde F_{i,j}(t_2|t_1) - \mathcal T^*_{i,j}(t_2|t_1) \|_{\infty} < \frac{2\varepsilon}{3}
\]
Finally, applying the triangle inequality alongside our regularization bound:
\[
\|F_{i,j} - \mathcal T^*_{i,j}\|_{\infty} \leq \|F_{i,j} - \tilde F_{i,j}\|_{\infty} + \|\tilde F_{i,j} - \mathcal T^*_{i,j}\|_{\infty} < \frac{\varepsilon}{3} + \frac{2\varepsilon}{3} = \varepsilon
\]
Taking a sequence of transformers $\mathcal T_n$ constructed for decreasing values of $\varepsilon = 1/n$ yields the desired pointwise approximation.
\end{proof}
\end{document}